\documentclass{article}

\usepackage[utf8]{inputenc} 
\usepackage[T1]{fontenc}    
\usepackage[hidelinks]{hyperref}       
\usepackage{url}            
\usepackage{booktabs}       
\usepackage{multirow}
\usepackage{amsfonts}       
\usepackage{microtype}      
\usepackage{xcolor}         
\usepackage{multirow}

\usepackage[numbers]{natbib}
\bibliographystyle{unsrtnat}
\usepackage[final]{changes}

\usepackage{amsmath}
\usepackage{amssymb}
\usepackage{amsthm}
\usepackage{mathtools}
\usepackage{enumitem}
\usepackage{algorithm}
\usepackage{algcompatible}
\usepackage{xcolor}
\usepackage{caption}
\usepackage{subcaption}
\usepackage{bbm}
\usepackage{authblk}
\usepackage{newtxtext,newtxmath}
\DeclareMathAlphabet{\mathbb}{U}{msb}{m}{n}

\usepackage{physics}
\usepackage{braket}
\usepackage{comment}
\usepackage{interval}
\intervalconfig{
soft open fences,
}

\newtheorem{theorem}{Theorem}
\numberwithin{theorem}{section}
\newtheorem{lemma}[theorem]{Lemma}
\newtheorem{proposition}[theorem]{Proposition}
\newtheorem{remark}[theorem]{Remark}
\newtheorem{corollary}[theorem]{Corollary}

\newtheorem{definition}[theorem]{Definition}

\newtheorem{assumption}{Assumption}

\newcommand{\eu}{\mathrm{e}}
\newcommand{\N}{\mathbb{N}}
\newcommand{\R}{\mathbb{R}}

\newcommand{\tildeO}{\smash{\tilde{O}}}

\DeclareMathOperator*{\ri}{ri}
\DeclareMathOperator*{\dom}{dom}
\DeclareMathOperator*{\argmin}{arg\,min}

\DeclareMathOperator{\cl}{cl}

\title{Data-Dependent Bounds for
Online Portfolio Selection
Without Lipschitzness and Smoothness}

\author[1]{Chung-En Tsai}
\author[1]{Ying-Ting Lin}
\author[1,2,3]{Yen-Huan Li}
\affil[1]{Department of Computer Science and Information Engineering, National Taiwan University}
\affil[2]{Department of Mathematics, National Taiwan University}
\affil[3]{Center for Quantum Science and Engineering,\protect\\National Taiwan University}

\date{}

\begin{document}

\maketitle

\begin{abstract}
This work introduces the first small-loss and gradual-variation regret bounds for online portfolio selection, marking the first instances of data-dependent bounds for online convex optimization with non-Lipschitz, non-smooth losses. 
The algorithms we propose exhibit sublinear regret rates in the worst cases and achieve logarithmic regrets when the data is ``easy,'' with per-round time almost linear in the number of investment alternatives. 
The regret bounds are derived using novel smoothness characterizations of the logarithmic loss, a local norm-based analysis of following the regularized leader (FTRL) with self-concordant regularizers, which are not necessarily barriers, and an implicit variant of optimistic FTRL with the log-barrier. 
\end{abstract}
\section{Introduction}

Designing an optimal algorithm for online portfolio selection (OPS), with respect to both regret and computational efficiency, has remained a significant open problem in online convex optimization for over three decades\footnote{Readers are referred to recent papers, such as those by \citet{Luo2018}, \citet{van-Erven2020}, \citet{Mhammedi2022}, \citet{Zimmert2022}, and \citet{Jezequel2022}, for reviews on OPS.}. 
OPS models long-term investment as a multi-round game between two strategic players---the market and the \textsc{Investor}---thereby avoiding the need for hard-to-verify probabilistic models for the market. 
In addition to its implications for robust long-term investment, OPS is also a generalization of probability forecasting and universal data compression \citep{Cesa-Bianchi2006}.

The primary challenge in OPS stems from the absence of Lipschitzness and smoothness in the loss functions. 
Consequently, standard online convex optimization algorithms do not directly apply. 
For example, standard analyses of online mirror descent (OMD) and following the regularized leader (FTRL) bound the regret by the sum of the norms of the gradients (see, e.g., the lecture notes by \citet{Orabona2022} and \citet{Hazan2022}). 
In OPS, the loss functions are not Lipschitz, so these analyses do not yield a sub-linear regret rate. 
Without Lipschitzness, the self-bounding property of smooth functions enables the derivation of a ``small-loss'' regret bound for smooth loss functions \citep{Srebro2010}. 
However, the loss functions in OPS are not smooth either.

The optimal regret rate of OPS is known to be $O ( d \log T )$, where $d$ and $T$ denote the number of investment alternatives and number of rounds, respectively. 
This optimal rate is achieved by Universal Portfolio \citep{Cover1991,Cover1996}. 
Nevertheless, the current best implementation of Universal Portfolio requires $O ( d^{4} T^{14} )$ per-round time \citep{Kalai2000}, too long for the algorithm to be practical. 
Subsequent OPS algorithms can be classified into two categories.

\begin{itemize}
\item Algorithms in the first category exhibit near-optimal $\tildeO (d)$ per-round time and moderate $\smash{ \tildeO ( \sqrt{ d T } ) }$ regret rates. 
This category includes the barrier subgradient method \citep{Nesterov2011}, Soft-Bayes \citep{Orseau2017}, and LB-OMD \citep{Tsai2023}. 
\item Algorithms in the second category exhibit much faster $\tildeO ( \mathrm{poly} (d) \, \mathrm{polylog} ( T ) )$ regret rates with much longer $\tildeO ( \mathrm{poly} (d) \, \mathrm{poly} ( T ) )$ per-round time, which is, however, significantly shorter than the per-round time of Universal Portfolio.
This category includes ADA-BARRONS \citep{Luo2018}, PAE+DONS \citep{Mhammedi2022}, BISONS \citep{Zimmert2022}, and VB-FTRL \citep{Jezequel2022}. 
\end{itemize}

The aforementioned results are worst-case and do not reflect how ``easy'' the data is. 
For instance, if the price relatives of the investment alternatives remain constant over rounds, then a small regret is expected. 
\emph{Data-dependent regret bounds} refer to regret bounds that maintain acceptable rates in the worst case and much better rates when the data is easy. 
In this work, we consider three types of data-dependent bounds. 
\begin{itemize}
\item A \emph{small-loss bound} bounds the regret by the cumulative loss of the best action in hindsight.
\item A \emph{gradual-variation bound} bounds the regret by the gradual variation \eqref{eq_gradual_variation} of certain characteristics of the loss functions, such as the gradients and price relatives.
\item A \emph{second-order bound} bounds the regret by the variance or other second-order statistics of certain characteristics of the loss functions, such as the gradients and price relatives.
\end{itemize}

Few studies have explored data-dependent bounds for OPS. 
These studies rely on the so-called no-junk-bonds assumption \citep{Agarwal2006}, requiring that the price relatives of all investment alternatives are bounded from below by a \added{positive} constant across all rounds. 
Given this assumption, it is easily verified that the losses in OPS become Lipschitz and smooth. 
Consequently, the result of \citet{Orabona2012} implies a small-loss regret bound; 
\citet{Chiang2012} established a gradual-variation bound in the price relatives; 
and \citet{Hazan2015} proved a second-order bound also in the price relatives. 
These bounds are logarithmic in the number of rounds $T$ in the worst cases and can be constant when the data is easy. 

The no-junk-bonds assumption may not always hold. 
\citet{Hazan2015} raised the question of whether it is possible to eliminate this assumption. 
In this work, we take the initial step towards addressing the question.
Specifically, we prove Theorem~\ref{thm_main}.

\begin{theorem}\label{thm_main}
In the absence of the no-junk-bonds assumption, two algorithms exist that possess a gradual-variation bound and a small-loss bound, respectively.
Both algorithms attain $\smash{ O(d \, \mathrm{polylog}(T))}$ regret rates in the best cases and $\smash{\tildeO(\sqrt{dT})}$ regret in the worst cases, with $\tildeO(d)$ per-round time.
\end{theorem}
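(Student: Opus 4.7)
The plan is to establish the theorem by constructing two distinct algorithms, one for each of the two data-dependent bounds, both built on FTRL-style updates over the simplex with self-concordant regularizers and analyzed via local norms induced by the regularizer's Hessian. The central technical device is a substitute for standard smoothness: although the log loss $\ell_t(x) = -\log\langle r_t, x\rangle$ is neither Lipschitz nor smooth in any global norm, its gradient $\nabla \ell_t(x) = -r_t/\langle r_t, x\rangle$ should admit sharp control in the dual local norm $\|\cdot\|_{(\nabla^2 \phi(x))^{-1}}$ induced by a well-chosen self-concordant $\phi$. This local-norm smoothness characterization is meant to play the role that Euclidean smoothness plays in classical data-dependent analyses, and all subsequent steps are different ways of exploiting it.

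For the small-loss algorithm, I would run FTRL with a self-concordant regularizer that is \emph{not} a barrier, so that iterates are not forced to remain strictly inside the simplex and the usual boundary blow-ups are avoided. The first step is to prove a local-norm self-bounding inequality, namely that $\|\nabla \ell_t(x)\|^2$ measured in the dual local norm is controlled by an affine function of $\ell_t(x)$---the direct analogue of $\|\nabla \ell\|^2 \le 2L\ell$ for non-negative smooth losses. The second step is an FTRL stability lemma expressing the per-round regret in terms of the squared dual local norm of the gradient; because $\phi$ is self-concordant, this stability estimate does not need $\ell_t$ to be Lipschitz. Combining the two steps telescopes into an inequality of the form $\mathrm{Regret}_T \le \alpha\sum_t \ell_t(x^\star) + \beta\, d\, \polylog(T)$, from which the easy-case $O(d\,\polylog(T))$ rate and the worst-case $\tildeO(\sqrt{dT})$ rate follow by AM--GM together with the crude bound $\sum_t \ell_t(x^\star) = O(T \log T)$ after appropriate boundary truncation.

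For the gradual-variation algorithm, I would use optimistic FTRL with the log-barrier $\phi(x) = -\sum_i \log x_i$, but in an \emph{implicit} form: rather than linearizing the predicted loss at the optimistic guess, the update solves an FTRL subproblem in which the entire predicted log loss appears unlinearized. Going implicit is essential precisely because of the failure of smoothness---an explicit optimistic step would pay a stability error scaling with boundary-dependent factors that cannot be absorbed into a gradual-variation quantity. In the implicit scheme, the KKT conditions of the per-round subproblem reduce to a one-dimensional root-finding problem for a Lagrange multiplier, solvable by Newton's method or bisection to high accuracy in $\tildeO(d)$ time, matching the required per-round budget. The regret analysis then telescopes Bregman divergences in the local log-barrier norm, using self-concordance to exchange primal and dual norms, and produces a bound proportional to a suitable local-norm gradual-variation quantity $\sum_t \|r_t - \tilde r_t\|^2$ built from the price relatives.

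The main obstacle, in both constructions, will be engineering the local-norm smoothness characterization so that it is simultaneously tight enough in the easy regime to recover the $O(d\,\polylog(T))$ rate, robust enough to survive iterates approaching the boundary of the simplex (since the no-junk-bonds assumption is dropped), and compatible with a regularizer whose updates can be computed in $\tildeO(d)$ time. These three requirements pull in different directions and together dictate the choice of a non-barrier self-concordant regularizer in the small-loss algorithm and the choice of implicit rather than explicit optimism in the gradual-variation algorithm; once the local-norm smoothness lemma is in place, the remaining calculations are careful but essentially standard Bregman telescoping.
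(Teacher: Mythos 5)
Your high-level strategy closely tracks the paper's: local-norm analogues of smoothness derived from self-concordance, optimistic FTRL over the simplex with the log-barrier analyzed in local norms, and an implicit per-round subproblem reducible to one-dimensional root-finding in $\tildeO(d)$ time. But there are two concrete gaps that would cause your plan to fail as stated.

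\textbf{The unshifted self-bounding inequality is false.} You propose to prove that $\|\nabla \ell_t(x)\|_{x,\ast}^2$ is bounded by an affine function of $\ell_t(x)$ and to deduce a regret bound of the form $\alpha\sum_t \ell_t(x^\star) + \beta\, d\,\polylog(T)$. For the deduction to have any small-loss content, the affine function must have zero intercept, i.e.\ $\|\nabla \ell_t(x)\|_{x,\ast}^2 \le C\,\ell_t(x)$. This fails: take $a_t = e$ and $x$ uniform, so $\ell_t(x) = 0$ while $\|\nabla \ell_t(x)\|_{x,\ast}^2 = \|x\|_2^2 = 1/d > 0$. The paper repairs this by first projecting out the component of the gradient along $e$ in the dual local metric, defining $\alpha_x(v) = -\sum_i x(i)^2 v(i)/\sum_i x(i)^2$ and proving $\|\nabla f_t(x) + \alpha_x(\nabla f_t(x)) e\|_{x,\ast}^2 \le 4f_t(x)$ (Lemma~\ref{lem_smooth_small_loss}); the shift costs nothing in the regret analysis because $\langle e, x_t - x_{t+1}\rangle = 0$ on the simplex. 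Without this shift, the small-loss plan does not get off the ground.

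\textbf{No fixed learning rate works; adaptivity is essential.} You invoke AM--GM with the crude bound $L_T^\star = O(T\log d)$ to convert your intermediate inequality into the claimed rates, but this implicitly requires a learning rate tuned to $L_T^\star$, which is unknown in advance. With a fixed $\eta$, the bound $\frac{d\log T}{\eta} + \eta\cdot O(L_T^\star)$ cannot simultaneously be $O(d\,\polylog T)$ when $L_T^\star = 0$ and $\tildeO(\sqrt{dT})$ when $L_T^\star = \Theta(T\log d)$: tuning for the worst case ($\eta \sim \sqrt{d/T}$) forces the $d\log T/\eta$ term to $\Theta(\sqrt{dT}\log T)$ even in the best case. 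The paper's Algorithm~\ref{alg_ada_lb_ftrl} uses learning rates $\eta_t \propto \sqrt{d}/\sqrt{4d+1+\sum_{\tau\le t}\|g_\tau + \alpha_\tau e\|_{x_\tau,\ast}^2}$ that adapt on the fly to the cumulative shifted-gradient norms, which is what yields $R_T \le O(\sqrt{dL_T^\star}\log T + d\log^2 T)$ via a self-bounding recursion. Your outline needs this ingredient (or a doubling scheme) spelled out.

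Two smaller mismatches worth noting. First, both of the paper's algorithms use the same log-barrier regularizer; it is not that the small-loss algorithm uses a different ``non-barrier'' regularizer, but rather that the log-barrier, while self-concordant, is not a barrier of $\Delta$ (it is a barrier of $\R^d_{++}$), and the paper's Theorem~\ref{thm_opt_ftrl} is new precisely because it tolerates this. Second, your notion of ``implicit'' (leaving the predicted log loss unlinearized in the subproblem) differs from the paper's: the paper keeps the predicted loss linear, $\langle\hat g_{t+1},x\rangle$, and the implicitness is in jointly solving for $(x_{t+1},\hat g_{t+1})$ subject to $x_{t+1}\odot\hat g_{t+1} = p_{t+1}$, which is forced because the condition $\eta_{t-1}\|g_t-\hat g_t\|_{x_t,\ast}\le 1/2$ needed for the main regret theorem is stated in a norm that depends on $x_t$, which in turn depends on $\hat g_t$. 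Your unlinearized variant is a genuinely different construction and would not be covered by the paper's Theorem~\ref{thm_opt_ftrl} without additional work.
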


Theorem~\ref{thm_main} represents the first data-dependent bounds for OPS that do not require the no-junk-bonds assumption. 
To the best of our knowledge, this also marks the first data-dependent bounds for online convex optimization with non-Lipschitz non-smooth losses. 
In the worst cases, Theorem~\ref{thm_main} ensures that both algorithms can compete with the OPS algorithms of the first category mentioned above.
\added{%
In the best cases, both algorithms achieve a near-optimal regret with a near-optimal per-round time, surpassing the OPS algorithms of the second category in terms of the computational efficiency.
Table~\ref{tab:comparison} in Appendix~\ref{app:comparison} presents a detailed summary of existing OPS algorithms in terms of the worst-case regrets, best-case regrets, and per-round time.
}

We also derived a second-order regret bound by aggregating a variant of optimistic FTRL with different learning rates. 
The interpretation of the result is not immediately clear. 
Therefore, we detail the result in Appendix~\ref{app_second_order}. 

\paragraph{Technical Contributions.}

The proof of Theorem~\ref{thm_main} relies on several key technical breakthroughs: 

\begin{itemize}
\item Theorem \ref{thm_opt_ftrl} provides a general regret bound for optimistic FTRL with self-concordant regularizers, which may not be barriers, and time-varying learning rates. 
The bound generalizes those of \citet{Rakhlin2013} and \citet[Appendix H]{Zimmert2022} and is of independent interest. 

\item Existing results on small-loss and gradual-variation bounds assume the loss functions are smooth, an assumption that does not hold in OPS. 
In Section \ref{sec_ops}, we present Lemma \ref{lem_bounded_local_norm} and Lemma \ref{lem_smooth_small_loss}, which serve as local-norm counterparts to the Lipschitz gradient condition and the self-bounding property of convex smooth functions, respectively.

\item To apply Theorem \ref{thm_opt_ftrl}, the gradient estimates and iterates in optimistic FTRL must be computed concurrently.
Consequently, we introduce Algorithm~\ref{alg_opt_lb_ftrl}, a variant of optimistic FTRL with the log-barrier and validate its definition and time complexity.

\item The gradual-variation and small-loss bounds in Theorem~\ref{thm_main} are achieved by two novel algorithms, Algorithm~\ref{alg_last_grad_opt_lb_ftrl} and Algorithm~\ref{alg_ada_lb_ftrl}, respectively. 
Both are instances of Algorithm~\ref{alg_opt_lb_ftrl}. 
\end{itemize}

\paragraph{Notations.}
For any natural number $N$, we denote the set $\{ 1, \ldots, N \}$ by $[ N ]$.
The sets of non-negative and strictly positive numbers are denoted by $\R_+$ and $\R_{++}$, respectively.
\added{The $i$-th entry of a vector $v\in\R^d$ is denoted by $v(i)$.}
The probability simplex in $\R^d$, the set of entry-wise non-negative vectors of unit $\ell_1$-norm, is denoted by $\Delta_d$. 
We often omit the subscript for convenience. 
The closure and relative interior of a set $\mathcal{X}$ is denoted by $\cl \mathcal{X}$ and $\ri \mathcal{X}$, respectively. 
The $\ell_p$-norm is denoted by $\smash{ \norm{ \cdot }_p }$. 
The all-ones vector is denoted by $e$. 
For any two vectors $u$ and $v$ in $\smash{\R^d}$, their entry-wise product and division are denoted by $u \odot v$ and $u \oslash v$, respectively. 
For time-indexed vectors $a_1, \ldots, a_t \in \R^d$, we denote the sum $a_1 + \cdots + a_t$ by $a_{1: t}$. 

\section{Related Works} \label{sec_literature}
\subsection{Log-Barrier for Online Portfolio Selection}

All algorithms we propose are instances of optimistic FTRL with the log-barrier regularizer. 
The first use of the log-barrier in OPS can be traced back to the barrier subgradient method proposed by \citet{Nesterov2011}. 
Later, \citet{Luo2018} employed a hybrid regularizer, which incorporated the log-barrier, in the development of ADA-BARRONS. 
This marked the first OPS algorithm with a regret rate polylogarithmic in $T$ and an acceptable per-round time complexity of $O ( d^{2.5} T )$.
Van Erven et al. \citep{van-Erven2020} conjectured that FTRL with the log-barrier (LB-FTRL) achieves the optimal regret. 
The conjecture was recently refuted by \citet{Zimmert2022}, who established a regret lower bound for LB-FTRL. 
\citet{Jezequel2022} combined the log-barrier and volumetric barrier to develop VB-FTRL, the first algorithm with near-optimal regret and an acceptable per-round time complexity of $O ( d^2 T )$.
These regret bounds are worst-case and do not directly imply our results.

\subsection{FTRL with Self-Concordant Regularizer} \label{sec_self_concordant_regularizer}

\citet{Abernethy2012} showed that when the regularizer is chosen as a self-concordant barrier of the constraint set, the regret of FTRL is bounded by the sum of dual local norms of the gradients. 
\citet{Rakhlin2013} generalized this result for optimistic FTRL. 

The requirement for the regularizer to be a barrier is restrictive. 
For instance, while the log-barrier is self-concordant, it is not a barrier of the probability simplex. 
To address this issue, \citet{van-Erven2020}, \citet{Mhammedi2022}, and \citet{Jezequel2022} introduced an affine transformation such that, after the transformation, the log-barrier becomes a self-concordant barrier of the constraint set. 
Nonetheless, this reparametrization complicates the proofs.

Theorem~\ref{thm_opt_ftrl} in this paper shows that optimistic FTRL with a self-concordant regularizer, \emph{without the barrier requirement}, still satisfies a regret bound similar to that by \citet{Rakhlin2013}. 
The proof of Theorem \ref{thm_opt_ftrl} aligns with the analyses by \citet{Mohri2016}, \citet{McMahan2017}, and \citet{Joulani2020} of FTRL with optimism and adaptivity, as well as the local-norm based analysis by \citet[Appendix H]{Zimmert2022}. 

In comparison, Theorem~\ref{thm_opt_ftrl} generalizes the analysis of \citet[Appendix H]{Zimmert2022} for optimistic algorithms and time-varying learning rates; 
Theorem~\ref{thm_opt_ftrl} differs from the analyses of \citet{Mohri2016}, \citet{McMahan2017}, and \citet{Joulani2020} in that they require the regularizer to be strongly convex, whereas the log-barrier is not. 

\subsection{Data-Dependent Bounds}
The following is a summary of relevant literature on the three types of data-dependent bounds. 
For a more comprehensive review, readers may refer to, e.g., the lecture notes of \citet{Orabona2022}.

\begin{itemize}
\item Small-loss bounds, also known as $L^\star$ bounds, were first derived by \citet{Cesa-Bianchi1996} for online gradient descent for quadratic losses. 
Exploiting the self-bounding property, \citet{Srebro2010} proved a small-loss bound for convex smooth losses.
\citet{Orabona2012} proved a logarithmic small-loss bound when the loss functions are not only smooth but also Lipschitz and exp-concave.

\item \citet{Chiang2012} derived the first gradual-variation bound, bounding the regret by the variation of the gradients over the rounds. 
Rakhlin and Sridharan \citep{Rakhlin2013,Rakhlin2013b} interpreted the algorithm proposed by \citet{Chiang2012} as optimistic online mirror descent and also proposed optimistic FTRL with self-concordant barrier regularizers. 
\citet{Joulani2017} established a gradual-variation bound for optimistic FTRL.

\item \citet{Cesa-Bianchi2007} initiated the study of second-order regret bounds.  
\citet{Hazan2010} derived a regret bound characterized by the empirical variance of loss vectors for online linear optimization.
In the presence of the no-junk-bonds assumption, \citet{Hazan2015} proved a regret bound for OPS characterized by the empirical variance of price relatives.
\end{itemize}

Except for those for specific loss functions, these data-dependent bounds assume either smoothness or Lipschitzness of the loss functions. 
Nevertheless, both assumptions are violated in OPS. 

Recently, \citet{Hu2023} established small-loss and gradual-variation bounds in the context of Riemannian online convex optimization.
We are unaware of any Riemannian structure on the probability simplex that would render the loss functions in OPS geodesically convex and geodesically smooth.
For instance, Appendix~\ref{app_not_gconvex} shows that the loss functions in OPS are not geodesically convex on the Hessian manifold induced by the log-barrier.
\section{Analysis of Optimistic FTRL with Self-Concordant Regularizers}\label{sec_opt_ftrl}

This section presents Theorem~\ref{thm_opt_ftrl}, a general regret bound for optimistic FTRL with regularizers that are self-concordant \emph{but not necessarily barriers}.  
This regret bound forms the basis for the analyses in the remainder of the paper and, as detailed in Section~\ref{sec_self_concordant_regularizer}, generalizes the results of \citet{Rakhlin2013} and \citet[Appendix H]{Zimmert2022}. 

Consider the following online linear optimization problem involving two players, \textsc{Learner} and \textsc{Reality}. 
Let $\mathcal{X} \subseteq \R^d$ be a closed convex set. 
At the $t$-th round, 
\begin{itemize}
\item first, \textsc{Learner} announces $x_t \in \mathcal{X}$; 
\item then, \textsc{Reality} announces a vector $v_t \in \R^d$; 
\item finally, \textsc{Learner} suffers a loss given by $\braket{ v_t, x_t }$. 
\end{itemize}

For any given time horizon $T \in \N$, the regret $R_T ( x )$ is defined as the difference between the cumulative loss of \textsc{Learner} and that yielded by the action $x \in \mathcal{X}$; that is, 
\[
R_T ( x ) \coloneqq \sum_{t = 1}^T \braket{ v_t, x_t } - \sum_{t = 1}^T \braket{ v_t, x } , \quad \forall x \in \mathcal{X} . 
\]
The objective of \textsc{Learner} is to achieve a small regret against all $x \in \mathcal{X}$. 
Algorithm~\ref{alg_opt_ftrl} provides a strategy for \textsc{Learner}, called optimistic FTRL. 

\begin{algorithm}[h]
\caption{Optimistic FTRL for Online Linear Optimization} 
\label{alg_opt_ftrl}
\hspace*{\algorithmicindent} \textbf{Input: } A sequence of learning rates $\{\eta_t\} \subset \R_{++}$.
\begin{algorithmic}[1]
\STATE $\hat{v}_1 \leftarrow 0$.
\STATE $x_1 \in \argmin_{x\in\mathcal{X}} \eta_0^{-1}\varphi(x)$.
\FORALL{$t \in \N$}
\STATE Announce $x_{t}$ and receive $v_t \in \R^d$.
\STATE Choose an estimate $\hat{v}_{t+1}$ for $v_{t + 1}$.
\STATE $\smash{ x_{t + 1} \leftarrow \argmin_{x\in\mathcal{X}} \braket{v_{1:t}, x } + \braket{\hat{v}_{t+1}, x} + \eta_t^{-1}\varphi(x) }$.
\ENDFOR
\end{algorithmic}
\end{algorithm}

We focus on the case where the regularizer $\varphi$ is a self-concordant function. 

\begin{definition}[Self-concordant functions] \label{defn_self_concordance}
A closed convex function $\varphi: \R^d \to \interval[open left]{-\infty}{\infty}$ with an open domain $\dom\varphi$ is said to be $M$-self-concordant if it is three-times continuously differentiable on $\dom\varphi$ and
\begin{equation*}
	\abs{D^3 \varphi(x)[u,u,u]} \leq 2M\braket{u, \nabla^2\varphi(x) u}^{3/2}, \quad \forall x\in\dom\varphi,u\in\R^d.
\end{equation*}
\end{definition}

Suppose that $\nabla^2 \varphi$ is positive definite at a point $x$. 
The associated local and dual local norms are given by $\smash{ \norm{ v }_x \coloneqq \sqrt{ \braket{ v, \nabla^2 \varphi (x) v } }}$ and $\smash{ \norm{ v }_{x, \ast} \coloneqq \sqrt{ \braket{ v, \nabla^{-2} \varphi ( x ) v } } }$, respectively.
Define $\omega(t)\coloneqq t - \log(1+t)$.

\begin{theorem}\label{thm_opt_ftrl}
Let $\varphi$ be an $M$-self-concordant function such that $\mathcal{X}$ is contained in the closure of $\dom \varphi$ and $\min_{x\in\mathcal{X}}\varphi(x) = 0$.
Suppose that $\nabla^2\varphi(x)$ is positive definite for all $x \in \mathcal{X} \cap \dom \varphi$ and the sequence of learning rates $\{\eta_t\}$ is non-increasing.
Then, Algorithm~\ref{alg_opt_ftrl} satisfies
\begin{equation*}
	R_T(x)
	\leq \frac{\varphi(x)}{\eta_T}
	+ \sum_{t=1}^T \left( \braket{v_t - \hat{v}_t, x_t - x_{t+1}} - \frac{1}{\eta_{t-1} M^2}\omega(M\norm{x_{t} - x_{t+1}}_{x_t}) \right) . 
\end{equation*}
If in addition, $\eta_{t-1}\norm{v_t - \hat{v}_t}_{x_t,\ast}\leq 1/(2M)$ for all $t \in \N$, then Algorithm~\ref{alg_opt_ftrl} satisfies
\begin{equation*}
	R_T(x)
	\leq \frac{\varphi(x)}{\eta_T}
	+ \sum_{t=1}^T \eta_{t-1}\norm{v_t -\hat{v}_t}_{x_t,\ast}^2 .
\end{equation*}
\end{theorem}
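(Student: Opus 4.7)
The plan is to combine a standard Be-the-Leader (BTL) telescoping with the \emph{global} self-concordance lower bound on $\varphi$, which together eliminate the barrier requirement used in earlier analyses. Write $\Phi_t(x) \coloneqq \eta_{t-1}^{-1}\varphi(x) + \braket{v_{1:t-1} + \hat v_t, x}$, so that $x_t = \argmin_{\mathcal{X}} \Phi_t$. First I would apply BTL to the auxiliary sequence $\{x_{t+1}\}$ and telescope, exploiting that $\{\eta_t\}$ is non-increasing and $\min_{\mathcal{X}}\varphi = 0$ to dispose of the correction terms $(\eta_t^{-1} - \eta_{t-1}^{-1})\varphi(x_{t+1}) \geq 0$, so that the regularizer contribution collapses to $\varphi(x)/\eta_T$. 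Using first-order optimality of $x_t$ to re-express the linear differences, this yields
\begin{equation*}
R_T(x) \leq \frac{\varphi(x)}{\eta_T} + \sum_{t=1}^T \braket{v_t - \hat v_t, x_t - x_{t+1}} - \sum_{t=1}^T D_t,
\end{equation*}
with $D_t \coloneqq \eta_{t-1}^{-1}\bigl[\varphi(x_{t+1}) - \varphi(x_t) - \braket{\nabla\varphi(x_t), x_{t+1} - x_t}\bigr]$ the Bregman-divergence-like remainder of $\eta_{t-1}^{-1}\varphi$.

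The key step is to lower-bound $D_t$ by the one-sided self-concordance inequality
\begin{equation*}
\varphi(y) \geq \varphi(x) + \braket{\nabla\varphi(x), y - x} + \frac{1}{M^2}\omega(M\norm{y-x}_x), \quad \forall\, x, y \in \dom\varphi,
\end{equation*}
obtained by integrating the inequality in Definition~\ref{defn_self_concordance} twice along the segment $[x,y]\subset\dom\varphi$; crucially, this lower bound is global on $\dom\varphi$ and does \emph{not} require $y$ to lie in any Dikin ellipsoid around $x$. Applied at $(x,y)=(x_t,x_{t+1})$ this gives $D_t \geq (\eta_{t-1}M^2)^{-1}\omega(M\norm{x_t - x_{t+1}}_{x_t})$, which is the first bound. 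For the second, Cauchy--Schwarz in the local norm gives $\braket{v_t - \hat v_t, x_t - x_{t+1}} \leq \norm{v_t - \hat v_t}_{x_t,\ast}\norm{x_t - x_{t+1}}_{x_t}$; setting $r = M\norm{x_t - x_{t+1}}_{x_t}$ and $s = M\eta_{t-1}\norm{v_t - \hat v_t}_{x_t,\ast}$, the $t$-th summand is at most $(\eta_{t-1}/M^2)(sr - \omega(r))$. The Fenchel conjugate $\omega^{*}(s) = -s - \log(1-s)$ satisfies $\omega^{*}(s) \leq s^2$ for $s \leq 1/2$, and the hypothesis $\eta_{t-1}\norm{v_t - \hat v_t}_{x_t,\ast} \leq 1/(2M)$ is precisely $s\leq 1/2$, producing the claimed $\eta_{t-1}\norm{v_t - \hat v_t}_{x_t,\ast}^2$ bound.

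The hardest part, and the reason this generalizes \citet{Rakhlin2013}, is handling the absence of the barrier property: prior analyses use it both to force iterates strictly inside $\mathcal{X}$ and to confine movements within a unit Dikin ellipsoid. Two observations will let me do without. First, because $\varphi$ is closed self-concordant with open domain, it already diverges at $\partial\dom\varphi$ (a classical property of self-concordance, independent of barrier constants); combined with positive definiteness of $\nabla^2\varphi$ on $\mathcal{X}\cap\dom\varphi$ and $\mathcal{X}\subseteq\cl\dom\varphi$, this forces each argmin $x_t$ into $\mathcal{X}\cap\dom\varphi$, so that local norms and the self-concordance inequality are legitimate. Second, because the lower bound above is global in $\dom\varphi$, no Dikin-neighborhood constraint on $x_{t+1}$ around $x_t$ is required --- the cost is merely the slower-than-quadratic $\omega$ appearing in the remainder, which is exactly what shows up in the first inequality and is absorbed via $\omega^{*}(s) \leq s^2$ in the second. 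With these ingredients in place, the rest of the argument is a bookkeeping exercise along the lines of \citet{Joulani2020} and \citet[Appendix H]{Zimmert2022}.
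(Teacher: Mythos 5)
Your proposal is correct and takes essentially the same route as the paper's: a BTL/strong-FTRL telescoping identity that collapses the regularizer contribution to $\varphi(x)/\eta_T$ using $\eta_t\leq\eta_{t-1}$ and $\varphi\geq 0$, followed by the global self-concordance lower bound (Theorem~\ref{thm_self_concordant_bound}, i.e.\ Nesterov's 5.1.8, applied at $(x_t,x_{t+1})$) together with first-order optimality of $x_t$, and finally Cauchy--Schwarz plus the Fenchel--Young pair $(\omega,\omega_*)$ with $\omega_*(s)\leq s^2$ for $s\leq 1/2$ (Proposition~\ref{prop_omega}) to convert to the dual local norm bound; these are exactly the ingredients in the paper's Lemma~\ref{lem_stability} and the surrounding argument. (Minor typo: the coefficient in your expression for the $t$-th summand should be $1/(\eta_{t-1}M^2)$, not $\eta_{t-1}/M^2$; the subsequent substitution $\eta_{t-1}^{-1}M^{-2}s^2=\eta_{t-1}\norm{v_t-\hat v_t}_{x_t,\ast}^2$ then gives the stated bound.)
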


The proof of Theorem \ref{thm_opt_ftrl} is deferred to Appendix~\ref{app_general_bound}. 
It is worth noting that the crux of the proof lies in Lemma~\ref{lem_stability}; the remaining steps follow standard procedure.
\section{``Smoothness'' in Online Portfolio Selection}\label{sec_ops}
\subsection{Online Portfolio Selection}

Online Portfolio Selection (OPS) is a multi-round game between two players, say {\textsc{Investor}} and {\textsc{Market}}.
Suppose there are $d$ investment alternatives. 
A portfolio of {\textsc{Investor}} is represented by a vector in the probability simplex in $\R^d$, which indicates the distribution of \textsc{Investor}'s wealth among the $d$ investment alternatives. 
The price relatives of the investment alternatives at the $t$-th round are listed in a vector $a_t \in \R_+^d$. 

The game has $T$ rounds.
At the $t$-th round, 
\begin{itemize}
\item first, \textsc{Investor} announces a portfolio $x_t \in \Delta \subset \R^d$; 
\item then, \textsc{Market} announces the price relatives $a_t \in \R_+^d$; 
\item finally, \textsc{Investor} suffers a loss given by $f_t ( x_t )$, where the loss function $f_t$ is defined as 
\[
f_t ( x ) \coloneqq - \log \braket{ a_t, x } . 
\]
\end{itemize}

The objective of \textsc{Investor} is to achieve a small regret against all portfolios $x \in \Delta$, defined as\footnote{Because the vectors $a_t$ can have zero entries, in general, $\dom f_t$ does not contain $\Delta$.}
\[
R_T ( x ) \coloneqq \sum_{t = 1}^T f_t ( x_t ) - \sum_{t = 1}^T f_t ( x ) , \quad \forall x \in \Delta \cap \bigcap_{t = 1}^T \dom f_t . 
\]
In the context of OPS, the regret corresponds to the logarithm of the ratio between the wealth growth rate of {\textsc{Investor}} and that yielded by the constant rebalanced portfolio represented by $x \in \Delta$.

\begin{assumption} \label{assumpt_inf}
The vector of price relatives $a_t$ is non-zero and satisfies $\norm{ a_t }_\infty = 1$ for all $t \in \N$. 
\end{assumption}

The assumption on $\norm{ a_t }_\infty$ does not restrict the problem's applicability.
If the assumption does not hold, then we can consider another OPS game with $a_t$ replaced by $\tilde{a}_t \coloneqq a_t / \norm{ a_t }_\infty$ and develop algorithms and define the regret with respect to $\tilde{a}_t$. 
It is obvious that the regret values defined with $a_t$ and $\tilde{a}_t$ are the same. 

The following observation, readily verified by direct calculation, will be useful in the proofs. 

\begin{lemma} \label{lem_simplex}
The vector $x \odot (-\nabla f_t(x))$ lies in $\Delta$ for all $x\in\ri\Delta$ and $t \in \N$.
\end{lemma}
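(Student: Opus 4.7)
The plan is a direct calculation. Since $f_t(x) = -\log\langle a_t, x\rangle$ and $x \in \ri\Delta$ ensures $x$ is entry-wise positive (and $a_t \neq 0$ is entry-wise non-negative), the inner product $\langle a_t, x\rangle$ is strictly positive, so $\nabla f_t(x) = -a_t/\langle a_t, x\rangle$ is well-defined. Substituting gives
\begin{equation*}
x \odot \bigl(-\nabla f_t(x)\bigr) = \frac{x \odot a_t}{\langle a_t, x\rangle}.
\end{equation*}

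Next I would verify the two defining properties of $\Delta$. For non-negativity, each entry $x(i)a_t(i)/\langle a_t, x\rangle$ is a product of non-negative numbers divided by a positive number, hence non-negative. For the unit $\ell_1$-norm, I would compute
\begin{equation*}
\sum_{i=1}^d \frac{x(i)a_t(i)}{\langle a_t, x\rangle} = \frac{\langle a_t, x\rangle}{\langle a_t, x\rangle} = 1,
\end{equation*}
where the first equality is the definition of the inner product. This shows the vector lies in $\Delta$.

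There is no serious obstacle here; the only point worth flagging is checking that the denominator $\langle a_t, x\rangle$ is strictly positive, which follows because Assumption~\ref{assumpt_inf} guarantees $a_t$ has at least one entry equal to $1$ and $x \in \ri\Delta$ ensures every entry of $x$ is strictly positive. Consequently the gradient and the claimed identity are valid throughout $\ri\Delta$.
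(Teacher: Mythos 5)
Your proof is correct and follows exactly the direct calculation the paper alludes to (the paper states the lemma is ``readily verified by direct calculation'' and gives no further argument). You correctly compute $x \odot (-\nabla f_t(x)) = (x \odot a_t)/\langle a_t, x\rangle$, and verify non-negativity of the entries and that they sum to one.
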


\subsection{Log-Barrier}

Standard online convex optimization algorithms, such as those in the lecture notes by \citet{Orabona2022} and \citet{Hazan2022}, assume that the loss functions are either Lipschitz or smooth. 

\begin{definition}[Lipschitzness and smoothness]
A function $\varphi$ is said to be Lipschitz with respect to a norm $\norm{ \cdot }$ if
\[
\abs{ \varphi ( y ) - \varphi ( x ) } \leq L \norm{ y - x } , \quad \forall x, y \in \dom \varphi
\]
for some $L > 0$. 
It is said to be smooth with respect to the norm $\norm{ \cdot }$ if 
\begin{equation}
\norm{ \nabla \varphi ( y ) - \nabla \varphi ( x ) }_\ast \leq L' \norm{ y - x } , \quad \forall x, y \in \dom \nabla \varphi \label{eq_lip}
\end{equation}
for some $L' > 0$, where $\norm{ \cdot }_\ast$ denotes the dual norm. 
\end{definition}

Given that $\braket{ a_t, x }$ can be arbitrarily close to zero on $\Delta \cap \dom f_t$ in OPS, it is well known that there does not exist a Lipschitz parameter $L$ nor a smoothness parameter $L'$ for all loss functions $f_t$. 
Therefore, standard online convex optimization algorithms do not directly apply.

We define the log-barrier as\footnote{The definition here slightly differs from those typically seen in the literature with an additional $- d \log d$ term. The additional term helps us remove a $\log d$ term in the regret bounds. }
\begin{equation}
h ( x ) \coloneqq  - d \log d - \sum_{i = 1}^d \log x ( i ) , \quad \forall ( x(1), \ldots, x(d) ) \in \R_{++}^d .  \label{eq_log_barrier}
\end{equation}
It is easily checked that the local and dual local norms associated with the log-barrier are given by
\begin{equation}
\norm{ u }_x \coloneqq \norm{ u \oslash x }_2 , \quad \norm{ u }_{x, \ast} = \norm{ u \odot x }_2 . \label{eq_local_norm}
\end{equation}
In the remainder of the paper, we will only consider this pair of local and dual local norms. 

Note that Lipschitzness implies boundedness of the gradient. 
The following observation motivates the use of the log-barrier in OPS, showing that the gradients in OPS are bounded with respect to the dual local norms defined by the log-barrier. 

\begin{lemma}\label{lem_bounded_local_norm}
It holds that $\norm{\nabla f_t (x)}_{x,\ast} \leq 1$ for all $x \in \ri \Delta$ and $t\in\N$.
\end{lemma}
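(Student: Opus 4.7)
The plan is to unwind the definitions and reduce the claim to an elementary inequality about sums of non-negative numbers. First, I would compute the gradient explicitly: since $f_t(x) = -\log\braket{a_t,x}$, we have $\nabla f_t(x) = -a_t/\braket{a_t,x}$, which is well-defined on $\ri\Delta$ because $x > 0$ and $a_t \neq 0$, $a_t \geq 0$ guarantee $\braket{a_t,x} > 0$.

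Next, I would plug this into the expression for the dual local norm from \eqref{eq_local_norm}, obtaining
\begin{equation*}
\norm{\nabla f_t(x)}_{x,\ast}^2 = \norm{\nabla f_t(x) \odot x}_2^2 = \frac{\norm{a_t \odot x}_2^2}{\braket{a_t,x}^2} = \frac{\sum_{i=1}^d a_t(i)^2 x(i)^2}{\bigl(\sum_{i=1}^d a_t(i)x(i)\bigr)^2}.
\end{equation*}
So the claim reduces to showing $\sum_i a_t(i)^2 x(i)^2 \leq \bigl(\sum_i a_t(i)x(i)\bigr)^2$. Setting $y_i \coloneqq a_t(i)x(i) \geq 0$, this becomes the elementary identity
\begin{equation*}
\Bigl(\sum_i y_i\Bigr)^2 = \sum_i y_i^2 + 2\sum_{i<j} y_i y_j \geq \sum_i y_i^2,
\end{equation*}
where the inequality is immediate because the cross terms $y_iy_j$ are non-negative. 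Combining gives $\norm{\nabla f_t(x)}_{x,\ast}^2 \leq 1$, as desired.

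There is no real obstacle here: the only subtlety is noticing that neither the normalization $\norm{a_t}_\infty = 1$ nor the simplex constraint $\sum_i x(i) = 1$ is needed for this particular bound; only non-negativity of $a_t$ and $x$ together with $\braket{a_t,x} > 0$ is used. The statement is thus a direct structural consequence of pairing the log-barrier's local geometry (which scales coordinates by $x(i)$) with the multiplicative form of $\nabla f_t$.
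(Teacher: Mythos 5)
Your proof is correct and follows essentially the same route as the paper's: compute $\nabla f_t(x) = -a_t/\braket{a_t,x}$, express the squared dual local norm as $\bigl(\sum_i a_t(i)^2 x(i)^2\bigr)/\bigl(\sum_i a_t(i)x(i)\bigr)^2$, and observe this is at most $1$ because the cross terms in the expanded square are non-negative. The paper states the inequality without spelling out the final elementary step; your observation that neither $\norm{a_t}_\infty = 1$ nor $\sum_i x(i) = 1$ is needed is a nice clarification but does not constitute a different approach.
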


A similar result was proved by \citet[(2)]{van-Erven2020}. 
We provide a proof of Lemma~\ref{lem_bounded_local_norm} in Section~\ref{app_ops} for completeness.

The following fact will be useful. 

\begin{lemma}[{\citet[Example 5.3.1 and Theorem 5.3.2]{Nesterov2018}}] \label{lem_log_barrier_self_concordance}
The log-barrier $h$ and loss functions $f_t$ in OPS are both $1$-self-concordant. 
\end{lemma}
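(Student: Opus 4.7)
The plan is to verify Definition~\ref{defn_self_concordance} directly by restricting to a line and computing two easy derivatives. Both the log-barrier $h$ and each loss $f_t$ are built from the one-dimensional function $\psi(r) \coloneqq -\log r$, for which $\psi''(r) = 1/r^{2}$ and $\psi'''(r) = -2/r^{3}$ give the identity $\abs{\psi'''(r)} = 2 (\psi''(r))^{3/2}$. This single one-variable fact, with the constant $2$, is what will force $M=1$ to come out tight in both cases.

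For $f_t(x) = -\log\braket{a_t, x}$, I would restrict to $r \mapsto f_t(x + ru)$ and note that this equals $\psi$ evaluated at the affine function $s + rb$, where $s \coloneqq \braket{a_t, x} > 0$ and $b \coloneqq \braket{a_t, u}$. The chain rule then reads off $\braket{u, \nabla^2 f_t(x) u} = b^{2}/s^{2}$ and $D^3 f_t(x)[u,u,u] = -2 b^{3}/s^{3}$, so $\abs{D^3 f_t(x)[u,u,u]} = 2\bigl(b^{2}/s^{2}\bigr)^{3/2} = 2\braket{u, \nabla^2 f_t(x) u}^{3/2}$, i.e., self-concordance holds with equality and $M = 1$.

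For the log-barrier, the separability $h(x) = -d\log d + \sum_{i=1}^{d} \psi(x(i))$ lets me apply the same one-dimensional identity coordinate-wise: $D^{3} h(x)[u,u,u] = -2 \sum_{i} u(i)^{3}/x(i)^{3}$ and $\braket{u, \nabla^{2} h(x) u} = \sum_{i} u(i)^{2}/x(i)^{2}$. Setting $v(i) \coloneqq u(i)/x(i)$, the required inequality reduces to $\sum_{i} \abs{v(i)}^{3} \le \bigl(\sum_{i} v(i)^{2}\bigr)^{3/2}$, which is the monotonicity $\ell_{3} \le \ell_{2}$ of finite-dimensional $\ell_{p}$ norms. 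This yields $M = 1$ for $h$.

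There is no genuine obstacle: the whole argument is a one-variable computation plus a norm-monotonicity remark. The only point to watch is matching the factor $2M$ in Definition~\ref{defn_self_concordance} against the factor of $2$ coming out of $\psi''' = -2(\psi'')^{3/2}$; this pins down $M = 1$ as tight (rather than, say, $\sqrt{2}$). Since the result is classical, one can alternatively defer entirely to \citet[Example 5.3.1 and Theorem 5.3.2]{Nesterov2018}, where exactly this verification appears.
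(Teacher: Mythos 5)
The paper offers no proof of this lemma; it is a direct citation of Nesterov's book, and your verification is exactly the standard argument found there, so you have in effect supplied the proof the paper delegates to the reference. Both calculations are correct: for $f_t$ the restriction to a line is a one-variable $-\log$ of an affine function, giving equality with $M=1$; for $h$, separability plus $\ell_3 \le \ell_2$ norm monotonicity does the job. The only cosmetic gap is that you need the intermediate triangle-inequality step $\bigl\lvert \sum_i v(i)^3 \bigr\rvert \le \sum_i \lvert v(i)\rvert^3$ before invoking $\lVert v\rVert_3 \le \lVert v\rVert_2$; you jump straight to $\sum_i \lvert v(i)\rvert^3$ without saying so, which a reader might briefly stumble over but is trivially filled in.
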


\subsection{``Smoothness'' in OPS}\label{sec_smooth}

Existing results on small-loss and gradual-variation bounds require the loss functions to be smooth. 
For example, \citet{Chiang2012} exploited the definition of smoothness \eqref{eq_lip} to derive gradual-variation bounds; 
\citet{Srebro2010} and \citet{Orabona2012} used the ``self-bounding property,'' a consequence of smoothness, to derive small-loss bounds. 

\begin{lemma}[Self-bounding property {\citep[Lemma~2.1]{Srebro2010}}]\label{lem_smooth}
Let $f:\R^d\to\R$ be an $L$-smooth convex function with $\dom f = \R^d$.
Then, $\norm{\nabla f(x)}_\ast^2 \leq 2L(f(x) - \min_{y\in\R^d} f(y))$ for all $x\in\R^d$.
\end{lemma}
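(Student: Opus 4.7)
The plan is to derive the self-bounding inequality from the classical descent lemma for smooth convex functions, and then perform a one-dimensional minimization along a direction of steepest descent measured in the dual norm. The overall strategy is standard; the argument uses only convexity, the smoothness hypothesis \eqref{eq_lip}, and the definition of the dual norm.

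First, I would establish the descent inequality
\[
f(y) \leq f(x) + \braket{\nabla f(x), y - x} + \frac{L}{2}\norm{y-x}^2, \qquad \forall x, y \in \R^d,
\]
by writing
\[
f(y) - f(x) = \int_0^1 \braket{\nabla f(x + t(y-x)), y - x}\, dt,
\]
subtracting and adding $\nabla f(x)$ inside the bracket, and bounding the residual term by H\"older's inequality $\braket{u,v} \leq \norm{u}_\ast \norm{v}$ together with the smoothness estimate $\norm{\nabla f(x + t(y-x)) - \nabla f(x)}_\ast \leq L t \norm{y-x}$. Integrating $\int_0^1 t\, dt = 1/2$ yields the claim.

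Next, fix $x \in \R^d$. By compactness of the unit sphere in $\R^d$, there exists $v \in \R^d$ with $\norm{v} = 1$ attaining $\braket{\nabla f(x), v} = -\norm{\nabla f(x)}_\ast$. Applying the descent inequality at $y = x + \alpha v$ for $\alpha \geq 0$ gives
\[
f(x + \alpha v) \leq f(x) - \alpha \norm{\nabla f(x)}_\ast + \frac{L\alpha^2}{2}.
\]
Since $\min_{y \in \R^d} f(y) \leq f(x + \alpha v)$, minimizing the right-hand side over $\alpha \geq 0$ (the optimum is $\alpha = \norm{\nabla f(x)}_\ast/L$) produces
\[
\min_{y \in \R^d} f(y) \leq f(x) - \frac{\norm{\nabla f(x)}_\ast^2}{2L},
\]
which rearranges to the desired bound.

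There is no substantial obstacle; the result is classical and the argument above is short. The only mild subtleties are the existence of a unit vector attaining the dual norm of $\nabla f(x)$, which follows from finite-dimensionality (and would otherwise require a limiting argument through an $\varepsilon$-maximizing sequence), and the fact that the hypothesis $\dom f = \R^d$ ensures $x + \alpha v$ always lies in the domain, so the descent inequality can be applied without any domain restrictions.
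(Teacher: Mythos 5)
Your proof is correct, and it is the standard argument: establish the descent inequality $f(y)\leq f(x)+\braket{\nabla f(x),y-x}+\tfrac{L}{2}\norm{y-x}^2$ from the Lipschitz-gradient hypothesis via the integral representation, then apply it along a norming direction for $\nabla f(x)$ and minimize the resulting upper quadratic over the step length. The paper states this lemma as a citation to \citet[Lemma~2.1]{Srebro2010} and does not include a proof of its own, so there is no paper proof to compare against; your argument reproduces the usual textbook derivation (note that convexity, although listed in the hypotheses, is not actually needed for this particular inequality, and the degenerate case $\nabla f(x)=0$ is trivially covered).
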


While the loss functions in OPS are not smooth, we provide two smoothness characterizations of the loss functions in OPS. 
The first is analogous to the definition of smoothness \eqref{eq_lip}. 

\begin{lemma}\label{lem_smooth_gradual_variation}
Let $f(x) = -\log\Braket{a,x}$ for some $a\in\R_+^d$.
Under Assumption~\ref{assumpt_inf} on the vector $a$,  
\begin{equation*}
	\norm{( x \odot \nabla f(x) ) - ( y \odot \nabla f(y) ) }_2 \leq 4 \min\left\{\norm{x-y}_x, \norm{x-y}_y\right\}, \quad \forall x, y \in \ri \Delta , 
\end{equation*}
where $\odot$ denotes the entrywise product and $\norm{ \cdot }_x$ and $\norm{ \cdot }_y$ are the local-norms defined by the log-barrier \eqref{eq_local_norm}. 
\end{lemma}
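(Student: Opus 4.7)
The plan is to reduce the inequality to an explicit algebraic identity about probability vectors. Set $g(x) \coloneqq x \odot \nabla f(x) = -\, (a \odot x)/\langle a, x\rangle$. By Lemma~\ref{lem_simplex} (or immediate verification), $p^x \coloneqq -g(x) = (a \odot x)/\langle a, x\rangle$ lies in $\Delta$, and similarly for $p^y$. Thus
\[
 g(x) - g(y) = p^y - p^x ,
\]
and the task reduces to bounding $\norm{p^y - p^x}_2$ in terms of the local norms induced by the log-barrier.

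The main step is to find a decomposition of $p^y - p^x$ that makes the direction $y - x$ visible through the quantity $(y - x) \oslash y$ (whose $\ell_2$-norm is exactly $\norm{y-x}_y$). Writing $\alpha = \langle a, x\rangle$ and $\beta = \langle a, y\rangle$, I would add and subtract $(a_i x_i)/\beta$ to get
\[
 p^y_i - p^x_i
 = \frac{a_i (y_i - x_i)}{\beta} + a_i x_i \cdot \frac{\alpha - \beta}{\alpha\beta}.
\]
Setting $u \coloneqq (y-x) \oslash y$, the first term equals $p^y_i u_i$; for the second, note that $(\alpha - \beta)/\beta = -\langle p^y, u\rangle$, so
\[
 p^y_i - p^x_i = p^y_i u_i - p^x_i \langle p^y, u\rangle .
\]
Now a triangle-inequality bound finishes the job: since $p^x, p^y \in \Delta$, we have $\norm{p^y}_\infty \leq 1$ and $\norm{p^x}_2, \norm{p^y}_2 \leq 1$, yielding
\[
 \norm{g(x) - g(y)}_2 \leq \norm{p^y \odot u}_2 + |\langle p^y, u\rangle|\, \norm{p^x}_2 \leq 2 \norm{u}_2 = 2\norm{y-x}_y ,
\]
which is even stronger than the claimed factor of $4$. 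Swapping the roles of $x$ and $y$ in the decomposition gives the analogous bound $2\norm{x-y}_x$, and taking the minimum yields the inequality of the lemma.

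The main obstacle is spotting the right algebraic rewrite in the very first step: several natural manipulations produce expressions in which the ``off-diagonal'' term involves $p^x$ evaluated against $u = (y - x)\oslash y$, which cannot be bounded cleanly. The add-and-subtract trick above is what makes the inner product $\langle p^y, u\rangle$ appear, after which each piece is controlled by the trivial bounds $\norm{p^y}_\infty \leq 1$ and $\norm{p^x}_2 \leq 1$. No self-concordance or integration along a path is required — only Assumption~\ref{assumpt_inf} is used implicitly through $a \in \R_+^d$ (non-negativity of the $p^x_i$), and $\norm{a}_\infty = 1$ is not even needed here.
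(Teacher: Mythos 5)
Your proof is correct, and it takes a genuinely different and sharper route than the paper. The paper's own proof splits into two cases according to whether $r := \norm{x-y}_y$ is at least $1/2$: for large $r$ it uses the trivial bound $\norm{p^y - p^x}_2 \leq \sqrt{2}$ for two probability vectors, and for small $r$ it establishes coordinate-wise ratio bounds $1-r \leq x(i)/y(i) \leq 1+r$, applies a log-sum-type inequality to control $\braket{a,y}/\braket{a,x}$, and then estimates term by term, ultimately paying a factor of $4$. Your argument avoids the case split altogether. The key identity
\[
  p^y - p^x \;=\; p^y \odot u \;-\; \langle p^y, u\rangle\, p^x, \qquad u = (y-x)\oslash y,
\]
is an exact algebraic decomposition valid for all $x,y\in\ri\Delta$; combined with the elementary facts $\norm{p^y}_\infty \leq 1$, $\norm{p^x}_2 \leq 1$, $\norm{p^y}_2 \leq 1$ (all consequences of $p^x, p^y \in \Delta$), the triangle inequality and Cauchy--Schwarz immediately give the bound with constant $2$ instead of $4$. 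I verified the algebra: the "diagonal" term $a_i(y_i - x_i)/\beta$ is exactly $p^y_i u_i$, and $(\alpha - \beta)/\beta = -\langle p^y, u\rangle$ follows from $\sum_i a_i(y_i - x_i)/\beta = (\beta - \alpha)/\beta$. Your observation that only $a \in \R_+^d$, $a\neq 0$ is needed (not $\norm{a}_\infty = 1$) is also correct; the paper's proof is likewise scale-invariant, but your proof makes this transparent. What the paper's approach buys is essentially nothing here beyond illustrating a generic "short path vs.\ long path" technique; your decomposition is the right thing to do and could be substituted into the paper, improving the constant throughout (e.g., the $64\eta_{t-1}r_{t-1}^2$ term in the proof of Theorem~\ref{thm_last_grad_opt_lb_ftrl} would become $16\eta_{t-1}r_{t-1}^2$).
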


The second smoothness characterization is analogous to the self-bounding property (Lemma~\ref{lem_smooth}).
For any $x \in \ri \Delta$ and $v\in\R^d$, define
	\begin{equation}
	\alpha_x ( v ) \coloneqq \frac{- \sum_{i = 1}^d  x ^ 2 ( i ) v(i) }{\sum_{i = 1}^d x ^ 2 ( i )} , \quad \forall x \in \ri \Delta ,  \label{eq_alpha}
	\end{equation}
	where $x ( i )$ and $v(i)$ denote the $i$-th entries of $x$ and $v$, respectively.

\begin{lemma}\label{lem_smooth_small_loss}
	Let $f(x) = -\log\Braket{a,x}$ for some $a\in\R_+^d$. 
	Then, under Assumption \ref{assumpt_inf} on the vector $a$, it holds that 
	\[
	\norm{\nabla f(x) + \alpha_x ( \nabla f(x) ) e}_{x,\ast}^2 \leq 4f(x) , \quad \forall x \in \ri \Delta , 
	\]
	where the notation $e$ denotes the all-ones vector and $\norm{ \cdot }_{x, \ast}$ denotes the dual local norm defined by the log-barrier \eqref{eq_local_norm}.
\end{lemma}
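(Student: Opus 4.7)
My plan is to exploit the fact that $\alpha_x(v)$ is precisely the minimizer over $\beta \in \R$ of the quadratic
\[
\beta \mapsto \|v+\beta e\|_{x,\ast}^2 = \sum_{i=1}^d x^2(i)(v(i)+\beta)^2,
\]
as one sees by setting the derivative in $\beta$ to zero. Consequently, for any $\beta \in \R$,
\[
\|\nabla f(x) + \alpha_x(\nabla f(x))\,e\|_{x,\ast}^2 \;\leq\; \|\nabla f(x) + \beta\, e\|_{x,\ast}^2.
\]
The plan is to make the specific suboptimal choice $\beta = 1$, which turns the right-hand side into a form in which probability-vector structure becomes visible.

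Writing $g \coloneqq \nabla f(x)$ so that $g(i) = -a(i)/\braket{a,x}$, I would set $z \coloneqq x \odot (-g)$, i.e.\ $z(i) = x(i) a(i) / \braket{a,x}$. By Lemma~\ref{lem_simplex}, $z \in \Delta$. A direct computation then gives $x(i)(g(i)+1) = x(i) - z(i)$, so
\[
\|g + e\|_{x,\ast}^2 = \sum_{i=1}^d \bigl(x(i) - z(i)\bigr)^2 = \|x - z\|_2^2,
\]
reducing the task to bounding $\|x - z\|_2^2$ by $4 f(x)$.

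Since $x, z \in \Delta$, I would combine two standard inequalities: first $\|u\|_2 \leq \|u\|_1$, giving $\|x-z\|_2^2 \leq \|x-z\|_1^2$, and then Pinsker's inequality, giving $\|x-z\|_1^2 \leq 2\,\mathrm{KL}(z\Vert x)$. A direct computation, using $z(i)/x(i) = a(i)/\braket{a,x}$ and the convention $0\log 0 = 0$ (which handles zero entries of $a$; note $x(i) > 0$ throughout since $x \in \ri\Delta$), yields
\[
\mathrm{KL}(z\Vert x) = \sum_{i=1}^d z(i)\log a(i) \;-\; \log \braket{a,x} \;\leq\; -\log\braket{a,x} = f(x),
\]
where the inequality uses Assumption~\ref{assumpt_inf}: $\|a\|_\infty = 1$ implies $\log a(i) \leq 0$. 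Chaining the steps gives $\|g + \alpha_x(g) e\|_{x,\ast}^2 \leq 2 f(x) \leq 4 f(x)$.

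The main conceptual step is the choice $\beta = 1$: the true optimizer $\alpha_x(g)$ leads to a variance-type expression that does not obviously relate to $f(x)$, whereas $\beta = 1$ is precisely the value that makes the weighted residual $x(i)(g(i)+\beta)$ collapse to the difference $x(i) - z(i)$ of two probability vectors, unlocking Pinsker's inequality. The rest is bookkeeping with the KL divergence and the bound $a(i) \leq 1$. The resulting constant is actually $2$ rather than $4$, so the lemma as stated holds with room to spare.
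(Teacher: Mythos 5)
Your proof is correct, and it takes a genuinely different and more elementary route than the paper's. The paper's argument goes through relative smoothness of $f$ with respect to the log-barrier (Lemma~\ref{lem_relative_smooth}), combines it with the local upper quadratic bound furnished by self-concordance (Theorem~\ref{thm_self_concordant_bound} and Proposition~\ref{prop_omega}), and then optimizes the resulting lower bound over a one-parameter family of feasible directions $v = -c\,\nabla^{-2}h(x)\big(\nabla f(x) + \alpha_x(\nabla f(x))e\big)$; most of the work lies in verifying that this $v$ satisfies $\norm{v}_x \leq 1/2$ and $v \in \Delta - x$. You instead use the minimality of $\alpha_x(\cdot)$ (Remark~\ref{rem_alpha}) to pass to the suboptimal shift $\beta = 1$, which is precisely the value that turns $x \odot (\nabla f(x) + e)$ into the difference $x - z$ of two probability vectors, where $z = x\odot(-\nabla f(x)) \in \Delta$ by Lemma~\ref{lem_simplex}. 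From there, the inequality $\norm{\cdot}_2 \leq \norm{\cdot}_1$ and Pinsker's inequality give $\norm{x-z}_2^2 \leq 2\,\mathrm{KL}(z\,\Vert\,x)$, and the direct computation $\mathrm{KL}(z\,\Vert\,x) = \sum_i z(i)\log a(i) + f(x) \leq f(x)$ (using $\norm{a}_\infty = 1$, with the $0\log 0 = 0$ convention handling zero entries of $a$) closes the argument. Your route is shorter and avoids the self-concordance and relative-smoothness machinery entirely, yields the sharper constant $2$ in place of $4$, and makes visible the information-theoretic structure of the logarithmic loss that the paper's quadratic-optimization argument leaves implicit.
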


\begin{remark} \label{rem_alpha}
The value $\alpha_x ( v )$ is indeed chosen to minimize $\norm{v + \alpha e}_{x,\ast}^2$ over all $\alpha \in \R$. 
\end{remark}

The proofs of Lemma~\ref{lem_smooth_gradual_variation} and Lemma~\ref{lem_smooth_small_loss} are deferred to Appendix \ref{app_ops}. 

\section{LB-FTRL with Multiplicative-Gradient Optimism}\label{sec_opt_lb_ftrl}

Define $g_t \coloneqq \nabla f_t ( x_t )$. 
By the convexity of the loss functions, OPS can be reduced to an online linear optimization problem described in Section~\ref{sec_opt_ftrl} with $v_t = g_t$ and $\mathcal{X}$ being the probability simplex $\Delta$. 
Set the regularizer $\varphi$ as the log-barrier \eqref{eq_log_barrier} in Optimistic FTRL (Algorithm~\ref{alg_opt_ftrl}). 
By Lemma~\ref{lem_log_barrier_self_concordance}, for the regret guarantee in Theorem~\ref{thm_opt_ftrl} to be valid, it remains to ensure that $\hat{g}_t$, the estimate of $g_t$, is selected to satisfy $\smash{\eta_{t-1}\norm{g_t - \hat{g}_t}_{x_t,\ast}\leq 1/2}$ for all $t$. 
However, as $x_t$, which defines the dual local norm, depends on $\hat{g}_t$ in Algorithm~\ref{alg_opt_ftrl}, selecting such $\hat{g}_t$ is non-trivial. 

To address this issue, we introduce Algorithm~\ref{alg_opt_lb_ftrl}. 
This algorithm simultaneously computes the next iterate $x_{t+1}$ and the gradient estimate $\hat{g}_{t+1}$ by solving a system of nonlinear equations \eqref{eq_implicit_opt}.
\added{As we estimate $x_{t+1}\odot g_{t+1}$ instead of $g_{t+1}$, we call the algorithm LB-FTRL with Multiplicative-Gradient Optimism.}
Here, LB indicates that the algorithm adopts the log-barrier as the regularizer. 

\begin{algorithm}[h]
\caption{LB-FTRL with Multiplicative-Gradient Optimism for OPS} 
\label{alg_opt_lb_ftrl}
\hspace*{\algorithmicindent} \textbf{Input: } A sequence of learning rates $\{\eta_t \} \subseteq\R_{++}$.
\begin{algorithmic}[1]
\STATE $h ( x ) \coloneqq  - d \log d - \sum_{i = 1}^d \log x ( i )$.
\STATE $\hat{g}_1 \coloneqq 0$.
\STATE $x_1 \leftarrow \argmin_{x\in\Delta} \eta_0^{-1}h(x)$.
\FORALL{$t \in \N$}
	\STATE Announce $x_{t}$ and receive $a_t$.
	\STATE $g_t \coloneqq \nabla f_t(x_t)$.
	\STATE Choose an estimate $p_{t+1} \in \R^d$ for $x_{t+1}\odot g_{t+1}$.
	\STATE Compute $x_{t+1}$ and $\hat{g}_{t+1}$ such that
	\begin{equation}\label{eq_implicit_opt}
\begin{cases}
	x_{t+1} \odot \hat{g}_{t+1} = p_{t+1}, \\
	x_{t+1} \in \argmin_{x\in\Delta} \braket{g_{1:t},x} + \braket{\hat{g}_{t+1}, x} + \eta_t^{-1}h(x).
\end{cases}
\end{equation}
\ENDFOR
\end{algorithmic}
\end{algorithm}

By the definitions of the dual local norm \eqref{eq_local_norm} and $p_t$ \eqref{eq_implicit_opt}, we write 
\begin{align*}
\norm{g_t-\hat{g}_t}_{x_t,\ast} = \norm{x_t\odot g_t - x_t\odot\hat{g}_t}_2 = \norm{x_t\odot g_t - p_{t}}_2. 
\end{align*}
It suffices to choose $p_t$ such that $\eta_{t-1}\norm{x_t \odot g_t - p_t }_{2}\leq 1 / 2$. 
Indeed, Algorithm~\ref{alg_last_grad_opt_lb_ftrl} and Algorithm~\ref{alg_ada_lb_ftrl} correspond to choosing $p_t = x_{t-1}\odot g_{t-1}$ and $p_t = 0$, respectively.
Algorithm~\ref{alg_avg_grad_opt_lb_ftrl} in Appendix~\ref{app_second_order} corresponds to choosing $\smash{ p_t = (1/\eta_{0:t-2})\sum_{\tau=1}^{t-1}\eta_{\tau-1} x_\tau \odot g_\tau }$. 

Theorem~\ref{thm_existence} guarantees that $x_{t}$ and $\hat{g}_t$ are well-defined and can be efficiently computed. 
Its proof and the computational details can be found in Appendix~\ref{app_existence}.

\begin{theorem}\label{thm_existence}
If $\eta_tp_{t + 1} \in \interval{-1}{0}^d$, then the system of nonlinear equations \eqref{eq_implicit_opt} has a solution. 
The solution can be computed in $\tildeO(d)$ time.
\end{theorem}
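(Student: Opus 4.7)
The plan is to rewrite the system~\eqref{eq_implicit_opt} as a scalar equation in a dual variable, prove existence and uniqueness by the intermediate value theorem, and then solve the equation numerically by bisection.

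First I would write down the KKT conditions for the inner minimization in~\eqref{eq_implicit_opt}. Since $h$ is the log-barrier and the only active constraint at an interior point of $\Delta$ is $\sum_i x(i) = 1$, there exists a multiplier $\lambda \in \R$ with
\[
g_{1:t}(i) + \hat{g}_{t+1}(i) - \frac{1}{\eta_t\, x_{t+1}(i)} = \lambda, \quad i \in [d].
\]
Substituting $\hat{g}_{t+1}(i) = p_{t+1}(i)/x_{t+1}(i)$, which comes from the first line of~\eqref{eq_implicit_opt}, eliminates $\hat{g}_{t+1}$ and yields the closed form
\[
x_{t+1}(i;\lambda) = \frac{1 - \eta_t\, p_{t+1}(i)}{\eta_t\, (g_{1:t}(i) - \lambda)}.
\]
The hypothesis $\eta_t p_{t+1}(i) \in \interval{-1}{0}$ forces the numerator into $\interval{1}{2}$, so any valid solution must satisfy $\lambda < \min_i g_{1:t}(i)$, and then $x_{t+1}(i;\lambda) > 0$ for every $i$ automatically.

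Second, I would define $F(\lambda) \coloneqq \sum_{i=1}^d x_{t+1}(i;\lambda)$ on $(-\infty,\min_i g_{1:t}(i))$ and observe that it is continuous and strictly increasing (each term has a strictly positive derivative in $\lambda$), tends to $0$ as $\lambda \to -\infty$, and to $+\infty$ as $\lambda \uparrow \min_i g_{1:t}(i)$. The intermediate value theorem then gives a unique $\lambda^\star$ with $F(\lambda^\star) = 1$; setting $x_{t+1} \coloneqq x_{t+1}(\cdot;\lambda^\star)$ and $\hat{g}_{t+1} \coloneqq p_{t+1} \oslash x_{t+1}$ produces a solution to~\eqref{eq_implicit_opt}. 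This proves the existence part of the theorem.

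Third, for the efficiency claim I would apply bisection to the scalar equation $F(\lambda) = 1$. One evaluation of $F$ is a single pass of $O(d)$ arithmetic operations. To initialise the bracket I would use the $\interval{1}{2}$ bound on the numerator: an upper endpoint at distance $2/(\eta_t d)$ below $\min_i g_{1:t}(i)$ already forces $F \ge 1$, while a lower endpoint sufficiently far to the left forces $F \le 1$ via the crude bound $F(\lambda) \le 2d/[\eta_t(\min_i g_{1:t}(i) - \lambda)]$. Bisection then reaches accuracy $\varepsilon$ in $O(\log(1/\varepsilon))$ steps, for a total cost of $\tildeO(d)$.

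The main obstacle is making the numerical step fully rigorous rather than merely asymptotic: one has to quantify how accurately $\lambda^\star$ must be approximated so that the resulting $\hat{g}_{t+1}$ still satisfies the stability condition $\eta_{t-1}\norm{g_t - \hat{g}_t}_{x_t,\ast} \le 1/(2M)$ required by Theorem~\ref{thm_opt_ftrl}, and to check that $\min_i g_{1:t}(i)$ and the width of the bracket are polynomially bounded in $d$ and $T$, so that only $\polylog(dT)$ bisection steps are actually needed. Everything else is a standard KKT computation followed by a one-dimensional monotonicity argument.
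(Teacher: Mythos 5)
Your existence argument is essentially the paper's: both reduce the system \eqref{eq_implicit_opt} to a one-dimensional root-finding problem for a Lagrange multiplier, with exactly the same closed form $x_{t+1}(i) = (1 - \eta_t p_{t+1}(i))/(\eta_t (g_{1:t}(i) - \lambda))$ (the paper's $\lambda^\star$ in Proposition~\ref{prop_lambda} is just $-\eta_t\lambda$ in your parametrization). The paper obtains existence and uniqueness by observing that the auxiliary function $\psi(\lambda) = \lambda - \sum_i (1 - \eta p(i))\log(\lambda + \eta g(i))$ is a $1$-self-concordant function whose domain contains no line, hence has a unique minimizer; your intermediate value theorem argument on $F(\lambda) = \sum_i x_{t+1}(i;\lambda)$ is an equally valid and arguably more elementary way to reach the same conclusion, and your monotonicity and limit claims for $F$ check out.

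Where you genuinely diverge from the paper is the algorithmic part, and that is also where your argument has a real gap. The paper does not use bisection: it runs Newton's method on $\psi$ from the initialization $\lambda_0 = 1 - \eta g(i^\star)$, where $i^\star$ is chosen so that $\lambda_0$ sits at distance exactly $1$ from the boundary of $\dom\psi$, and Proposition~\ref{prop_newtons_method} proves via a careful decay estimate on $\lvert \psi'(\lambda_t)\psi''(\lambda_t)\rvert$ (using concavity of $\psi'$, monotonicity of the Newton iterates, and AM--GM) that after $O(\log d)$ steps the iterate enters the region of quadratic convergence of the intermediate Newton method. Crucially, this makes the dependence on the target accuracy doubly logarithmic, so the precision question you raise simply does not bite. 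Your bisection scheme, by contrast, has only linear convergence, so the number of iterations scales as $\log(\text{bracket width}/\varepsilon)$; to land the claimed $\tildeO(d)$ bound you would need to (a) bound the bracket width by a polynomial in $d,T$, which requires $1/\eta_t$ to be polynomially bounded (true for the schedules in the paper, but you would have to say so), and more importantly (b) establish that accuracy $\varepsilon = 1/\mathrm{poly}(d,T)$ in $\lambda$ suffices for all downstream uses of $x_{t+1}$ and $\hat g_{t+1}$. You correctly flag (b) as the main obstacle, but it is not a small remark to tidy up afterwards: it is precisely the step that the paper's Newton-based argument is designed to render immaterial. So your existence proof matches the paper's route, while your complexity argument is a different and incomplete substitute for the paper's Proposition~\ref{prop_newtons_method}.
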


Algorithm~\ref{alg_opt_lb_ftrl} corresponds to Algorithm~\ref{alg_opt_ftrl} with $v_t = g_t$, $\hat{v}_{t} = \hat{g}_{t} = p_t \oslash x_{t}$, and $\varphi(x) = h(x)$.
Corollary~\ref{cor_opt_lb_ftrl} then follows from Theorem~\ref{thm_opt_ftrl}.
Its proof can be found in Appendix~\ref{app_opt_lb_ftrl}.

\begin{corollary}\label{cor_opt_lb_ftrl}
Assume that the sequence $\{\eta_t\}$ is non-increasing and $p_{t} \in \interval[open left]{- \infty}{0}^d$ for all $t \in \N$.
Under Assumption~\ref{assumpt_inf}, Algorithm~\ref{alg_opt_lb_ftrl} satisfies
\begin{equation*}
	R_T ( x )
	\leq \frac{d\log T}{\eta_T}
	+ \sum_{t=1}^T \left( \braket{g_t - p_t \oslash x_{t}, x_t - x_{t+1}} - \frac{1}{\eta_{t-1}}\omega(\norm{x_{t} - x_{t+1}}_{x_t}) \right) + 2 , 
\end{equation*}
In addition, for any sequence of vectors $\{ u_t \}$ such that $\eta_{t-1}\norm{(g_t + u_t) \odot x_t - p_t}_{2}\leq 1/2$ and $\braket{ u_t, x_t - x_{t + 1} } = 0$ for all $t \in \N$, Algorithm~\ref{alg_opt_lb_ftrl} satisfies
\begin{equation*}
	R_T ( x )
	\leq \frac{d\log T}{\eta_T}
	+ \sum_{t=1}^T \eta_{t-1}\norm{ ( g_t + u_t ) \odot x_t - p_t}_{2}^2 + 2 .
\end{equation*}

\end{corollary}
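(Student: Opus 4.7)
The plan is to reduce the OPS regret to an online linear optimization regret via convexity of $f_t$, invoke Theorem~\ref{thm_opt_ftrl} for the log-barrier, and account for the boundary behavior of $h$ by a standard shrinking argument.

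First, since $h$ is infinite on $\partial \Delta$ while the comparator $x$ in the corollary may lie on the boundary, I would compare against the shrunk portfolio $\tilde{x} := (1 - 1/T)\, x + (1/(dT))\, e$. Because $\tilde{x}(i) \geq 1/(dT)$ for all $i$, we get $h(\tilde{x}) \leq d \log T$; because $\braket{a_t, \tilde{x}} \geq (1 - 1/T)\braket{a_t, x}$ (using $a_t \geq 0$), we get $f_t(\tilde{x}) - f_t(x) \leq -\log(1 - 1/T)$, and summing yields $\sum_{t=1}^{T}\bigl[f_t(\tilde{x}) - f_t(x)\bigr] \leq -T\log(1 - 1/T) \leq 2$ for $T \geq 2$ (the case $T = 1$ is trivial).

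By convexity of $f_t$, we have $f_t(x_t) - f_t(\tilde{x}) \leq \braket{g_t, x_t - \tilde{x}}$, so the OPS regret against $x$ is dominated by the linear-optimization regret against $\tilde{x}$ plus $2$. Algorithm~\ref{alg_opt_lb_ftrl} is exactly Algorithm~\ref{alg_opt_ftrl} instantiated with $\mathcal{X} = \Delta$, $v_t = g_t$, $\hat{v}_t = p_t \oslash x_t$, and $\varphi = h$. By Lemma~\ref{lem_log_barrier_self_concordance}, $h$ is $1$-self-concordant; moreover $\nabla^2 h(x) = \mathrm{diag}(1/x(i)^2)$ is positive definite on $\ri \Delta$, $\min_{x\in\Delta} h(x) = 0$ is attained at the uniform portfolio (by AM--GM), and $\Delta \subseteq \cl \dom h$. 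All hypotheses of Theorem~\ref{thm_opt_ftrl} therefore hold with $M = 1$, and its first bound together with the shrinking estimate delivers the first inequality of the corollary.

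For the second inequality, I would start from the first bound just obtained and use $\braket{u_t, x_t - x_{t+1}} = 0$ to rewrite $\braket{g_t - p_t \oslash x_t, x_t - x_{t+1}}$ as $\braket{(g_t + u_t) - p_t \oslash x_t, x_t - x_{t+1}}$. Cauchy--Schwarz in the local norms bounds this by $\beta_t s_t$, where $\beta_t := \norm{(g_t + u_t)\odot x_t - p_t}_2$ and $s_t := \norm{x_t - x_{t+1}}_{x_t}$. Fenchel--Young for the conjugate pair $\omega(s) = s - \log(1+s)$ and $\omega^*(y) = -y - \log(1-y)$ gives $\beta_t s_t \leq \omega(s_t)/\eta_{t-1} + \omega^*(\eta_{t-1}\beta_t)/\eta_{t-1}$, and the elementary estimate $\omega^*(y) \leq y^2$ on $[0, 1/2]$---valid under the hypothesis $\eta_{t-1}\beta_t \leq 1/2$---collapses the stability term to $\eta_{t-1}\beta_t^2$.

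The main subtlety I expect is that Algorithm~\ref{alg_opt_lb_ftrl} is driven by the true gradients $g_t$, not by $g_t + u_t$, so one cannot simply feed modified loss vectors into the second bound of Theorem~\ref{thm_opt_ftrl} directly. The orthogonality hypothesis on $u_t$ is exactly what permits the $u_t$-injection at the level of the stability inner product, \emph{after} the first bound of Theorem~\ref{thm_opt_ftrl} has already been applied; this is the conceptually important step. The shrinking trick and the Fenchel--Young calculation are then routine.
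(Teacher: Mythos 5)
Your proposal is correct and follows essentially the same route as the paper: reduce via convexity of $f_t$ to the linear setting, verify the hypotheses of Theorem~\ref{thm_opt_ftrl} for the log-barrier with $M=1$, apply the shrinking trick $\tilde{x}=(1-1/T)x + e/(dT)$ to bound $h$ by $d\log T$ at cost $2$, and obtain the second inequality by inserting $u_t$ into the stability inner product via orthogonality before running H\"older and Fenchel--Young. In fact you make explicit a step the paper glosses over: the paper cites Theorem~\ref{thm_opt_ftrl} as though its second bound directly accommodates the orthogonal shift $u_t$, while your argument correctly starts from the theorem's first (non-conditional) bound and carries out the $u_t$-injection and the $\omega$/$\omega_\ast$ calculation by hand, which is the honest way to justify it.
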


\begin{remark}
The vectors $u_t$ are deliberately introduced to derive a small-loss bound for OPS.
\end{remark}

\section{Data-Dependent Bounds for OPS}

\subsection{Gradual-Variation Bound} \label{sec_gradual_variation}

We define the \emph{gradual variation} as
\begin{equation}
	V_T \coloneqq \sum_{t=2}^T \norm{\nabla f_t(x_{t-1}) - \nabla f_{t-1}(x_{t-1})}_{x_{t-1},\ast}^2 
	\leq \sum_{t=2}^T \max_{x\in\Delta} \norm{\nabla f_t(x) - \nabla f_{t-1}(x)}_{x,\ast}^2, \label{eq_gradual_variation}
\end{equation}
where $\norm{ \cdot }_{\ast}$ denotes the dual local norm associated with the log-barrier. 
The definition is a local-norm analog to the existing one \citep{Chiang2012,Joulani2017}, defined as $\smash{ \sum_{t = 2}^T \max_{x \in \Delta} \norm{ \nabla f_t ( x ) - \nabla f_{t - 1} ( x ) }^2 }$
for a \emph{fixed} norm $\norm{ \cdot }$. 
Regarding Lemma \ref{lem_bounded_local_norm}, our definition appears to be a natural extension. 

In this sub-section, we introduce Algorithm~\ref{alg_last_grad_opt_lb_ftrl}, LB-FTRL with Last-Multiplicative-Gradient Optimism, and Theorem~\ref{thm_last_grad_opt_lb_ftrl}, which provides the first gradual-variation bound for OPS.
Algorithm~\ref{alg_last_grad_opt_lb_ftrl} is an instance of Algorithm~\ref{alg_opt_lb_ftrl} with $p_1=0$ and $p_t=x_{t-1}\odot g_{t-1}$ for $t\geq 2$.
\added{Note that the learning rates specified in Theorem~\ref{thm_last_grad_opt_lb_ftrl} do not require the knowledge of $V_T$ in advance and can be computed on the fly.}

\begin{algorithm}[ht]
\caption{LB-FTRL with Last-Multiplicative-Gradient Optimism for OPS} 
\label{alg_last_grad_opt_lb_ftrl}
\hspace*{\algorithmicindent} \textbf{Input: } A sequence of learning rates $\{\eta_t\} \subseteq\R_{++}$.
\begin{algorithmic}[1]
\STATE $h ( x ) \coloneqq - d \log d - \sum_{i = 1}^d \log x ( i )$.
\STATE $p_1 \leftarrow 0$.
\STATE $x_1 \leftarrow \argmin_{x\in\mathcal{X}} \eta_0^{-1}h(x)$.
\FORALL{$t \in \mathbb{N}$}
	\STATE Announce $x_{t}$ and receive $a_t$.
	\STATE $g_t \leftarrow \nabla f_t(x_t)$.
	\STATE $p_{t+1} \leftarrow x_t \odot g_t$.
	\STATE Compute $x_{t+1}$ and $\hat{g}_{t+1}$ such that
	\begin{equation*}
\begin{cases}
	x_{t+1} \odot \hat{g}_{t+1} = p_{t+1}, \\
	x_{t+1} \in \argmin_{x\in\Delta} \braket{g_{1:t},x} + \braket{\hat{g}_{t+1}, x} + \eta_t^{-1}h(x).
\end{cases}
\end{equation*}
\ENDFOR
\end{algorithmic}
\end{algorithm}

The proof of Theorem~\ref{thm_last_grad_opt_lb_ftrl} can be found in Appendix~\ref{app_gradual_variation}.

\begin{theorem}\label{thm_last_grad_opt_lb_ftrl}
Let $\eta_0 = \eta_1 = 1/(16\sqrt{2})$ and $\eta_t = \sqrt{d/(512d+2+V_t)}$ for $t\geq 2$.
Then, Algorithm~\ref{alg_last_grad_opt_lb_ftrl} satisfies
\begin{equation*}
	R_T ( x ) \leq (\log T + 8)\sqrt{dV_T + 512d^2} + \sqrt{2d}\log T + 2 - 128\sqrt{2d} , \quad \forall T \in \mathbb{N} . 
\end{equation*}
\end{theorem}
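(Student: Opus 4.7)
The plan is to apply the first bound of Corollary~\ref{cor_opt_lb_ftrl} to Algorithm~\ref{alg_last_grad_opt_lb_ftrl}. Since each $g_{t-1} = -a_{t-1}/\braket{a_{t-1},x_{t-1}}$ has non-positive entries, the choice $p_t = x_{t-1}\odot g_{t-1}$ satisfies the corollary's hypothesis $p_t\in\interval[open left]{-\infty}{0}^d$, yielding
\[
R_T(x) \leq \frac{d\log T}{\eta_T} + \sum_{t=1}^T A_t + 2, \qquad A_t \coloneqq \braket{g_t-\hat g_t,\; x_t-x_{t+1}} - \frac{\omega(r_t)}{\eta_{t-1}},
\]
where $r_t\coloneqq\norm{x_t-x_{t+1}}_{x_t}$. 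The task reduces to bounding $\sum_t A_t$ by a constant multiple of $\sum_{t\geq 2}\eta_{t-1}V_t' + O(1)$, where $V_t' \coloneqq \norm{\nabla f_t(x_{t-1})-\nabla f_{t-1}(x_{t-1})}_{x_{t-1},\ast}^2$ is the $t$-th summand of $V_T$.

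For the key estimate on the optimism error, I will exploit the log-barrier local-norm formula \eqref{eq_local_norm}: setting $\psi_t(y)\coloneqq y\odot\nabla f_t(y)$, one has $\norm{g_t-\hat g_t}_{x_t,\ast} = \norm{x_t\odot g_t - p_t}_2 = \norm{\psi_t(x_t)-\psi_{t-1}(x_{t-1})}_2$ for $t\geq 2$. Inserting the midpoint $\psi_t(x_{t-1})$ and applying Lemma~\ref{lem_smooth_gradual_variation} (using the $\norm{\cdot}_{x_{t-1}}$-branch of the min), I get $\norm{\psi_t(x_t)-\psi_t(x_{t-1})}_2 \leq 4 r_{t-1}$, while the remaining piece equals exactly $\sqrt{V_t'}$. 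Cauchy--Schwarz then yields $A_t \leq (4 r_{t-1} + \sqrt{V_t'})\, r_t - \omega(r_t)/\eta_{t-1}$ for $t\geq 2$.

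The main obstacle will be the absorption bookkeeping. Two applications of Young's inequality give $4 r_{t-1} r_t \leq 2(r_{t-1}^2 + r_t^2)$ and $\sqrt{V_t'}\, r_t \leq 2\eta_{t-1} V_t' + r_t^2/(8\eta_{t-1})$. Lemma~\ref{lem_bounded_local_norm} applied to both $g_t$ and $\hat g_t$ (noting $\norm{\hat g_t}_{x_t,\ast}=\norm{p_t}_2=\norm{g_{t-1}}_{x_{t-1},\ast}\leq 1$) yields $\norm{g_t-\hat g_t}_{x_t,\ast}\leq 2$; combined with the self-concordance stability underlying Theorem~\ref{thm_opt_ftrl}, this keeps $r_t<1$, so $\omega(r_t)\geq r_t^2/4$. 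The $r_t^2$ coefficient in $A_t$ then collapses to $2 + 1/(8\eta_{t-1}) - 1/(4\eta_{t-1}) = 2 - 1/(8\eta_{t-1}) \leq 0$ precisely because $\eta_{t-1}\leq 1/16$, leaving $A_t \leq 2 r_{t-1}^2 + 2\eta_{t-1} V_t'$. The residual $\sum_t r_{t-1}^2$ is handled by iterating the stability estimate $r_{t-1}^2 \leq C\eta_{t-2}^2(32 r_{t-2}^2 + 2 V_{t-1}')$; since $128\eta_{t-2}^2 \leq 1/4$ under $\eta \leq 1/(16\sqrt 2)$, this recursion contracts geometrically, and the resulting $\sum_t \eta_{t-1}^2 V_t'$ is absorbed into the main $\sum_t \eta_{t-1} V_t'$. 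The outcome is $\sum_t A_t \leq c\sum_{t\geq 2}\eta_{t-1} V_t' + O(1)$, and this bookkeeping is exactly why $\eta_0 = \eta_1 = 1/(16\sqrt 2)$ and the $512d$ baseline in the denominator of $\eta_t$ are needed.

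For the final step I telescope: with $s_t\coloneqq 512d+2+V_t$, the standard inequality $(V_t - V_{t-1})/\sqrt{s_{t-1}}\leq 2(\sqrt{s_t}-\sqrt{s_{t-1}})$ gives $\sum_{t\geq 2}\eta_{t-1}V_t' \leq 2\sqrt{d(V_T+512d+2)} - 2\sqrt{d(512d+2)}$. Combining with $d\log T/\eta_T = \log T\sqrt{d(V_T+512d+2)}$ and two applications of $\sqrt{a+b}\leq\sqrt a+\sqrt b$ to split off the $\sqrt{2d}$ contribution coming from the $512d+2$ offset, the pieces recombine into $(\log T+8)\sqrt{dV_T+512d^2} + \sqrt{2d}\log T + 2 - 128\sqrt{2d}$; the negative constant $-128\sqrt{2d}$ arises from the subtracted telescoping endpoint $-2\sqrt{d(512d+2)}$ after scaling by the absorption constant.
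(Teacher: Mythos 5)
Your proposal shares the paper's skeleton—apply Corollary~\ref{cor_opt_lb_ftrl}, split $\|x_t\odot g_t - x_{t-1}\odot g_{t-1}\|_2$ at the midpoint $x_{t-1}\odot\nabla f_t(x_{t-1})$, invoke Lemma~\ref{lem_smooth_gradual_variation} on the first piece—but it goes off the rails at the absorption step, in a way that leaves a genuine gap rather than just a cosmetic difference.

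The problem is your choice of Young's inequality. Writing $4r_{t-1}r_t\leq 2(r_{t-1}^2+r_t^2)$ produces a \emph{constant} coefficient $2$ on $r_{t-1}^2$, with no $\eta$ attached. The paper instead uses $ab\leq 4\eta_{t-1}a^2 + b^2/(16\eta_{t-1})$, which converts the cross term into $64\eta_{t-1}r_{t-1}^2 + r_t^2/(16\eta_{t-1})$; after re-indexing, the combined coefficient on $r_t^2$ is $64\eta_t - 1/(8\eta_{t-1})\leq 0$ \emph{because} $\eta_{t-1}\eta_t\leq 1/512$. With your split, the re-indexed coefficient is $2 + \bigl(2 - 1/(8\eta_{t-1})\bigr)$, i.e.\ $4 - 1/(8\eta_{t-1})$, and this is nonpositive only if $\eta_{t-1}\leq 1/32$; but $\eta_0=\eta_1=1/(16\sqrt 2)\approx 1/22.6 > 1/32$, so the telescoping fails. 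You implicitly notice this and try to repair it with a recursion $r_{t-1}^2 \leq C\eta_{t-2}^2(32r_{t-2}^2 + 2V_{t-1}')$ that supposedly ``contracts geometrically.'' No such bound is proved anywhere in the paper, and it would be hard to prove: in Lemma~\ref{lem_stability_distance}'s decomposition $r_t\leq d_1 + d_2/(1-d_1) + d_3/\bigl((1-d_1)(1-d_2)(1-d_3)\bigr)$, the terms $d_2$ (learning-rate change) and $d_3$ (optimism offset, with $d_3\leq\eta_t/(1-\eta_t)$) do not vanish with $r_{t-1}$ or $V_t'$, so $r_{t-1}$ is \emph{not} controlled by $\eta_{t-2}\sqrt{r_{t-2}^2 + V_{t-1}'}$. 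The recursion is not a known fact you can cite; it is a new and probably false claim.

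A second, separate gap: you assert $r_t<1$ ``by the self-concordance stability underlying Theorem~\ref{thm_opt_ftrl}.'' That theorem (and Lemma~\ref{lem_stability}) gives a bound on the per-round \emph{regret} increment, not on $\norm{x_{t+1}-x_t}_{x_t}$. Establishing $r_t\leq 1$ requires the dedicated Lemma~\ref{lem_stability_distance}, which decomposes $x_t\to x_{t+1}$ through two auxiliary minimizers, invokes Theorem~\ref{thm_hessian_map}, and—crucially—uses the barrier parameter of the log-barrier (Lemma~\ref{lem_barrier_parameter}) to control the learning-rate-change term; its hypotheses \eqref{eq_eta_assumption} must be checked one by one against the specified $\{\eta_t\}$. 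You cannot wave this away.

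Finally, your closing telescoping inequality $(V_t-V_{t-1})/\sqrt{s_{t-1}}\leq 2(\sqrt{s_t}-\sqrt{s_{t-1}})$ is reversed: since $s_t\geq s_{t-1}$, one has $\sqrt{s_t}-\sqrt{s_{t-1}}=(s_t-s_{t-1})/(\sqrt{s_t}+\sqrt{s_{t-1}})\leq(s_t-s_{t-1})/(2\sqrt{s_{t-1}})$. The correct manipulation, as in the paper, is to first pass from $\sqrt{512d+2+V_{t-1}}$ to $\sqrt{512d+V_t}$ (using $V_t\leq V_{t-1}+2$) and only then telescope against $\sqrt{s_t}$ in the denominator. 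This last point is a slip rather than a conceptual flaw, but the first two items are real gaps that need the paper's actual machinery to fix.
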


By the definition of the dual local norm \eqref{eq_local_norm} and Lemma~\ref{lem_simplex}, 
\begin{equation*}
	V_T = \sum_{t=2}^T \norm{x_{t-1} \odot \nabla f_t(x_{t-1}) - x_{t-1} \odot \nabla f_{t-1}(x_{t-1})}_{2}^2 \leq 2(T-1).
\end{equation*}
As a result, the worst-case regret of Algorithm~\ref{alg_last_grad_opt_lb_ftrl} is $O(\sqrt{dT}\log T)$, comparable to the regret bounds of the barrier subgradient method \citep{Nesterov2011}, Soft-Bayes \citep{Orseau2017}, and LB-OMD \citep{Tsai2023} up to logarithmic factors.
On the other hand, if the price relatives remain constant over rounds, then $V_T=0$ and $R_T = O(d\log T)$.

\paragraph{Time Complexity.}
The vectors $g_t$ and $p_{t+1}$, as well as the the quantity $V_t$, can be computed using $O(d)$ arithmetic operations.
By Theorem~\ref{thm_existence}, the iterate $x_{t+1}$ can be computed in $\tildeO(d)$ arithmetic operations.
Therefore, the per-round time of Algorithm~\ref{alg_last_grad_opt_lb_ftrl} is $\tildeO(d)$.

\subsection{Small-Loss Bound}
In this sub-section, we introduce Algorithm~\ref{alg_ada_lb_ftrl}, Adaptive LB-FTRL, and Theorem~\ref{thm_ada_lb_ftrl}, the first small-loss bound for OPS.
The algorithm is an instance of Algorithm~\ref{alg_opt_lb_ftrl} with $p_t=0$.
Then, $\hat{g}_{t+1}=0$ and $x_{t+1}$ is directly given by Line 8 of Algorithm~\ref{alg_ada_lb_ftrl}.  
Note that Theorem~\ref{thm_existence} still applies. 

\begin{algorithm}[ht]
\caption{Adaptive LB-FTRL for OPS} 
\label{alg_ada_lb_ftrl}
\begin{algorithmic}[1]
\STATE $h ( x ) \coloneqq - d \log d - \sum_{i = 1}^d \log x ( i )$. 
\STATE $x_1 \leftarrow \argmin_{x\in\Delta} \eta_0^{-1}h(x)$.
\FORALL{$t \in \mathbb{N}$}
	\STATE Announce $x_{t}$ and receive $a_t$.
	\STATE $g_t \leftarrow \nabla f_t(x_t) = - \frac{a_t}{\braket{ a_t, x_t }}$.
	\STATE $\alpha_t \leftarrow \alpha_{x_t}(g_t)$ (see the definition \eqref{eq_alpha}).
	\STATE $\eta_t \leftarrow \frac{\sqrt{d}}{ \sqrt{4d + 1 +\sum_{\tau=1}^t \norm{g_\tau + \alpha_\tau e}_{x_\tau,\ast}^2}}$.
	\STATE $x_{t+1}\leftarrow\argmin_{x\in\Delta} \braket{g_{1:t}, x } + \eta_t^{-1}h(x)$. 
\ENDFOR
\end{algorithmic}
\end{algorithm}

The proof of Theorem~\ref{thm_ada_lb_ftrl} is provided in Appendix~\ref{app_small_loss}.

\begin{theorem}\label{thm_ada_lb_ftrl}
Let $L_T^\star = \min_{x\in\Delta}\sum_{t=1}^T f_t(x)$. 
Then, under Assumption~\ref{assumpt_inf}, Algorithm~\ref{alg_ada_lb_ftrl} satisfies
\[
	R_T(x) \leq 2(\log T + 2)\sqrt{4d L_T^\star + 4d^2 + d} + d(\log T + 2)^2.
\]
\end{theorem}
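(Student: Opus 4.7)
The strategy is to recognize Algorithm~\ref{alg_ada_lb_ftrl} as an instance of Algorithm~\ref{alg_opt_lb_ftrl} with $p_t \equiv 0$, apply the local-norm bound in Corollary~\ref{cor_opt_lb_ftrl} with a carefully chosen centering shift $u_t$, and then close the loop with the log-loss self-bounding property from Lemma~\ref{lem_smooth_small_loss}.

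First, I would choose $u_t \coloneqq \alpha_t e$ with $\alpha_t = \alpha_{x_t}(g_t)$ as defined on Line~6 of the algorithm. The constraint $\braket{u_t, x_t - x_{t+1}} = \alpha_t(\braket{e, x_t} - \braket{e, x_{t+1}}) = 0$ holds because $x_t, x_{t+1} \in \Delta$. By Remark~\ref{rem_alpha}, $\alpha_t$ minimizes $\norm{g_t + \alpha e}_{x_t,\ast}^2$ over $\alpha \in \R$, so $\norm{g_t + \alpha_t e}_{x_t,\ast} \leq \norm{g_t}_{x_t,\ast} \leq 1$ by Lemma~\ref{lem_bounded_local_norm}. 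Since $\eta_{t-1} \leq \sqrt{d/(4d+1)} \leq 1/2$ by Line~7, the hypothesis $\eta_{t-1}\norm{(g_t + u_t)\odot x_t - p_t}_2 \leq 1/2$ of Corollary~\ref{cor_opt_lb_ftrl} is satisfied.

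Setting $s_t \coloneqq \norm{g_t + \alpha_t e}_{x_t,\ast}^2$ and $S_T \coloneqq \sum_{t=1}^T s_t$, Corollary~\ref{cor_opt_lb_ftrl} gives $R_T(x) \leq d\log T/\eta_T + \sum_{t=1}^T \eta_{t-1} s_t + 2$. Substituting the closed-form learning rate turns the first term into $\sqrt{d}\log T \cdot \sqrt{4d+1+S_T}$. The adaptive sum $\sum_t \eta_{t-1} s_t$ is handled by the standard telescoping estimate $\sum_t a_t/\sqrt{b + \sum_{\tau<t} a_\tau} = O\bigl(\sqrt{b + \sum_t a_t}\bigr)$, using $s_t \leq 1$ to compare $4d+1+S_{t-1}$ with $4d+1+S_t$; this yields an $O(\sqrt{d}\sqrt{4d+1+S_T})$ contribution. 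Combining gives $R_T(x) \leq (\log T + c)\sqrt{d}\sqrt{4d+1+S_T} + 2$ for a small absolute constant $c$. Next, Lemma~\ref{lem_smooth_small_loss} yields $s_t \leq 4 f_t(x_t)$, hence $S_T \leq 4\sum_t f_t(x_t) = 4\bigl(R_T(x^\star) + L_T^\star\bigr)$ with $x^\star \in \argmin_{x \in \Delta}\sum_t f_t(x)$, and I apply the above with $x = x^\star$ to obtain the self-referential inequality
\[
R_T(x^\star) \leq (\log T + c)\sqrt{d}\,\sqrt{4d + 1 + 4 L_T^\star + 4 R_T(x^\star)} + 2.
\]

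The main obstacle is unwinding this self-referential inequality while keeping the constants tight enough to land at the stated form. The plan is to use $\sqrt{a+b} \leq \sqrt{a} + \sqrt{b}$ to separate the $4R_T(x^\star)$ term from $4d + 1 + 4L_T^\star$, then apply the AM-GM step $2A\sqrt{R_T(x^\star)} \leq R_T(x^\star)/2 + 2A^2$ with $A = (\log T + c)\sqrt{d}$ to absorb the square-root-in-$R_T$ term into the left-hand side. This yields a bound of the announced shape $R_T(x^\star) \leq 2A\sqrt{4d+1+4L_T^\star} + O(A^2)$; honest tracking of the adaptive-sum constant brings the coefficients down to $\log T + 2$ and $d(\log T + 2)^2$. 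Finally, because $R_T(x) \leq R_T(x^\star)$ for every $x \in \Delta$, the bound extends from the minimizer to arbitrary $x$, completing the proof.
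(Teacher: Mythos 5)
Your proposal is correct and follows essentially the same route as the paper's proof: instantiate Corollary~\ref{cor_opt_lb_ftrl} with $p_t=0$ and $u_t=\alpha_t e$, verify the local-norm step-size condition via Remark~\ref{rem_alpha} and Lemma~\ref{lem_bounded_local_norm}, handle the adaptive sum by the standard telescoping/integral estimate, invoke Lemma~\ref{lem_smooth_small_loss} to replace $\sum_t\norm{g_t+\alpha_t e}_{x_t,\ast}^2$ by $4\sum_t f_t(x_t)$, and finally resolve the implicit inequality $R_T\leq(\log T+2)\sqrt{4d(R_T+L_T^\star)+4d^2+d}$. The only cosmetic difference is that you solve this last self-referential inequality by hand with $\sqrt{a+b}\leq\sqrt a+\sqrt b$ plus AM-GM, whereas the paper cites Lemma~4.23 of Orabona's notes for exactly that step.
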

Under Assumption~\ref{assumpt_inf},
\begin{equation*}
	L_T^\star
	= \min_{x\in\Delta} \sum_{t=1}^T -\log\braket{a_t,x}
	\leq \sum_{t=1}^T -\log\frac{\norm{a}_1}{d}
	\replaced{=}{\leq} \sum_{t=1}^T \log d
	= T\log d.
\end{equation*}
Assuming $T > d$, the worst-case regret bound is $O(\sqrt{dT\log d}\log T)$, also comparable to the regret bounds of the barrier subgradient method \citep{Nesterov2011}, Soft-Bayes \citep{Orseau2017}, and LB-OMD \citep{Tsai2023} up to logarithmic factors.
On the other hand, suppose that there exists an $i^\star\in[d]$ such that $a_t(i^\star)=1$ for all $t\in[T]$.
That is, the $i^\star$-th investment alternative always outperforms all the other investment alternatives.
\replaced{Then}{Therefore}, $L_T^\star = 0$ and $\smash{ R_T = O(d \log^2 T ) }$.

\added{
Assumption~\ref{assumpt_inf} does not restrict the applicability of Theorem~\ref{thm_ada_lb_ftrl}, as mentioned earlier.
If the assumption does not hold, Theorem~\ref{thm_ada_lb_ftrl} is applied with respect to the normalized price relatives $\tilde{a}_t = a_t / \norm{a_t}_\infty$.
In this case, $L_T^\star$ is defined with respect to $\{ \tilde{a}_t \}$.
}

\paragraph{Time Complexity.}

Since $\smash{ \norm{g_t+\alpha_t e}_{x_t,\ast}^2 = \norm{x_t\odot g_t + \alpha_t x_t}_2^2 }$, it is obvious that computing $\alpha_t$, $\eta_t$, and $g_{1:t}$ can be done in $O ( d )$ arithmetic operations. 
By Theorem~\ref{thm_existence}, the iterate $x_{t+1}$ can be computed in $\tildeO(d)$ arithmetic operations.
Hence, the per-round time of Algorithm~\ref{alg_ada_lb_ftrl} is $\tildeO(d)$.
\section{Concluding Remarks}
We have presented Theorem~\ref{thm_last_grad_opt_lb_ftrl} and Theorem~\ref{thm_ada_lb_ftrl}, the first gradual-variation and small-loss bounds for OPS that do not require the no-junk-bonds assumption, respectively.
The algorithms exhibit sublinear regrets in the worst cases and achieve logarithmic regrets in the best cases, with per-round time almost linear in the dimension.
They mark the first data-dependent bounds for non-Lipschitz non-smooth losses.

A potential direction for future research is to extend our analyses for a broader class of online convex optimization problems. 
In particular, it remains unclear how to extend the two smoothness characterizations (Lemma~\ref{lem_smooth_gradual_variation} and Lemma~\ref{lem_smooth_small_loss}) for other loss functions. 

\citet{Orabona2012} showed that achieving a regret rate of $O ( d^2 + \log  L_T^\star  )$ is possible under the no-junk-bonds assumption, where $L_T^\star$ denotes the cumulative loss of the best constant rebalanced portfolio. 
This naturally raises the question: can a similar regret rate be attained without relying on the no-junk-bonds assumption?
If so, then the regret rate will be constant in $T$ in the best cases and logarithmic in $T$ in the worst cases. 
However, considering existing results in probability forecasting with the logarithmic loss \citep[Chapter 9]{Cesa-Bianchi2006}---a special case of OPS without the no-junk-bonds assumption---such a data-dependent regret rate seems improbable.
Notably, classical rate-optimal algorithms for probability forecasting with the logarithmic loss, such as Shtarkov's minimax-optimal algorithm, the Laplace mixture, and the Krichevsky-Trofimov mixture, all achieve logarithmic regret rates for all possible data sequences \citep[Chapter 9]{Cesa-Bianchi2006}.

\added{
\citet{Zhao2021a} showed that for Lipschitz and smooth losses, an algorithm with a gradual-variation bound automatically achieves a small-loss bound.
Generalizing their argument for non-Lipschitz non-smooth losses is a natural direction to consider.
}

\begin{ack}
The authors are supported by the Young Scholar Fellowship (Einstein Program) of the National Science and Technology Council of Taiwan under grant number NSTC 112-2636-E-002-003, by the 2030 Cross-Generation Young Scholars Program (Excellent Young Scholars) of the National Science and Technology Council of Taiwan under grant number NSTC 112-2628-E-002-019-MY3, and by the research project ``Pioneering Research in Forefront Quantum Computing, Learning and Engineering'' of National Taiwan University under grant number NTU-CC-112L893406. 
\end{ack}

\appendix
\section{A Summary of OPS Algorithms}
\label{app:comparison}

\aboverulesep=0ex
\belowrulesep=0ex

\renewcommand{\arraystretch}{1.5}

\begin{table}[H]
\centering
\caption{A summary of existing algorithms for online portfolio selection without the no-junk-bonds assumption.
Assume $T\gg d$.\vspace{5pt}}
\label{tab:comparison}
\begin{tabular}{l|c|c|c}
\toprule
\multirow{2}{*}{Algorithms} & \multicolumn{2}{c|}{Regret ($\tildeO$)} & \multirow{2}{*}{ Per-round time ($\tildeO$) } \\
\cline{2-3}
& Best-case & Worst-case & \\
\midrule
Universal Portfolio \citep{Cover1991,Kalai2000} & \multicolumn{2}{c|}{$d\log T$} & $d^4 T^{14}$ \\
\hline
$\widetilde{\text{EG}}$ \citep{Helmbold1998,Tsai2023} & \multicolumn{2}{c|}{$d^{1/3}T^{2/3}$} & $d$ \\
\hline
BSM \citep{Nesterov2011}, Soft-Bayes \citep{Orseau2017}, LB-OMD \citep{Tsai2023} & \multicolumn{2}{c|}{$\sqrt{dT}$} & $d$ \\
\hline
ADA-BARRONS \citep{Luo2018} & \multicolumn{2}{c|}{$d^2\log^4 T$} & $d^{2.5}T$ \\
\hline
LB-FTRL without linearized losses \citep{van-Erven2020} & \multicolumn{2}{c|}{$d\log^{d+1} T$} & $d^{2}T$ \\
\hline
PAE+DONS \citep{Mhammedi2022} & \multicolumn{2}{c|}{$d^2\log^5 T$} & $d^{3}$ \\
\hline
BISONS \citep{Zimmert2022} & \multicolumn{2}{c|}{$d^2\log^2 T$} & $d^{3}$ \\
\hline
VB-FTRL \citep{Jezequel2022} & \multicolumn{2}{c|}{$d\log T$} & $d^{2}T$ \\
\hline
\textbf{This work} (Algorithm~\ref{alg_last_grad_opt_lb_ftrl}) & $d\log T$ & $\sqrt{dT}$ & $d$ \\
\hline
\textbf{This work}  (Algorithm~\ref{alg_ada_lb_ftrl}) & $d\log^2 T$ & $\sqrt{dT}$ & $d$ \\
\hline
\bottomrule
\end{tabular}
\end{table}
\section{Loss Functions in OPS are not Geodesically Convex}\label{app_not_gconvex}
Let $h(x)\coloneqq - d \log d -\sum_{i=1}^d\log x(i)$ be the log-barrier.
Let $(\mathcal{M},g)$ be the Hessian manifold induced by the log-barrier, where $\mathcal{M} = \ri\Delta$ and $g$ is the Riemannian metric defined by $\smash{ \braket{u,v}_x \coloneqq \braket{u,\nabla^{2}h(x) v} }$.

\begin{lemma}[{\citet[Theorem 3.1]{Gzyl2020}}]\label{lem_geodesic}
Let $x,y\in\ri\Delta$.
The geodesic $\gamma:[0,1]\to\ri\Delta$ connecting $x,y$ is given by
\[
	\gamma(t)(i) = \frac{x(i)^{1-t}y(i)^t}{\sum_{j=1}^d x(j)^{1-t}y(j)^t }, \quad \forall i\in[d] , 
\]
where $\gamma ( t ) ( i )$ denotes the $i$-th entry of the vector $\gamma ( t ) \in \ri \Delta$. 
\end{lemma}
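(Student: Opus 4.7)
The plan is to prove the formula by identifying $\gamma(t)$ with a straight line in the natural $e$-affine coordinates on the Hessian manifold $(\ri\Delta, g)$. The key observation is that for the Hessian manifold induced by the log-barrier, the canonical $e$-coordinates are $\theta_i \coloneqq \log(x(i)/x(d))$ for $i = 1, \ldots, d-1$, which globally parametrize $\ri\Delta$, with inverse $x(i) = e^{\theta_i}/(1 + \sum_{k<d} e^{\theta_k})$ for $i < d$ and $x(d) = 1/(1 + \sum_{k<d} e^{\theta_k})$.

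Given $x, y \in \ri\Delta$ with $e$-coordinates $\theta(x)$ and $\theta(y)$, I would consider the straight-line interpolant $\theta(t) \coloneqq (1-t)\theta(x) + t\theta(y)$ in $\R^{d-1}$. A direct algebraic computation---using that $e^{\theta(t)_i} = (x(i)/x(d))^{1-t} (y(i)/y(d))^t$ and that $1 + \sum_{k<d} e^{\theta(t)_k} = Z(t)/(x(d)^{1-t} y(d)^t)$, where $Z(t) \coloneqq \sum_k x(k)^{1-t} y(k)^t$---shows that the pushforward of $\theta(t)$ under the inverse chart is exactly $\gamma(t)(i) = x(i)^{1-t} y(i)^t / Z(t)$ for every $i$. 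Since straight lines in $\theta$-coordinates are autoparallel for the exponential ($e$-)connection, $\gamma$ is a geodesic of the Hessian manifold in the dually-flat sense.

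The main obstacle is pinning down the correct notion of geodesic: on a Hessian manifold several affine connections are natural (the Levi-Civita connection of $g$, and the primal/dual flat connections of the Hessian structure), and one must confirm that the lemma refers to the $e$-connection rather than the Levi-Civita connection. A direct attempt at the Levi-Civita verification---say via the isometric embedding $\Phi : \R_{++}^d \to \R^d$, $\Phi(x)_i \coloneqq \log x(i)$, which sends $(\R_{++}^d, g)$ to Euclidean $\R^d$ and $\ri\Delta$ to the hypersurface $S \coloneqq \{\xi \in \R^d : \sum_i e^{\xi_i} = 1\}$---would require the acceleration of $\Phi(\gamma(t))$ to lie in the Euclidean-normal direction $e^\xi$ at each $\xi \in S$, but a routine computation gives that this acceleration is actually parallel to the all-ones vector $\mathbf{1}$, which fails for generic $x, y$. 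The $e$-connection interpretation sidesteps this issue cleanly and matches the framework of \citet{Gzyl2020}.
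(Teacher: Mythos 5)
The paper does not prove this lemma; it is cited verbatim from Gzyl (2020), so there is no internal argument to compare against. Your algebraic derivation is correct as far as it goes: with $\theta_i = \log(x(i)/x(d))$, the straight line $\theta(t) = (1-t)\theta(x)+t\theta(y)$ pushes forward under the inverse chart to exactly $\gamma(t)(i) = x(i)^{1-t}y(i)^t/Z(t)$, and the intermediate identities you state (for $e^{\theta(t)_i}$ and for $1+\sum_{k<d}e^{\theta(t)_k}$) all check out.

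Your worry about which ``geodesic'' is meant, however, is not a side remark---it is the crux, and your own Levi--Civita computation shows there is a real issue. Under $\xi_i = \log x(i)$ the metric $g = \nabla^2 h$ becomes Euclidean and $\ri\Delta$ maps to the hypersurface $S=\{\xi : \sum_i e^{\xi_i}=1\}$, whose Euclidean normal at $\xi$ is $(e^{\xi_1},\ldots,e^{\xi_d})$; along $\gamma$ one finds $\ddot\xi(t)$ is a multiple of $\mathbf 1$, so the curve is a Levi--Civita geodesic of $(\ri\Delta, g|_{T\Delta})$ only at the barycenter. It is also not the primal-flat or dual-flat geodesic of the Hessian structure on $\Delta$: those are straight lines in $x$ and in $\nabla h(x)$ respectively, i.e.\ mixture and harmonic interpolations, neither of which is the normalized geometric mean. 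So the $\theta$-coordinates you use are extraneous to the log-barrier Hessian structure on $\Delta$, and declaring the $e$-connection as the intended one is an unproved identification, not a consequence of the lemma's hypotheses.

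There is a cleaner resolution that you may want instead of the $e$-connection reinterpretation. The Levi--Civita geodesics of $(\R_{++}^d, \nabla^2 h)$ are the unnormalized geometric means $x^{1-t}\odot y^t$, and the normalization map $\pi(x)=x/\norm{x}_1$ is a Riemannian submersion onto $\Delta$ equipped with the \emph{quotient} metric $g^{\mathrm{quot}}_x(u,u) = \sum_i (u(i)/x(i))^2 - \tfrac1d\bigl(\sum_i u(i)/x(i)\bigr)^2$ (the Aitchison/CLR metric); in CLR coordinates the quotient metric is Euclidean and straight lines map to exactly $x^{1-t}\odot y^t/Z(t)$. So the stated curve \emph{is} a Riemannian geodesic, but for $g^{\mathrm{quot}}$, not for the restriction of $\nabla^2 h$ to $T\Delta$ as the paper's wording suggests. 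Your proposal would be complete if you replaced the appeal to the $e$-connection by this submersion argument and noted explicitly which metric is being used; as written, the proposal leaves the connection/metric unidentified, which is precisely where the argument can fail.
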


The proposition below shows that the loss functions in OPS are not geodesically convex on $( \mathcal{M}, g )$ in general. 

\begin{proposition}
Let $a = ( 2, 1 ) \in \R_{++}^2$. 
The function $f(x) = - \log \braket{ a, x }$ is not geodesically convex on $(\mathcal{M},g)$.
\end{proposition}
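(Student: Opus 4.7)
The plan is to disprove midpoint convexity along a single explicit geodesic, which already contradicts geodesic convexity. I will pick two points $x,y \in \ri\Delta$ both biased toward the second coordinate, use Lemma~\ref{lem_geodesic} to evaluate $\gamma(1/2)$ in closed form, and verify the reverse strict inequality $f(\gamma(1/2)) > \tfrac{1}{2}(f(x)+f(y))$.

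Concretely, I would take $x = (1/10,\,9/10)$ and $y = (1/5,\,4/5)$. By Lemma~\ref{lem_geodesic}, the midpoint of the geodesic is the normalization of the entrywise geometric mean of $x$ and $y$, namely $(1,6)/\sqrt{50}$, so $\gamma(1/2) = (1/7,\,6/7)$. With $a = (2,1)$, a direct computation gives $\braket{a,\gamma(1/2)} = 8/7$, $\braket{a,x} = 11/10$, and $\braket{a,y} = 6/5$; a short manipulation then yields
\[
f(\gamma(1/2)) - \tfrac{1}{2}\bigl(f(x)+f(y)\bigr) \;=\; \log\frac{7\sqrt{33}}{40}.
\]
To finish, it suffices to observe that $49 \cdot 33 = 1617 > 1600$, so the right-hand side is strictly positive and midpoint convexity fails.

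The only nontrivial aspect is choosing the pair $(x,y)$, and I would locate it as follows. Writing $\phi_i(t) \coloneqq x_i^{1-t}y_i^t$ and $c_i \coloneqq \log(y_i/x_i)$, both $\log \braket{a,\phi(t)}$ and $\log \norm{\phi(t)}_1$ are log-moment-generating functions of $c$ with respect to probability weights proportional to $a_i \phi_i(t)$ and $\phi_i(t)$, respectively. Consequently $(f \circ \gamma)''(t)$ is the difference of the two corresponding weighted variances of $c$, which in the $d=2$ case equals $(c_1-c_2)^2$ times a rational expression in $\phi_1(t)/\phi_2(t)$; with $a=(2,1)$ this expression is negative iff $\phi_1/\phi_2 < 1/\sqrt{2}$. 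The chosen pair satisfies $x_1/x_2 = 1/9$ and $y_1/y_2 = 1/4$, so $\phi_1(t)/\phi_2(t)$ stays below $1/\sqrt{2}$ throughout $[0,1]$; thus $f \circ \gamma$ is strictly concave along the whole geodesic, consistent with the numerical violation computed above.
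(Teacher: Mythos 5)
Your proof is correct, and it takes a genuinely different route from the paper's. The paper works with $x = (1/2,\,1/2)$ and $y = (1/(1+\eu),\,\eu/(1+\eu))$, writes out the geodesic $\gamma(t) = (1/(1+\eu^t),\,\eu^t/(1+\eu^t))$, and then computes the second derivative explicitly,
\[
(f\circ\gamma)''(t) = \frac{(2-\eu^{2t})\eu^t}{(2+\eu^t)^2(1+\eu^t)^2},
\]
observing that this is negative for $t > (\log 2)/2$, so $f\circ\gamma$ is not convex. You instead take $x = (1/10,\,9/10)$, $y = (1/5,\,4/5)$, compute the geodesic midpoint $\gamma(1/2) = (1/7,\,6/7)$ via the normalized geometric mean, and verify the strict midpoint-convexity violation $f(\gamma(1/2)) - \tfrac12(f(x)+f(y)) = \log(7\sqrt{33}/40) > 0$ from the integer inequality $49\cdot 33 = 1617 > 1600$. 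Your arithmetic is correct throughout, and since a convex function on $[0,1]$ must satisfy midpoint convexity, the failure at $t=1/2$ suffices. The trade-off: your argument is more elementary --- no differentiation, just rational arithmetic and one square-root comparison --- whereas the paper's second-derivative computation pinpoints exactly where concavity sets in along the geodesic. Your closing log-MGF/variance heuristic for locating the pair $(x,y)$ is also sound (for $d=2$, $(f\circ\gamma)'' < 0$ iff $\phi_1/\phi_2 < 1/\sqrt{2}$ given $a=(2,1)$), and it in fact recovers the structural information that the paper's approach gives directly; it is not needed for the proof itself but is a nice explanation of why the chosen points work.
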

\begin{proof}
It suffices to show that $f \circ \gamma: \interval{0}{1} \to \R$ is not a convex function \citep[Definition 11.3]{Boumal2023}.
Let $x = ( 1/2, 1/2 )$ and $y = ( 1/(1+\eu), \eu/(1+\eu) )$.
By Lemma~\ref{lem_geodesic}, the geodesic connecting $x,y$ is $\gamma(t) = ( 1/(1 + \eu^t), \eu^t/(1 + \eu^t))$.
We write 
\[
	(f\circ\gamma)''(t) = \frac{(2- \eu^{2t}) \eu^t}{(2+ \eu^t)^2(1+ \eu^t)^2} < 0, \quad \forall t > \frac{\log 2}{2} , 
\]
showing that $f \circ \gamma$ is not convex. 
The proposition follows. 
\end{proof}
\section{Properties of Self-Concordant Functions}

This section reviews properties of self-concordant functions relevant to our proofs. 
Readers are referred to the book by \citet{Nesterov2018} for a complete treatment. 

Throughout this section, let $\varphi$ be a self-concordant function (Definition \ref{defn_self_concordance}) and $\norm{ \cdot }_x$ be the associated local norm, i.e., 
\[
\norm{ u }_x \coloneqq \sqrt{ \braket{ u, \nabla ^ 2 \varphi ( x ) u } } , \quad \forall u \in \R^d, x \in \dom \varphi. 
\]

Self-concordant functions are neither smooth nor strongly convex in general. 
The following theorem indicates that nevertheless, they are locally smooth and strongly convex. 

\begin{theorem}[{\citet[Theorem 5.1.7]{Nesterov2018}}]\label{thm_hessian_map}
For any $x,y\in\dom\varphi$ such that $r\coloneqq\norm{y-x}_x < 1 / M$,
\begin{align*}
	(1-Mr)^2 \nabla^2 \varphi(x) \leq \nabla^2 \varphi(y) \leq \frac{1}{(1-Mr)^2} \nabla^2 \varphi(x) . 
\end{align*}
\end{theorem}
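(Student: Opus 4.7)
The statement is the classical Hessian stability theorem for self-concordant functions, so the plan is to follow the standard differential-inequality argument along the straight-line path from $x$ to $y$. Parametrize $x_t \coloneqq x + t(y-x)$ for $t\in[0,1]$, and for an arbitrary fixed $u\in\R^d$ set
\[
\phi(t) \coloneqq \braket{u,\nabla^2\varphi(x_t)u}, \qquad \psi(t) \coloneqq \braket{y-x,\nabla^2\varphi(x_t)(y-x)} .
\]
Differentiating under the inner product yields
\[
\phi'(t) = D^3\varphi(x_t)[y-x,u,u], \qquad \psi'(t) = D^3\varphi(x_t)[y-x,y-x,y-x] .
\]
The goal is to compare $\phi(1)$ with $\phi(0)$, since plugging this comparison back into the identity $\norm{u}_y^2 = \phi(1)$ and $\norm{u}_x^2 = \phi(0)$ directly gives the desired matrix inequality.

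The first technical step is to upgrade Definition~\ref{defn_self_concordance} from the diagonal trilinear bound to a fully polarized one, namely
\[
\bigl|D^3\varphi(z)[h_1,h_2,h_3]\bigr| \leq 2M\,\norm{h_1}_z\,\norm{h_2}_z\,\norm{h_3}_z ,\qquad \forall\, h_1,h_2,h_3\in\R^d .
\]
This follows from the standard polarization-of-a-symmetric-trilinear-form argument, applied to the cubic form $u\mapsto D^3\varphi(z)[u,u,u]$ controlled by the quadratic form $u\mapsto \braket{u,\nabla^2\varphi(z)u}^{3/2}$. Using this with $h_1 = y-x$ and $h_2=h_3=u$ gives
\[
|\phi'(t)| \leq 2M\,\norm{y-x}_{x_t}\,\phi(t),
\]
while taking $h_1=h_2=h_3 = y-x$ gives $|\psi'(t)| \leq 2M\,\psi(t)^{3/2}$.

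Next I would solve the $\psi$-inequality first to control $\norm{y-x}_{x_t}$ uniformly on $[0,1]$. The ODE inequality $|\psi'|\le 2M\psi^{3/2}$ is equivalent to $\bigl|(\psi^{-1/2})'\bigr|\le M$; integrating from $0$ to $t$ with $\psi(0)^{1/2}=r$ yields
\[
\norm{y-x}_{x_t} \;=\; \psi(t)^{1/2} \;\leq\; \frac{r}{1-Mtr},\qquad t\in[0,1),
\]
which is meaningful precisely because of the assumption $r<1/M$. Plugging this into $|\phi'(t)|/\phi(t) \leq 2M\norm{y-x}_{x_t}$ gives
\[
\bigl|(\log\phi(t))'\bigr| \;\leq\; \frac{2Mr}{1-Mtr} .
\]
Integrating from $0$ to $1$ yields $|\log\phi(1)-\log\phi(0)|\le -2\log(1-Mr)$, i.e.
\[
(1-Mr)^2\,\phi(0) \;\leq\; \phi(1) \;\leq\; (1-Mr)^{-2}\,\phi(0) .
\]
Since $u$ was arbitrary, this is exactly the claimed operator inequality on $\nabla^2\varphi(y)$ versus $\nabla^2\varphi(x)$.

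The main obstacle, such as it is, sits in two places: justifying the polarization upgrade of the self-concordance bound (which requires a short but careful symmetric-trilinear-form argument), and ensuring that the entire segment $\{x_t:t\in[0,1]\}$ stays in $\dom\varphi$ so the ODE manipulations make sense. The latter is handled by noting that $\psi(t)^{-1/2}\ge r^{-1}-Mt>0$ on $[0,1]$ means $x_t$ cannot reach the boundary of $\dom\varphi$ before $t=1$, using the standard fact that self-concordant functions blow up at the boundary of their domain along any direction with bounded local norm. Once both points are in place, the rest is routine ODE integration.
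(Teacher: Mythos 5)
Your proof is correct and is exactly the standard argument (essentially Nesterov's own) that the paper invokes by citation; the paper does not re-prove this result. One small point worth making explicit: the step $\bigl|(\log\phi)'\bigr| \leq 2M\norm{y-x}_{x_t}$ requires $\phi(t)>0$ along the segment, which holds when $\nabla^2\varphi$ is nondegenerate (as the paper assumes and as is automatic for the log-barrier); in the degenerate case one would additionally invoke the constancy of $\ker\nabla^2\varphi$ on $\dom\varphi$ to dispose of directions $u$ with $\phi(0)=0$.
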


Define $\omega(t) \coloneqq t - \log(1+t)$. 
Let $\omega_\ast$ be its Fenchel conjugate, i.e., $\omega_\ast(t) = -t-\log(1-t)$.

\begin{theorem}[{\citet[Theorem 5.1.8 and 5.1.9]{Nesterov2018}}]\label{thm_self_concordant_bound}
For any $x,y\in\dom\varphi$ with $r\coloneqq\norm{y-x}_x$, 
\begin{equation}\label{eq_self_concordant_strongly_convex}
	\begin{split}
	&\braket{\nabla\varphi(y) - \nabla\varphi(x),  y-x} \geq \frac{r^2}{1 + Mr}, \\
	&\varphi(y) \geq \varphi(x) + \braket{\nabla \varphi(x), y-x} + \frac{1}{M^2}\omega(M r).
	\end{split}
\end{equation}
If $r<1/M$, then
\begin{equation}\label{eq_self_concordant_smooth}
	\begin{split}
	&\braket{\nabla\varphi(y) - \nabla\varphi(x),  y-x} \leq \frac{r^2}{1-Mr}, \\
	&\varphi(y) \leq \varphi(x) + \braket{\nabla \varphi(x), y-x} + \frac{1}{M^2}\omega_\ast(Mr).
	\end{split}
\end{equation}
\end{theorem}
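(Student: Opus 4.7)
The plan is to restrict $\varphi$ to the segment from $x$ to $y$, turn the self-concordance inequality into a scalar differential inequality for the one-dimensional section, and then integrate twice. Set $u \coloneqq y - x$ and $\phi(t) \coloneqq \varphi(x + tu)$ on $[0,1]$. Then $\phi''(t) = \braket{u, \nabla^2\varphi(x+tu)u}$, so $\phi''(0) = r^2$, and Definition~\ref{defn_self_concordance} specialized to the direction $u$ gives $|\phi'''(t)| \leq 2M\,\phi''(t)^{3/2}$.

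First I would introduce $q(t) \coloneqq \phi''(t)^{-1/2}$. The chain rule together with the bound above yields $|q'(t)| \leq M$, so $|q(t) - 1/r| \leq Mt$. Rearranging gives the one-dimensional analog of Theorem~\ref{thm_hessian_map},
\[
	\frac{r}{1+Mrt} \leq \phi''(t)^{1/2} \leq \frac{r}{1-Mrt},
\]
where the upper bound requires $Mrt < 1$ for positivity.

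Next, since $\braket{\nabla\varphi(y)-\nabla\varphi(x), y-x} = \phi'(1) - \phi'(0) = \int_0^1 \phi''(s)\,ds$, the elementary identities $\int_0^1 r^2/(1 \pm Mrs)^2\,ds = r^2/(1 \pm Mr)$ deliver the first inequalities of \eqref{eq_self_concordant_strongly_convex} and \eqref{eq_self_concordant_smooth}. Integrating once more, I would write $\varphi(y) - \varphi(x) - \braket{\nabla\varphi(x), y-x} = \int_0^1 (\phi'(s) - \phi'(0))\,ds$ and bound the inner difference using the gradient estimate already derived, but applied over $[0,s]$ rather than $[0,1]$. Evaluating the remaining integrals $\int_0^1 r^2 s/(1 \pm Mrs)\,ds$ via the antiderivative of $1/(1 \pm Mrs)$ yields precisely $(Mr - \log(1+Mr))/M^2$ and $(-Mr - \log(1-Mr))/M^2$, which match $\omega(Mr)/M^2$ and $\omega_\ast(Mr)/M^2$ by definition. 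The only real hazard is bookkeeping, namely keeping the sign of $\phi'''$ correct when dividing by $\phi''(t)^{3/2}$ and ensuring the direction of inequality survives taking reciprocals, and the restriction $r < 1/M$ in the upper-bound half of Theorem~\ref{thm_self_concordant_bound} is exactly what keeps all the integrands finite across $[0,1]$.
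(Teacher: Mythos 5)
The paper does not prove this statement; it is cited verbatim from Nesterov's \emph{Lectures on Convex Optimization} (Theorems 5.1.8 and 5.1.9), so there is no in-paper proof to compare against. Your rederivation is correct and is essentially the textbook argument: restrict to the segment, observe that $q(t) = \phi''(t)^{-1/2}$ has $|q'| \leq M$ by the self-concordance inequality, integrate once to get the two-sided bound on $\phi''(t)$, integrate again for the gradient inequalities, and a third time for the function-value inequalities, with the restriction $Mr < 1$ ensuring the denominators stay positive on $[0,1]$. The only point worth flagging is the implicit assumption that $\phi''(t) > 0$ on all of $[0,1]$, which is what makes $q(t)$ well defined; when $r > 0$ this follows from the same differential inequality (the standard "nondegeneracy propagates" lemma preceding these theorems in Nesterov's book), and when $r = 0$ the claimed inequalities are trivial, so the gap is cosmetic rather than substantive.
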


The following proposition bounds the growth of $\omega$ and $\omega_\ast$.
\begin{proposition}[{\citet[Lemma 5.1.5]{Nesterov2018}}]\label{prop_omega}
For any $t\geq 0$,
\[
	\frac{t^2}{2(1+t)} \leq \omega(t) \leq \frac{t^2}{2+t}.
\]
For any $t\in[0,1)$,
\[
	\frac{t^2}{2-t} \leq \omega_\ast(t) \leq \frac{t^2}{2(1-t)}.
\]
\end{proposition}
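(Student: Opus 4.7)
The plan is to verify each of the four inequalities by showing that an auxiliary single-variable function vanishing at $t=0$ has a derivative of the correct sign on the relevant interval. Observe that $\omega(0)=\omega_\ast(0)=0$ and each of the four proposed bounds also vanishes at $0$, so after passing to the derivative each inequality reduces to a pointwise comparison of rational functions.

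Concretely, for the lower bound $\omega(t)\geq t^2/(2(1+t))$, I would set $\phi(t)\coloneqq \omega(t) - t^2/(2(1+t))$, note $\phi(0)=0$, and use $\omega'(t)=t/(1+t)$ together with a direct calculation of the derivative of the right-hand side to obtain, after placing two fractions over a common denominator, $\phi'(t)=t^2/(2(1+t)^2)\geq 0$ on $[0,\infty)$, so $\phi\geq 0$. For the upper bound $\omega(t)\leq t^2/(2+t)$, I would similarly set $\psi(t)\coloneqq t^2/(2+t)-\omega(t)$, check $\psi(0)=0$, and reduce to verifying $\psi'(t) = t^2/((2+t)^2(1+t))\geq 0$, which is immediate after the same common-denominator manipulation. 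The two bounds on $\omega_\ast$ follow by exactly the same recipe, this time starting from $\omega_\ast'(t)=t/(1-t)$; the only role of the restriction $t\in[0,1)$ is to keep the derivative finite and the relevant denominators positive.

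I do not expect a substantive obstacle here: the entire proof is single-variable calculus plus a few lines of algebra, and the main risk is simply a clerical error in simplifying the derivatives. If a cleaner presentation is preferred, one can instead begin from the integral representations $\omega(t)=\int_0^t s/(1+s)\,ds$ and $\omega_\ast(t)=\int_0^t s/(1-s)\,ds$ and bound each integrand pointwise against the derivatives of the candidate bounds; this packages the same computation without introducing auxiliary functions. Because the result is a standard computation attributed to Nesterov's textbook, the author presumably cites it rather than reproducing the algebra, and any self-contained proof would follow the lines above.
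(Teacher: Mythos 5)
Your proof is correct, and you have correctly anticipated that the paper does not supply its own argument: it cites Lemma~5.1.5 of Nesterov's textbook directly, so there is no in-paper proof to compare against. The derivative computations check out in all four cases. Specifically, with $\omega'(t)=t/(1+t)$ one gets $\frac{d}{dt}\left[\omega(t)-\frac{t^2}{2(1+t)}\right]=\frac{t^2}{2(1+t)^2}\geq 0$ and $\frac{d}{dt}\left[\frac{t^2}{2+t}-\omega(t)\right]=\frac{t^2}{(2+t)^2(1+t)}\geq 0$ on $[0,\infty)$, and with $\omega_\ast'(t)=t/(1-t)$ the analogous differences have derivatives $\frac{t^2}{(1-t)(2-t)^2}$ and $\frac{t^2}{2(1-t)^2}$, both nonnegative on $[0,1)$. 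Since all four auxiliary functions vanish at $t=0$, the inequalities follow by integration. This is the standard elementary argument, and the integral-representation variant you mention is an equivalent packaging of the same comparison.
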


It is easily checked that both the loss functions in OPS and the log-barrier are not only self-concordant but also self-concordant barriers. 

\begin{definition} \label{def_barrier}
A $1$-self-concordant function $\varphi$ is said to be a $\nu$-self-concordant barrier if
\begin{equation}
	\braket{\nabla \varphi(x), u}^2 \leq \nu\braket{ u,\nabla^2\varphi(x)u } , \quad \forall x \in \dom \varphi, u \in \R^d . \notag
\end{equation}
\end{definition}

\added{
\begin{lemma}[{\citet[(5.4.3)]{Nesterov2018}}] \label{lem_barrier_parameter}
	The log-barrier \eqref{eq_log_barrier} is a $d$-self-concordant barrier.
\end{lemma}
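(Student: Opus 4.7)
The plan is to verify the two conditions in Definition~\ref{def_barrier}. The $1$-self-concordance of $h$ is already established in Lemma~\ref{lem_log_barrier_self_concordance}, so only the barrier inequality
\[
\braket{\nabla h(x), u}^2 \leq d \braket{u, \nabla^2 h(x) u}, \quad \forall x \in \R_{++}^d, \, u \in \R^d
\]
remains to be shown. First I would compute the gradient and Hessian of $h$ directly from the definition \eqref{eq_log_barrier}. Since $h$ is separable, one obtains $\nabla h(x) = -e \oslash x$ and $\nabla^2 h(x) = \mathrm{diag}(x(1)^{-2}, \ldots, x(d)^{-2})$, so the constant $-d\log d$ plays no role here.

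Substituting these expressions reduces the barrier inequality to
\[
\left( \sum_{i=1}^d \frac{u(i)}{x(i)} \right)^2 \leq d \sum_{i=1}^d \frac{u(i)^2}{x(i)^2} ,
\]
which is precisely the Cauchy--Schwarz inequality applied to the all-ones vector $e$ and the vector $u \oslash x$ in $\R^d$. This finishes the verification. There is no real obstacle: the only ingredient beyond Lemma~\ref{lem_log_barrier_self_concordance} is Cauchy--Schwarz, and the constant $d$ in the bound is exactly $\norm{e}_2^2$, which explains why this is the correct barrier parameter.
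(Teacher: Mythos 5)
Your proof is correct. The paper states this lemma as a citation to Nesterov's book without reproducing an argument; your derivation is exactly the standard one (compute $\nabla h(x)=-e\oslash x$ and $\nabla^2 h(x)=\mathrm{diag}(x(i)^{-2})$, then apply Cauchy--Schwarz to $e$ and $u\oslash x$), so it matches the cited source's approach and correctly identifies $d=\norm{e}_2^2$ as the barrier parameter.
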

}
\section{Proof of Theorem~\ref{thm_opt_ftrl} (Regret Analysis of Optimistic FTRL With Self-Concordant Regularizers)}
\label{app_general_bound}

\subsection{Proof of Theorem~\ref{thm_opt_ftrl}}
The proof of the following stability lemma is deferred to the next subsection.

\begin{lemma}\label{lem_stability}
Define $F_{t+1}(x) \coloneqq \braket{v_{1:t}, x } + \eta_t^{-1}\varphi(x)$. 
Let $\{\eta_t\}$ be a non-increasing sequence of strictly positive numbers such that $\eta_{t-1}\norm{v_t-\hat{v}_t}_{x_t,\ast} \leq 1/(2M)$. 
Then, Algorithm~\ref{alg_opt_ftrl} satisfies
\[
\begin{split}
	F_{t}(x_t) - F_{t+1}(x_{t+1}) + \braket{v_t,x_t}
	&\leq \braket{v_t - \hat{v}_t, x_t - x_{t+1}}
	- \frac{1}{\eta_{t-1}M^2} \omega(M\norm{x_{t+1}-x_t}_{x_t}) \\
	&\leq \frac{1}{\eta_{t-1}M^2}\omega_\ast(\eta_{t-1} M\norm{v_{t}-\hat{v}_t}_{x_t,\ast}) \\
	&\leq \eta_{t-1}\norm{v_{t}-\hat{v}_t}_{x_t,\ast}^2.
\end{split}
\]
\end{lemma}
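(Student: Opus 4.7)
The plan is to derive the three inequalities in order, with the first being the essential one and the other two following from routine Fenchel-Young manipulations. To prove the first inequality, I would introduce the auxiliary function
\[
\tilde{F}_t(x) \coloneqq \braket{v_{1:t-1} + \hat{v}_t, x} + \eta_{t-1}^{-1}\varphi(x) ,
\]
so that $x_t = \argmin_{x \in \mathcal{X}} \tilde{F}_t(x)$ by construction. A direct algebraic manipulation rewrites the left-hand side as
\begin{align*}
F_t(x_t) - F_{t+1}(x_{t+1}) + \braket{v_t, x_t}
&= \braket{v_t - \hat{v}_t, x_t - x_{t+1}} + \bigl(\tilde{F}_t(x_t) - \tilde{F}_t(x_{t+1})\bigr) \\
&\quad{}+ (\eta_{t-1}^{-1} - \eta_t^{-1})\,\varphi(x_{t+1}) .
\end{align*}
Since $\{\eta_t\}$ is non-increasing and $\varphi \geq 0$ on $\mathcal{X}$ (by the normalization $\min_{\mathcal{X}}\varphi = 0$ inherited from Theorem~\ref{thm_opt_ftrl}), the last term is non-positive. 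For the middle term, I would apply the self-concordant strong-convexity inequality of Theorem~\ref{thm_self_concordant_bound} to $\eta_{t-1}^{-1}\varphi$, add the affine part of $\tilde{F}_t$, and then invoke the first-order optimality of $x_t$ over $\mathcal{X}$ to cancel the linear remainder; evaluating at $y = x_{t+1}$ yields
\[
\tilde{F}_t(x_{t+1}) - \tilde{F}_t(x_t) \geq \frac{1}{\eta_{t-1} M^2}\,\omega\bigl(M \norm{x_{t+1} - x_t}_{x_t}\bigr) ,
\]
from which the first claimed inequality drops out.

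For the second inequality, I would bound $\braket{v_t - \hat{v}_t, x_t - x_{t+1}} \leq \norm{v_t - \hat{v}_t}_{x_t,\ast}\,\norm{x_t - x_{t+1}}_{x_t}$ by Cauchy-Schwarz in the local norm and then apply the Fenchel-Young inequality $ab \leq \omega(b) + \omega_\ast(a)$ (which is valid for $a \in [0,1)$ and $b \geq 0$ because $\omega_\ast$ is the Fenchel conjugate of $\omega$) with $a = \eta_{t-1} M \norm{v_t - \hat{v}_t}_{x_t,\ast} \leq 1/2$ and $b = M \norm{x_t - x_{t+1}}_{x_t}$. Dividing both sides by $\eta_{t-1} M^2$ reproduces the middle bound exactly. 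The third inequality then follows from Proposition~\ref{prop_omega}: on $[0, 1/2]$ it gives $\omega_\ast(s) \leq s^2/(2(1-s)) \leq s^2$, so substituting $s = \eta_{t-1} M \norm{v_t - \hat{v}_t}_{x_t,\ast}$ and simplifying yields $\eta_{t-1}\norm{v_t - \hat{v}_t}_{x_t,\ast}^2$.

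The main obstacle is the bookkeeping around the time-varying learning rate: first-order optimality naturally produces a strong-convexity lower bound on $\tilde{F}_t$, which is defined with $\eta_{t-1}^{-1}\varphi$, whereas $F_{t+1}$ is defined with $\eta_t^{-1}\varphi$; the mismatch is precisely the residual $(\eta_{t-1}^{-1} - \eta_t^{-1})\varphi(x_{t+1})$ that surfaces in the expansion above. Controlling it forces the simultaneous use of both the non-increasing learning rates and the non-negativity of $\varphi$ on $\mathcal{X}$, and this is the only place in the argument where these two assumptions are consumed. Beyond this single accounting step, the proof is a routine combination of the self-concordant strong-convexity inequality with the Fenchel-Young duality between $\omega$ and $\omega_\ast$.
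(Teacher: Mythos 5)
Your proof is correct and follows essentially the same route as the paper's: both apply the self-concordant strong-convexity inequality (Theorem~\ref{thm_self_concordant_bound}) to the objective that $x_t$ minimizes, cancel the linear term via first-order optimality, absorb the learning-rate mismatch through $(\eta_{t-1}^{-1}-\eta_t^{-1})\varphi(x_{t+1})\le 0$, and then finish with H\"older/Cauchy--Schwarz, Fenchel--Young between $\omega$ and $\omega_\ast$, and Proposition~\ref{prop_omega}. The only cosmetic difference is that the paper rescales to work with the $M$-self-concordant function $\eta_{t-1}F_t$, whereas you apply the inequality to $\varphi$ (or $\eta_{t-1}^{-1}\varphi$) and carry the $\eta_{t-1}^{-1}$ factor through; the two bookkeeping choices are equivalent.
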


Define $\hat{F}_{t}(x) \coloneqq F_{t}(x) + \braket{\hat{v}_t,x}$. 
Since $\hat{v}_{T+1}$ has no effect on the regret, we set $\hat{v}_{T+1}=0$.
By the strong FTRL lemma \citep[Lemma 7.1]{Orabona2022},
\begin{align*}
	R_T(x)
	&= \sum_{t=1}^T (\braket{v_t, x_t} - \braket{v_t,x}) \\
	&= \braket{\hat{v}_{T+1}, x} + \frac{\varphi(x)}{\eta_T} - \min_{x\in\mathcal{X}}\frac{\varphi(x)}{\eta_0} + \sum_{t=1}^T (\hat{F}_t(x_t) - \hat{F}_{t+1}(x_{t+1}) + \braket{v_t,x_t}) \\
	&\quad\,\, +\hat{F}_{T+1}(x_{T+1}) - \hat{F}_{T+1}(x) \\
	&\leq \frac{\varphi(x)}{\eta_T} +\sum_{t=1}^T (\hat{F}_t(x_t) - \hat{F}_{t+1}(x_{t+1}) + \braket{v_t,x_t}) \\
	&\leq \frac{\varphi(x)}{\eta_T}
	+ \sum_{t=1}^T ( F_t(x_t)
	- F_{t+1}(x_{t+1}) + \braket{v_t, x_t} ). 
\end{align*}
The penultimate line above follows from the assumption that $\min_{x \in \mathcal{X}} \varphi ( x ) = 0$ and that $x_{T + 1}$ minimizes $\hat{F}_{T + 1}$ on $\mathcal{X}$; 
the last line above follows from a telescopic sum and $\hat{v}_1=\hat{v}_{T+1}=0$.
The theorem then follows from Lemma~\ref{lem_stability}.


\subsection{Proof of Lemma~\ref{lem_stability}}
Since $\varphi$ is $M$-self-concordant, the function $\eta_{t-1}F_t$ is $M$-self-concordant.
By Theorem~\ref{thm_self_concordant_bound},
\[
\begin{split}
	&\eta_{t-1} F_{t}(x_{t+1}) - \eta_{t-1} F_{t}(x_t) \\
	&\qquad \geq \braket{\eta_{t-1}\nabla F_{t}(x_t), x_{t+1} - x_t} + \frac{1}{M^2}\omega(M\norm{x_{t} - x_{t+1}}_{x_t}) \\
	&\qquad = \braket{\eta_{t-1}(\nabla F_{t}(x_t) + \hat{v}_t), x_{t+1} - x_t} - \eta_{t-1} \braket{\hat{v}_t, x_{t+1} - x_t} + \frac{1}{M^2}\omega(M\norm{x_{t} - x_{t+1}}_{x_t}).
\end{split}
\]
Since $x_t$ minimizes $F_t(x) + \braket{\hat{v}_t, x}$ on $\mathcal{X}$, the optimality condition implies 
\[
\braket{\nabla F_{t}(x_t) + \hat{v}_t, x_{t+1} - x_t}\geq 0.
\]
Then, we obtain
\[
	\eta_{t-1} F_{t}(x_{t+1}) - \eta_{t-1} F_{t}(x_t) \geq - \eta_{t-1} \braket{\hat{v}_t, x_{t+1} - x_t}
	+ \frac{1}{M^2}\omega(M\norm{x_{t+1} - x_{t}}_{x_t}).
\]
Next, by the non-increasing of $\{\eta_t\}$, $\varphi(x_{t+1})\geq 0$, and the last inequality,
\[
\begin{split}
	&F_{t}(x_t) - F_{t+1}(x_{t+1}) + \braket{v_t,x_t} \\
	&\qquad = F_t(x_t) - F_{t}(x_{t+1}) + \braket{v_t,x_t} - \braket{v_{t}, x_{t+1}}
	+ \left(\frac{1}{\eta_{t-1}} - \frac{1}{\eta_t}\right)\varphi(x_{t+1}) \\
	&\qquad \leq F_t(x_t) - F_{t}(x_{t+1}) + \braket{v_t,x_t - x_{t+1}} \\
	&\qquad \leq \braket{v_t - \hat{v}_t, x_t - x_{t+1}}
	- \frac{1}{\eta_{t-1}M^2}\omega(M\norm{x_{t+1} - x_{t}}_{x_t}).
\end{split}
\]
This proves the first inequality in the lemma.

By H\"{o}lder's inequality,
\begin{equation} \label{eq_cauchy_schwartz}
\begin{split}
	F_{t}(x_t) - F_{t+1}(x_{t+1}) + \braket{v_t,x_t}
	&\leq \norm{v_t - \hat{v}_t}_{x_t,\ast}\norm{x_{t}-x_{t+1}}_{x_t} - \frac{1}{\eta_{t-1}M^2}\omega(M\norm{x_{t} - x_{t+1}}_{x_t}).
\end{split}
\end{equation}

By the Fenchel-Young inequality,
\begin{equation}\label{eq_fenchel_young}
	\omega(M\norm{x_{t} - x_{t+1}}_{x_t}) + \omega_\ast(\eta_{t-1}M\norm{v_t - \hat{v}_t}_{x_t,\ast})
	\geq \eta_{t-1}M^2 \norm{x_{t+1}-x_t}_{x_t}\norm{v_t - \hat{v}_t}_{x_t,\ast}.
\end{equation}
Combining \eqref{eq_cauchy_schwartz} and \eqref{eq_fenchel_young} yields
\[
	F_{t}(x_t) - F_{t+1}(x_{t+1}) + \braket{v_t,x_t}
	\leq \frac{1}{\eta_{t-1}M^2}\omega_\ast(\eta_{t-1} M\norm{v_t-\hat{v}_t}_{x_t,\ast}).
\]
which proves the second inequality.
The third inequality in the lemma then follows from the assumption that $\eta_{t-1}\norm{v_t - \hat{v}_t}_{x_t,\ast}\leq 1/(2M)$ and Proposition~\ref{prop_omega}.
\section{Proofs in Section~\ref{sec_ops} (Smoothness Characterizations)}\label{app_ops}
\subsection{Proof of Lemma~\ref{lem_bounded_local_norm}}
We write 
\[
	\norm{\nabla f(x)}_{x,\ast}^2 = \frac{\sum_{i=1}^d a(i)^2x(i)^2}{\left( \sum_{i = 1}^d a ( i ) x ( i ) \right) ^ 2} \leq 1.
\]

\subsection{Proof of Lemma~\ref{lem_smooth_gradual_variation}}
Define $r \coloneqq \norm{x-y}_y$.
Consider the following two cases. 
\begin{enumerate}[label=(\roman*)]
\item
Suppose that $r\geq 1/2$.
By Lemma~\ref{lem_simplex}, 
\[
	\norm{x\odot\nabla f(x) - y\odot\nabla f(y)}_2
	\leq \sqrt{2} \leq 4 \cdot \frac{1}{2} \leq 4 \norm{x - y}_y.
\]

\item
Suppose that $r < 1/2$.
We write 
\[
r ^ 2 = \norm{ x - y }_y^2 = \sum_{i = 1}^d \left( 1 - \frac{x(i)}{y(i)} \right) ^ 2 \geq \left( 1 - \frac{x(i)}{y(i)} \right) ^ 2, \quad \forall i \in [d] , 
\]
showing that 
\[
	0<1-r\leq \frac{x(i)}{y(i)} \leq 1+r,\quad \forall i\in[d].
\]
Then \citep[Lemma 1]{Cover1996}, 
\[
\frac{\braket{ a, y }}{\braket{ a, x }} = \frac{\sum_{i} a(i) y(i)}{\sum_{i} a(i) x(i)} \leq \max_{i \in [d]} \frac{a(i) y(i)}{a(i) x(i)} = \max_{i \in [d]} \frac{y(i)}{x(i)} \leq \frac{1}{1 - r}; 
\]
similarly, 
\[
\frac{\braket{ a, y }}{\braket{ a, x }} \geq \frac{1}{1 + r} . 
\]
We obtain 
\[
	\frac{-2r}{1-r} = 1 - \frac{1 + r}{1 - r} \leq 1 - \frac{x(i)\braket{a,y}}{y(i)\braket{a,x}} \leq 1 - \frac{1 - r}{1 + r} = \frac{2r}{1+r}.
\]
Since $r < 1/2$, 
\[
	\left(1 - \frac{x(i)\braket{a,y}}{y(i)\braket{a,x}}\right)^2 \leq  \max \left\{ \left( \frac{- 2 r}{1 - r}\right ) ^ 2, \left( \frac{2 r}{1 + r} \right) ^ 2 \right\} = \frac{4r^2}{(1-r)^2} < 16r^2.
\]
Therefore,
\begin{align*}
	\norm{x\odot \nabla f(x) - y\odot\nabla f(y)}_2^2
	&= \sum_{i=1}^d \left( \frac{a(i)x(i)}{\braket{a,x}} - \frac{a(i)y(i)}{\braket{a,y}} \right)^2 \\
	&= \sum_{i=1}^d \left(\frac{a(i)y(i)}{\braket{a,y}}\right)^2 \left(1 - \frac{x(i)\braket{a,y}}{y(i)\braket{a,x}}\right)^2 \\
	&< 16 r^2 \sum_{i=1}^d \left(\frac{a(i)y(i)}{\braket{a,y}}\right)^2 \\
	&< 16r^2, 
\end{align*}
showing that $\norm{x\odot \nabla f(x) - y\odot\nabla f(y)}_2 < 4 \norm{ x - y }_y$. 
\end{enumerate}
Combining the two cases, we obtain $\smash{ \norm{x\odot \nabla f(x) - y\odot\nabla f(y)}_2 \leq 4\norm{x-y}_y }$.
Since $x$ and $y$ are symmetric, we also have $\smash{ \norm{x\odot \nabla f(x) - y\odot\nabla f(y)}_2 \leq 4\norm{x-y}_x }$.
This completes the proof.

\subsection{Proof of Lemma~\ref{lem_smooth_small_loss}}

We will use the notion of relative smoothness.
\begin{definition}[\citet{Bauschke2017,Lu2018}]
A function $f$ is said to be $L$-smooth relative to a function $h$ if the function $Lh-f$ is convex.
\end{definition}

\begin{lemma}[{\citet[Lemma~7]{Bauschke2017}}]\label{lem_relative_smooth}
Let $f(x) = -\log\Braket{a,x}$ for some non-zero $a \in \R_+ ^ d$.
Then, $f$ is $1$-smooth relative to the logarithmic barrier $h$.
\end{lemma}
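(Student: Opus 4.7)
The plan is to prove $1$-relative smoothness via its equivalent convexity characterization: $h - f$ must be convex on $\R_{++}^d$. Since both $h$ and $f$ are twice continuously differentiable wherever $x > 0$ and $\braket{a, x} > 0$, this reduces to verifying the pointwise Hessian inequality $\nabla^2 h(x) \succeq \nabla^2 f(x)$.

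The Hessians are straightforward to compute. The log-barrier gives the diagonal matrix $\nabla^2 h(x) = \mathrm{diag}(1/x(i)^2)$, while direct differentiation of $f(x) = -\log\braket{a, x}$ yields the rank-one matrix $\nabla^2 f(x) = aa^\top/\braket{a, x}^2$. So the PSD inequality becomes, for every $u \in \R^d$,
\[
\sum_{i=1}^d \frac{u(i)^2}{x(i)^2} \;\geq\; \frac{\bigl(\sum_{i=1}^d a(i)\, u(i)\bigr)^2}{\braket{a, x}^2}.
\]

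The key reduction is a change of variables: set $y(i) = u(i)/x(i)$ and $p(i) = a(i)\, x(i)/\braket{a, x}$. Because $a, x \geq 0$, the weights $p$ form a probability distribution on $[d]$, and the inequality rewrites as $\sum_i y(i)^2 \geq \bigl(\sum_i p(i)\, y(i)\bigr)^2$. This then follows by applying Jensen's inequality to the convex map $t \mapsto t^2$, giving $\bigl(\sum_i p(i)\, y(i)\bigr)^2 \leq \sum_i p(i)\, y(i)^2$, and then using $0 \leq p(i) \leq 1$ to conclude $\sum_i p(i)\, y(i)^2 \leq \sum_i y(i)^2$.

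There is no substantive obstacle here; the only creative step is recognizing $a(i)\, x(i)/\braket{a, x}$ as a probability distribution, which collapses a matrix PSD inequality into an elementary second-moment bound. Everything else is routine, and the $1$ in ``$1$-smooth'' is tight precisely because Jensen's inequality is sharp when $y$ is constant and $p$ is supported on a single coordinate.
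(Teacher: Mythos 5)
Your proof is correct. The paper does not prove this lemma itself; it is cited directly from Bauschke, Bolte, and Teboulle (2017, Lemma~7). Your argument — reducing relative smoothness to the Hessian inequality $\nabla^2 h \succeq \nabla^2 f$, then recognizing $p(i) = a(i)x(i)/\langle a,x\rangle$ as a probability distribution so that the rank-one quadratic form is bounded by Jensen's inequality — is the standard and essentially the same route taken in the cited reference, and is a clean self-contained verification of what the paper leaves as a citation.
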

Fix $x\in\ri\Delta$.
By Lemma~\ref{lem_relative_smooth}, the function $h - f$ is convex and hence
\[
   h(x) - f(x) + \braket{\nabla h(x) - \nabla f(x), v} \leq h(x+v) - f(x+v),\quad\forall v\in\ri\Delta - x , 
\]
where $\ri \Delta - x \coloneqq \{ u - x | u \in \ri \Delta \}$. 
By Lemma~\ref{lem_log_barrier_self_concordance}, Theorem~\ref{thm_self_concordant_bound}, and Proposition~\ref{prop_omega}, if $\norm{v}_x\leq 1/2$,
\[
  h(x+v) - h(x) - \braket{\nabla h(x), v} \leq \omega_\ast(\norm{v}_x) \leq \norm{v}_x^2.
\]
Combining the two inequalities above yields 
\[
  \Braket{-\nabla f(x), v} - \norm{v}_x^2 \leq f(x) - f(x+v).
\]
Since $f(x+v) \geq \min_{y\in\Delta} f(y) = 0$ (Assumption~\ref{assumpt_inf}) and $\braket{e,v} = 0$, we write 
\[
	\Braket{-\nabla f(x) - \alpha e, v} - \norm{v}_x^2 \leq f(x) , \quad \forall \alpha \in \R, v \in \ri \Delta - x \text{ such that } \norm{v}_x \leq 1/2. 
\]
The left-hand side is continuous in $v$, so the condition that $v \in \ri \Delta - x$ can be relaxed to $v \in \Delta - x \coloneqq \{ u - x | u \in \Delta \}$. 
We get 
\begin{equation}
\Braket{-\nabla f(x) - \alpha e, v} - \norm{v}_x^2 \leq f(x) , \quad \forall \alpha \in \R, v \in \Delta - x \text{ such that } \norm{v}_x \leq 1/2 . \label{eq_constrained_conjugate}
\end{equation}

We show that choosing 
\begin{equation}
	v = -c\nabla^{-2} h(x)(\nabla f(x) + \alpha_x(\nabla f(x)) e) , \label{eq_v}
\end{equation}
for any $c \in \interval[open left]{0}{1/2}$ ensures that $v \in \Delta - x$ and $\norm{v}_x \leq 1/2$. 
\begin{itemize}
\item First, we check whether $\norm{v}_x \leq 1/2$. 
We write 
\begin{align*}
\norm{v}_x &= c\norm{\nabla^{-2} h(x)(\nabla f(x) + \alpha_x(\nabla f(x)) e)}_x \\
& = c\norm{\nabla f(x) + \alpha_x(\nabla f(x)) e}_{x,\ast} \\
& \leq c\norm{\nabla f(x)}_{x,\ast} \\
& \leq c \\
& \leq 1/2, 
\end{align*}
where the last line follows from Remark \ref{rem_alpha} and Lemma~\ref{lem_bounded_local_norm}. 

\item Then, we check whether $v \in \Delta - x$ or, equivalently, whether $v + x \in \Delta$. 
By the definition of $\alpha_x(\nabla f(x))$ \eqref{eq_alpha}, each entry of $v$ is given by
\[
v_i = - c x(i)^2 \left( \nabla_i f(x) - \frac{\sum_{j} x(j)^2 \nabla_j f ( x )}{\sum_{j} x(j)^2} \right) , \quad \forall i \in [d] , 
\]
where $\nabla_jf(x)$ is the $j$-th entry of $\nabla f(x)$.
Obviously, 
\begin{align*}
\sum_{i = 1}^d v(i) = \sum_{i = 1}^d - c x(i)^2 \left( \nabla_i f(x) - \frac{\sum_{j} x(j)^2 \nabla_j f ( x )}{\sum_{j} x(j)^2} \right) = 0 , 
\end{align*}
so $\sum_i x(i) + v(i) = 1$. 
It remains to check whether $x(i) + v(i) \geq 0$ for all $i \in [d]$. 
Notice that $\nabla f ( x )$ is entrywise negative. 
Then, 
\begin{align*}
    x(i) + v(i) & = x(i) - c x(i)^2 \left( \nabla_i f(x) - \frac{\sum_{j} x(j)^2 \nabla_j f ( x )}{\sum_{j} x(j)^2} \right) \\
    & \geq x(i) + c x(i)^2 \frac{\sum_{j} x(j)^2 \nabla_j f ( x )}{\sum_{j} x(j)^2} .
\end{align*}
To show that the lower bound is non-negative, we write
\[
  \begin{split}
  	- x(i)^2 \frac{\sum_{j} x(j)^2 \nabla_j f ( x )}{\sum_{j} x(j)^2}
    & = \frac{x(i)^2\sum_{j} a(j)x(j)^2}{\sum_j x(j)^2 \braket{a,x}} \\
    & \leq \frac{x(i)^2}{\sum_j x(j)^2} \max_{j \in [d]} x(j) \\
    &= x(i) \cdot \max_j \frac{x(i)x(j)}{\sum_k x(k)^2} .
  \end{split}
\]
If $i = j$, then $\max_j \frac{x(i)x(j)}{\sum_k x(k)^2} \leq 1$; 
otherwise, 
\[
\max_{j \neq i} \frac{x(i)x(j)}{\sum_k x(k)^2} \leq \max_{j\neq i} \frac{x(i)x(j)}{x(i)^2 + x(j)^2} \leq \frac{1}{2} . 
\]
Therefore,
\[
x(i) + v(i) \geq(1-c)x(i)\geq \frac{1}{2}x(i) \geq 0 , \quad \forall i \in [d] . 
\]
\end{itemize}

Plug the chosen $v$ \eqref{eq_v} into the inequality \eqref{eq_constrained_conjugate}.
We write
\begin{align*}
  f(x) &\geq \sup_{0\leq c\leq 1/2} c \norm{\nabla f(x) + \alpha_x(\nabla f(x)) e}_{x,\ast}^2
  - c^2\norm{\nabla f(x) + \alpha_x(\nabla f(x)) e}_{x,\ast}^2 \\
  &= \sup_{0\leq c\leq 1/2} (c-c^2)\norm{\nabla f(x) + \alpha_x(\nabla f(x)) e}_{x,\ast}^2 \\
  &= \frac{1}{4}\norm{\nabla f(x) + \alpha_x(\nabla f(x))e}_{x,\ast}^2.
\end{align*}
This completes the proof.
\section{Proof in Section~\ref{sec_opt_lb_ftrl} (LB-FTRL with Multiplicative-Gradient Optimism)}
\label{app_implicit}

\subsection{Proof of Theorem~\ref{thm_existence}} \label{app_existence}

We first check existence of a solution.
For convenience, we omit the subscripts in $p_{t+1}$ and $\eta_t$ and write $g$ for $g_{1:t}$.
\begin{proposition} \label{prop_lambda}
	Assume that $\eta p \in \interval{-1}{0}^d$. 
	Define
	\begin{equation}\label{eq_lambda}
		\lambda^\star \in \argmin_{\lambda \in \dom \psi} \psi(\lambda),\quad
		\psi(\lambda) \coloneqq \lambda - \sum_{i=1}^d (1-\eta p(i)) \log(\lambda + \eta g(i)).
	\end{equation}
	Then, $\lambda^\star$ exists and is unique. 
	The pair $(x_{t+1}, \hat{g}_{t+1})$, defined by
	\begin{equation}\label{eq_solution}
		 x_{t+1}(i) \coloneqq \frac{1-\eta p(i)}{\lambda^\star + \eta g(i)}, \quad
		 \hat{g}_{t+1}(i) \coloneqq p(i) \cdot \frac{\lambda^\star + \eta g(i)}{1 - \eta p(i)}, 
	\end{equation}
	solves the system of nonlinear equations \eqref{eq_implicit_opt}.
\end{proposition}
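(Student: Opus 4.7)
\emph{Proof proposal.} The plan is to split the claim into two independent pieces: (i) the one-dimensional problem $\min_{\lambda\in\dom\psi}\psi(\lambda)$ admits a unique minimizer, and (ii) the pair $(x_{t+1},\hat g_{t+1})$ defined by \eqref{eq_solution} satisfies both equations of the system \eqref{eq_implicit_opt}. Piece (i) is a direct analysis of the scalar function $\psi$; piece (ii) is a Lagrangian calculation for the strictly convex FTRL subproblem, combined with the substitution $\hat g_{t+1}(i)=p(i)/x_{t+1}(i)$ coming from the first equation of \eqref{eq_implicit_opt}.

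For (i), I first identify $\dom\psi=(-\min_i\eta g(i),\infty)$ and observe that, under the assumption $\eta p\in[-1,0]^d$, every coefficient $1-\eta p(i)$ lies in $[1,2]$ and is in particular strictly positive. Hence
\[
    \psi''(\lambda)=\sum_{i=1}^d\frac{1-\eta p(i)}{(\lambda+\eta g(i))^2}>0,
\]
so $\psi$ is strictly convex on its domain. As $\lambda\downarrow-\min_i\eta g(i)$, at least one logarithmic term forces $\psi\to+\infty$; as $\lambda\to+\infty$ the linear $\lambda$ term dominates the sum of logarithms (which grow sub-linearly) and again $\psi\to+\infty$. Continuity on the open interval together with coercivity at both endpoints gives existence of a minimizer, and strict convexity gives uniqueness.

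For (ii), the first equation $x_{t+1}\odot\hat g_{t+1}=p$ holds by construction, since $x_{t+1}(i)\hat g_{t+1}(i)=p(i)$ follows by direct multiplication in \eqref{eq_solution}. For the second equation, the FTRL subproblem
\[
\min_{x\in\Delta}\,\braket{g_{1:t},x}+\braket{\hat g_{t+1},x}+\eta_t^{-1}h(x)
\]
is strictly convex and the log-barrier enforces positivity implicitly, so only the normalization $\sum_i x(i)=1$ requires a multiplier $\nu$. Writing entrywise stationarity and substituting $\hat g_{t+1}(i)=p(i)/x(i)$ (valid because the candidate $x_{t+1}$ is entry-wise positive) yields $x(i)=\tfrac{1-\eta p(i)}{\eta(\nu+g(i))}$. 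Setting $\lambda=\eta\nu$ recovers exactly the formula in \eqref{eq_solution}, and the simplex constraint $\sum_i x_{t+1}(i)=1$ becomes the first-order condition $\psi'(\lambda^\star)=0$.

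It then remains to verify feasibility: positivity of $x_{t+1}$ follows from $\lambda^\star\in\dom\psi$ together with $1-\eta p(i)\geq 1$, and the sum-to-one condition is precisely the stationarity equation already used. The one step that looks genuinely subtle is the boundary coercivity in (i): if several indices achieved the minimum of $\eta g(i)$ one would want to rule out cancellations among the diverging log terms. This is handled by the observation that each coefficient $1-\eta p(i)$ is strictly positive, so every logarithmic singularity pushes $\psi$ toward $+\infty$ with the same sign; once this is settled the rest of the argument is bookkeeping.
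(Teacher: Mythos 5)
Your proof is correct, and it diverges from the paper's argument in a genuinely useful way on the existence/uniqueness step. Where the paper invokes Nesterov's self-concordance machinery---observing that $\psi$ is a $1$-self-concordant function (their Lemma~\ref{lem_psi}) whose domain contains no line and citing \citet[Theorem 5.1.13]{Nesterov2018} to get a unique minimizer---you instead give a self-contained elementary argument: $\psi''>0$ gives strict convexity, the positive coefficients $1-\eta p(i)\in[1,2]$ guarantee every logarithmic term diverges to $+\infty$ at the left endpoint of $\dom\psi$, and the linear term dominates at $+\infty$. Your explicit treatment of the potential cancellation when several indices attain $\min_i \eta g(i)$ is exactly the right thing to worry about, and the positivity of the coefficients resolves it cleanly. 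The payoff of your route is that the reader does not need the $1$-self-concordance of $\psi$ (which the paper defers to Nesterov); the cost is a couple of extra lines. For the second half---verifying that \eqref{eq_solution} solves \eqref{eq_implicit_opt}---your KKT/Lagrangian computation matches the paper's: the first equation is by construction, the stationarity condition after substituting $\hat g_{t+1}(i)=p(i)/x_{t+1}(i)$ produces exactly the closed form in \eqref{eq_solution} with $\lambda=\eta\nu$, and the normalization constraint is precisely $\psi'(\lambda^\star)=0$. Your feasibility check (strict positivity from $\lambda^\star\in\dom\psi$ and $1-\eta p(i)\ge 1$) is also correct and slightly sharper than the paper's ``$x_{t+1}(i)\ge 0$.''
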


We will use the following observation \citep[Theorem 5.1.1]{Nesterov2018}. 

\begin{lemma} \label{lem_psi}
The function $\psi$ is $1$-self-concordant. 
\end{lemma}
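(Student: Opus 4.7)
The plan is to recognize $\psi$ as a sum of scaled univariate log-barriers plus an affine term, and then invoke standard composition rules for self-concordance. Specifically, I write $\psi(\lambda) = \lambda + \sum_{i=1}^d c_i \phi_i(\lambda)$ on the open convex domain $\dom\psi = \{\lambda \in \R : \lambda + \eta g(i) > 0 \text{ for all } i \in [d]\}$, where $c_i \coloneqq 1 - \eta p(i)$ and $\phi_i(\lambda) \coloneqq -\log(\lambda + \eta g(i))$. Each $\phi_i$ is the classical univariate log-barrier pre-composed with an affine map, and a direct calculation ($\phi_i'' = 1/(\lambda + \eta g(i))^2$, $\phi_i''' = -2/(\lambda + \eta g(i))^3$) shows that $|\phi_i'''| = 2(\phi_i'')^{3/2}$; hence $\phi_i$ is $1$-self-concordant \citep[Example~5.1.1]{Nesterov2018}.

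Next, I would use the hypothesis $\eta p \in \interval{-1}{0}^d$ to note that $c_i = 1 - \eta p(i) \geq 1$ for every $i$, and then invoke the scaling rule: for any $1$-self-concordant $f$ and any $c \geq 1$,
\[
|(cf)'''(\lambda)| = c\,|f'''(\lambda)| \leq 2c\,(f''(\lambda))^{3/2} = \frac{2}{\sqrt{c}}\,((cf)''(\lambda))^{3/2} \leq 2\,((cf)''(\lambda))^{3/2},
\]
so $cf$ remains $1$-self-concordant. Applying this with $f = \phi_i$ and $c = c_i$ shows each summand $c_i\phi_i$ is $1$-self-concordant on $\dom\psi$. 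Finally, the sum of $1$-self-concordant functions on a common open convex domain is $1$-self-concordant, via the triangle inequality $|\sum_i h_i'''| \leq \sum_i |h_i'''|$ together with the super-additivity $\sum_i a_i^{3/2} \leq (\sum_i a_i)^{3/2}$ for non-negative reals $a_i$ (itself a one-line consequence of AM-GM applied to the cross terms in $(\sum_i a_i)^3$). Adding the affine term $\lambda$ does not alter the second or third derivative, so it does not affect self-concordance.

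The argument is essentially book-keeping, and there is no substantive obstacle. The single point that must be watched is the inequality $c_i \geq 1$: the scaling rule for self-concordance only preserves the constant $M = 1$ when the scalar is at least $1$, so the assumption $\eta p(i) \leq 0$ (rather than merely $c_i > 0$) is what keeps the self-concordance constant from degrading across the sum. If one preferred to avoid invoking these composition rules, the same bound $|\psi'''| \leq 2(\psi'')^{3/2}$ can be obtained by one application of Cauchy--Schwarz to $\sum_i c_i/d_i^3$ with the splitting $c_i/d_i^3 = (c_i^{1/2}/d_i)(c_i^{1/2}/d_i^2)$, combined with $c_i \leq c_i^2$ (which again uses $c_i \geq 1$) to bound $\sum_i c_i/d_i^4 \leq (\sum_i c_i/d_i^2)^2$.
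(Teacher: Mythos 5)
Your proposal is correct and matches the paper's approach; the paper simply cites \citet[Theorem~5.1.1]{Nesterov2018}, which is exactly the scaling-and-summation rule you re-derive (with the key observation, which you correctly isolate, that $\eta p(i)\le 0$ gives $c_i = 1-\eta p(i)\ge 1$ so the self-concordance constant does not degrade under scaling). Your version just unpacks what the citation encapsulates.
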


\begin{proof}[Proof of Proposition~\ref{prop_lambda}]
By Lemma~\ref{lem_psi} and the fact that $\dom \psi$ does not contain any line, the minimizer $\lambda^\star$ exists and is unique \citep[Theorem 5.1.13]{Nesterov2018};
sine $\eta p\in[-1,0]^d$ and $\lambda\in\dom\psi$, we have $x_{t+1}(i)\geq 0$;
moreover, 
\[
\psi' ( \lambda^\star ) = 1 - \sum_{i = 1}^d \frac{1 - \eta p (i)}{\lambda^\star + \eta g ( i )} = 0 . 
\]
Then,
\[
	\sum_{i=1}^d x_{t+1}(i) = 1 - \psi'( \lambda^\star ) = 1, 
\]
showing that $x_{t + 1} \in \Delta$. 

Then, we check whether $( x_{t + 1}, \hat{g}_{t + 1} )$ satisfies the two equalities \eqref{eq_implicit_opt}.
It is obvious that $x_{t + 1} \odot \hat{g}_{t + 1} = p$. 
By the method of Lagrange multiplier, if there exists $\lambda\in\R$ such that
\[
  \eta g(i) + \eta\hat{g}_{t+1}(i) - \frac{1}{x_{t+1}(i)} + \lambda = 0, \quad \sum_{i=1}^d x_{t+1}(i) = 1, \quad x_{t+1}(i) \geq 0,
\]
then $x_{t+1} \in \argmin_{x\in\Delta} \braket{g,x} + \braket{\hat{g}_{t+1},x} + \eta^{-1}h(x)$.
It is easily checked that the above conditions are satisfied with $\lambda = \lambda^\star$ and \eqref{eq_solution}.
The proposition follows. 
\end{proof}

We now analyze the time complexity. 
Nesterov \citep[Section 7]{Nesterov2011} \citep[Appendix A.2]{Nesterov2018} proved that when $p = 0$, Newton's method for solving \eqref{eq_lambda} reaches the region of quadratic convergence of the intermediate Newton's method \citep[Section 5.2.1]{Nesterov2018} after ${O} (\log d)$ iterations. 
A generalization for possibly non-zero $p$ is provided in Proposition~\ref{prop_newtons_method} below. 
The proof essentially follows the approach of \citet[Appendix A.2]{Nesterov2018}. 
It is worth noting that while the function $\psi$ is self-concordant, directly applying existing results on self-concordant minimization  by Newton's method \citep[Section 5.2.1]{Nesterov2018} does not yield the desired $O ( \log d )$ iteration complexity bound. 

Starting with an initial iterate $\lambda_0 \in \R$, Newton's method for the optimization problem \eqref{eq_lambda} iterates as 
\begin{equation}
\lambda_{t+1} = \lambda_{t} - \psi'(\lambda_t)/\psi''(\lambda_t) , \quad \forall t \in \{ 0 \} \cup \N . \label{eq_newton}
\end{equation}

\begin{proposition}\label{prop_newtons_method}
Assume that $\eta p \in \interval{-1}{0}^d$. 
Define $i^\star\in\argmin_{i\in[d]}(-g(i))$.
Then, Newton's method with the initial iterate 
\[
\lambda_0 = 1 - \eta g(i^\star)
\]
enters the region of quadratic convergence of the intermediate Newton's method \citep[Section 5.2.1]{Nesterov2018} after $O ( \log d )$ iterations. 
\end{proposition}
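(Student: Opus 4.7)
The plan is to extend Nesterov's analysis of Newton's method for the unweighted log-barrier subproblem (\cite[Section 7]{Nesterov2011}, \cite[Appendix A.2]{Nesterov2018}) to the weighted case arising here, where $c_i \coloneqq 1 - \eta p(i) \in [1, 2]$ by the assumption $\eta p \in [-1, 0]^d$. By Lemma~\ref{lem_psi}, $\psi$ is $1$-self-concordant, so the intermediate Newton method converges quadratically in the region $\{\lambda : |\psi'(\lambda)|/\sqrt{\psi''(\lambda)} \leq \beta\}$ for some absolute constant $\beta \in (0, 1)$. The goal is to show that standard Newton reaches this region in $O(\log d)$ iterations starting from $\lambda_0$.

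First, I would verify that $\lambda_0 = 1 - \eta g(i^\star)$ lies in $\dom\psi$ and that $\lambda_0 + \eta g(i) \geq 1$ for every $i$ (this is how $i^\star$ is chosen). The stationarity condition $\sum_i c_i/(\lambda^\star + \eta g(i)) = 1$ combined with $c_i \in [1, 2]$ yields the comparison $\lambda^\star \leq \lambda_0 + 2 d$, so that the minimizer lies $O(d)$ units to the right of $\lambda_0$, and $\psi'(\lambda_0) \leq 0$ so that the Newton direction points toward $\lambda^\star$.

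Second, I would reduce to a canonical form via the shift $\mu \coloneqq \lambda - \lambda_0 + 1$, which rewrites $\psi$ (up to an irrelevant affine term) as
\[
	\hat\psi(\mu) = \mu - \sum_{i=1}^d c_i \log(\mu + \hat{a}_i), \qquad \hat{a}_i \coloneqq \lambda_0 - 1 + \eta g(i) \geq 0,
\]
started from $\mu_0 = 1$ with target $\hat{\mu}^\star \in [1, 2d + 1]$. In the fully symmetric subcase $c_i = 1$, $\hat{a}_i = 0$, Newton's method on $\hat\psi$ produces the closed-form recurrence $\mu_{t+1} = \mu_t (2 - \mu_t/d)$, which approximately doubles $\mu_t$ whenever $\mu_t \ll d$ and enters quadratic convergence once $|\mu_t - d| = O(\sqrt{d})$. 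In this subcase $O(\log d)$ standard Newton iterations suffice, which is the bound we aim to recover in general.

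The main obstacle is transferring this doubling behavior to the general weighted, non-uniform setting. The factors $c_i \in [1, 2]$ only distort the sums $s_k(\mu) \coloneqq \sum_i c_i/(\mu + \hat{a}_i)^k$ by a constant, but the offsets $\hat{a}_i$ may range widely, breaking the homogeneity that underlies the closed-form recurrence. To overcome this, I would either (i) partition the indices according to whether $\mu + \hat{a}_i$ is comparable to $\mu$ or dominated by $\hat{a}_i$ and bound each partition's contribution to $\mu_{t+1}/\mu_t$, or (ii) apply the Cauchy--Schwarz inequality to compare $s_1^2$ with $s_2 \sum_i c_i$, which yields $\mu_{t+1}/\mu_t \geq 1 + \Omega(1)$ whenever $\mu_t$ is a constant factor below $\hat{\mu}^\star$. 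Either route delivers the geometric growth $\mu_t \gtrsim 2^{\Omega(t)}$ and thus the $O(\log d)$ iteration count. A final consistency check, using the $1$-self-concordance of $\psi$, confirms that an undamped Newton step cannot overshoot far enough to leave $\dom\psi$ before the Newton decrement falls below $\beta$.
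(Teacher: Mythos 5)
Your proposal takes a genuinely different route from the paper, but it has a real gap at the heart of the argument.

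The paper's proof avoids analyzing the dynamics of $\lambda_t$ (or your shifted variable $\mu_t$) entirely. Instead it exploits the strict concavity of $\psi'$ (i.e., $\psi''' < 0$, established in Lemma~\ref{lem_derivative}) together with the AM--GM inequality to show that the \emph{product} $\abs{\psi'(\lambda_t)\psi''(\lambda_t)}$ decreases by a factor of at least $4$ per iteration. Combined with coarse bounds $\abs{\psi'(\lambda_0)} < 2d$, $\psi''(\lambda_0) \leq 2d$, and a Cauchy--Schwarz lower bound $\psi''(\lambda_t) \geq 1/(2d)$, this gives $\delta(\lambda_t)^2 < (2d)^7/16^t$, which falls below $1/4$ after $O(\log d)$ steps. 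No doubling of $\mu_t$ is ever needed.

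Your route — transfer the closed-form doubling $\mu_{t+1} = \mu_t(2 - \mu_t/d)$ from the symmetric subcase to the weighted, shifted problem — is closer to Nesterov's original motivation but technically more delicate, and as written it does not close. The Cauchy--Schwarz step in option (ii) points the wrong way: $s_1^2 \leq s_2 \sum_i c_i$ means $1/s_2 \leq (\sum_i c_i)/s_1^2$, so it yields an \emph{upper} bound on the step $\mu_{t+1} - \mu_t = (s_1 - 1)/s_2$, not the lower bound $\mu_{t+1}/\mu_t \geq 1 + \Omega(1)$ you claim. Moreover the claimed geometric growth does not hold unconditionally: with $d = 2$, $c_1 = c_2 = 1$, $\hat{a}_1 = 0$, $\hat{a}_2 \gg 1$, and $\mu_0 = 1$, the Newton step is tiny — but in that example $\mu^\star$ is also close to $1$ and the Newton decrement at $\mu_0$ is already small. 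So what you would actually need is a dichotomy: either $\mu_t$ grows by a constant factor, or the Newton decrement is already below the quadratic-convergence threshold. That dichotomy is the whole difficulty, and neither option (i) nor (ii) in your sketch establishes it; proving it rigorously would essentially force you back onto something like the paper's decrement-product argument. Incidentally, the paper does use Cauchy--Schwarz, but in the other role: to lower bound $\psi''(\lambda_t)$ in the denominator of $\delta(\lambda_t)^2$, not to control the step length.
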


Firstly, we will establish Lemma~\ref{lem_derivative} and Lemma~\ref{lem_increasing}. 
These provide characterizations of $\psi'$ and $\lambda_t$ that are necessary for the proof of Proposition~\ref{prop_newtons_method}.

\begin{lemma}\label{lem_derivative}
The following hold. 
\begin{enumerate}[label=(\roman*)]
\item
$\psi'$ is a strictly increasing and strictly concave function.
\item
$\psi'(\lambda^\star) = 0$.
\item
$\lambda_0 \leq \lambda^\star$.
\end{enumerate}
\end{lemma}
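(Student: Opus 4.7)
The plan is to prove (i), (ii), (iii) in order, working directly from the formulas for the derivatives of $\psi$ and then exploiting the specific value of $\lambda_0$.

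For (i), I would just differentiate. We have
\[
\psi'(\lambda) = 1 - \sum_{i=1}^d \frac{1-\eta p(i)}{\lambda + \eta g(i)}, \quad
\psi''(\lambda) = \sum_{i=1}^d \frac{1-\eta p(i)}{(\lambda + \eta g(i))^2}, \quad
\psi'''(\lambda) = -2\sum_{i=1}^d \frac{1-\eta p(i)}{(\lambda + \eta g(i))^3}.
\]
Under the standing assumption $\eta p \in \interval{-1}{0}^d$, each coefficient $1 - \eta p(i) \in \interval{1}{2}$ is strictly positive; on $\dom\psi = \{\lambda : \lambda + \eta g(i) > 0 \text{ for all } i\}$, every denominator is positive. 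Hence $\psi'' > 0$ and $\psi''' < 0$ throughout $\dom\psi$, which gives strict monotonicity and strict concavity of $\psi'$.

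For (ii), I would note that $\psi$ is coercive on $\dom\psi$: as $\lambda$ approaches the boundary of $\dom\psi$ from the interior, at least one $-\log(\lambda + \eta g(i))$ diverges to $+\infty$, and as $\lambda \to +\infty$, the linear term $\lambda$ dominates the logarithms. Thus $\psi$ attains its minimum at an interior point of $\dom\psi$, and the strict convexity just established yields uniqueness. The first-order condition at that interior minimizer gives $\psi'(\lambda^\star) = 0$.

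For (iii), by (i) and (ii), it suffices to show $\psi'(\lambda_0) \leq 0$ whenever $\lambda_0 \in \dom\psi$, handling the complementary case separately. If $\lambda_0 \notin \dom\psi$, then $\lambda_0$ lies at or below $\inf\dom\psi$, while $\lambda^\star \in \dom\psi$, so $\lambda_0 < \lambda^\star$ is automatic. Otherwise, the definition $i^\star \in \argmin_{i\in[d]}(-g(i)) = \argmax_{i\in[d]} g(i)$ gives $\lambda_0 + \eta g(i^\star) = 1$, so the $i^\star$-summand of $\psi'(\lambda_0)$ equals $1 - \eta p(i^\star) \geq 1$, while every other summand is non-negative because $1 - \eta p(i) \geq 1$ and $\lambda_0 + \eta g(i) > 0$ on $\dom\psi$. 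Therefore $\psi'(\lambda_0) \leq 1 - (1 - \eta p(i^\star)) \leq 0$, and strict monotonicity of $\psi'$ from (i) forces $\lambda_0 \leq \lambda^\star$.

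The main subtlety is (iii): one would like to simply evaluate $\psi'(\lambda_0)$, but the choice $\lambda_0 = 1 - \eta g(i^\star)$ is only guaranteed to lie in $\dom\psi$ when the spread $\eta(g(i^\star) - \min_i g(i))$ stays below $1$, so the case split is unavoidable. Once the trivial case is separated out, the sign computation collapses to the clean observation that the $i^\star$-term alone already accounts for the constant $1$ appearing in $\psi'$.
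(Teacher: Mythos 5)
Your proof is correct, and parts (i) and (iii) follow the paper's argument: (i) is the same direct computation showing $\psi''>0$ and $\psi'''<0$, and the sign calculation in (iii), dropping all but the $i^\star$-th summand to obtain $\psi'(\lambda_0) \leq \eta p(i^\star) \leq 0 = \psi'(\lambda^\star)$, is exactly what the paper does. For (ii) the paper simply refers back to Proposition~\ref{prop_lambda}, whose proof invokes a result of Nesterov on self-concordant functions with line-free domains to get existence and uniqueness of $\lambda^\star$; your coercivity argument is a more elementary, self-contained substitute, and both are valid. The one genuine divergence is your case split in (iii). Taking the paper's stated definition $i^\star \in \argmin_{i}(-g(i))$, i.e.\ $g(i^\star) = \max_i g(i)$, at face value, you correctly observe that $\lambda_0 = 1 - \eta g(i^\star)$ need not lie in $\dom\psi$, and you handle the out-of-domain case separately; the paper's proof silently assumes $\lambda_0 \in \dom\psi$ (which it needs so that the dropped summands are non-negative) and never addresses the alternative. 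However, the downstream proofs strongly indicate that the definition of $i^\star$ carries a sign typo and is meant to be $i^\star \in \argmax_i(-g(i))$, i.e.\ $g(i^\star) = \min_i g(i)$: the proof of Proposition~\ref{prop_newtons_method} relies on $\lambda_0 + \eta g(i) \geq 1$ for all $i$, which holds only with that reading, and with it $\lambda_0 \in \dom\psi$ is automatic because $\lambda_0$ exceeds the left endpoint $-\eta g(i^\star)$ of $\dom\psi$ by exactly $1$. So your extra case is defensive caution triggered by a typo rather than a gap in the underlying argument; under the corrected $i^\star$ the split can be dropped, and indeed must be dropped for Lemma~\ref{lem_increasing} and Proposition~\ref{prop_newtons_method} to even make sense, since Newton's method cannot be initialized outside $\dom\psi$.
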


\begin{proof}[Proof of Lemma~\ref{lem_derivative}]
The first item follows by a direct calculation:
\[
	\psi''(\lambda) = \sum_{i=1}^d \frac{1-\eta p(i)}{(\lambda + \eta g(i))^2} > 0, \quad
	\psi'''(\lambda) = - \sum_{i=1}^d \frac{ 2 \left( 1-\eta p(i) \right) }{(\lambda + \eta g(i))^3} < 0.
\]
The second item has been verified in the proof of Proposition \ref{prop_lambda}.
As for the third item, we write
\[
	\psi'(\lambda_0) = 1 - \sum_{i=1}^d \frac{1-\eta p(i)}{\lambda_0 + \eta g(i)} \leq 1 - \frac{1-\eta p(i^\star)}{\lambda_0 + \eta g(i^\star)} = \eta p(i^\star) \leq 0 = \psi' ( \lambda^\star ).
\]
The third item then follows from $\psi'(\lambda_0) \leq \psi'(\lambda^\star)$ and the first item.
\end{proof}

\begin{lemma}\label{lem_increasing}
Let $\{ \lambda_t \}$ be the sequence of iterates of Newton's method \eqref{eq_newton}. 
For any $t \geq 0$, $\psi'(\lambda_t) \leq 0$ and $\lambda_t \leq \lambda^\star$.
\end{lemma}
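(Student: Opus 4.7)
The plan is to prove Lemma~\ref{lem_increasing} by induction on $t$, exploiting the three facts established in Lemma~\ref{lem_derivative}: that $\psi'$ is strictly increasing and strictly concave, that $\psi'(\lambda^\star) = 0$, and that $\lambda_0 \leq \lambda^\star$. The two conclusions $\psi'(\lambda_t) \leq 0$ and $\lambda_t \leq \lambda^\star$ are equivalent because $\psi'$ is strictly increasing with unique root at $\lambda^\star$, so it suffices to track only one of them through the induction.

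The base case $t = 0$ is immediate from Lemma~\ref{lem_derivative}. For the inductive step, suppose $\lambda_t \leq \lambda^\star$, equivalently $\psi'(\lambda_t) \leq 0$. Since $\psi''(\lambda_t) > 0$, the Newton update $\lambda_{t+1} = \lambda_t - \psi'(\lambda_t)/\psi''(\lambda_t)$ satisfies $\lambda_{t+1} \geq \lambda_t$. The key step—and the only nontrivial one—is showing $\lambda_{t+1} \leq \lambda^\star$. For this I would invoke concavity of $\psi'$, which yields the linear upper bound
\begin{equation*}
\psi'(\lambda^\star) \leq \psi'(\lambda_t) + \psi''(\lambda_t)(\lambda^\star - \lambda_t).
\end{equation*}
Using $\psi'(\lambda^\star) = 0$ and dividing by $\psi''(\lambda_t) > 0$ gives $-\psi'(\lambda_t)/\psi''(\lambda_t) \leq \lambda^\star - \lambda_t$, i.e., $\lambda_{t+1} \leq \lambda^\star$. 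Applying the strictly increasing $\psi'$ then gives $\psi'(\lambda_{t+1}) \leq 0$, closing the induction.

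There is no real obstacle; the heart of the argument is the one-line concavity inequality above, which says exactly that a Newton step from below the root of a concave function cannot overshoot the root. Everything else is bookkeeping from Lemma~\ref{lem_derivative}. I would keep the write-up short: state the induction, dispatch the base case with a reference, and present the concavity inequality as the single nontrivial line of the inductive step.
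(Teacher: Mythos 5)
Your proposal is correct and takes essentially the same approach as the paper: induction on $t$, using the three facts from Lemma~\ref{lem_derivative}, with the concavity of $\psi'$ as the key ingredient in the inductive step. The only cosmetic difference is that you apply the concavity inequality at the pair $(\lambda_t,\lambda^\star)$ to get $\lambda_{t+1}\leq\lambda^\star$ directly and then deduce $\psi'(\lambda_{t+1})\leq 0$, whereas the paper applies it at $(\lambda_t,\lambda_{t+1})$ to get $\psi'(\lambda_{t+1})\leq 0$ first and then deduces $\lambda_{t+1}\leq\lambda^\star$; these are two readings of the same one-line computation.
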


\begin{proof}[Proof of Lemma~\ref{lem_increasing}]
Recall that $\dom\psi = (-g(i^\star),\infty)$.
We proceed by induction.
Obviously, $\lambda_0 \in \dom \psi$. 
By Lemma~\ref{lem_derivative}, the lemma holds for $t = 0$. 
Assume that the lemma holds for some non-negative integer $t$.
By the iteration rule of Newton's method \eqref{eq_newton} and the assumption that $\psi'(\lambda_t)\leq 0$,
\[
\lambda_{t + 1} = \lambda_t - \frac{\psi'(\lambda_t)}{\psi''(\lambda_t)} \geq \lambda_t , 
\]
which ensures that $\lambda_{t + 1} \in \dom \psi$. 
By Lemma~\ref{lem_derivative} (i) and the iteration rule of Newton's method \eqref{eq_newton}, we have
\[
	\psi'(\lambda_{t+1}) \leq \psi'(\lambda_t) + \psi''(\lambda_t)(\lambda_{t+1} - \lambda_t) = \psi'(\lambda_t) - \psi''(\lambda_t) \frac{\psi' ( \lambda_t )}{\psi''(\lambda_t)} = 0.
\]
By Lemma~\ref{lem_derivative} (i) and Lemma~\ref{lem_derivative} (ii), $\lambda_{t+1} \leq\lambda^\star$.
The lemma follows.
\end{proof}

\begin{proof}[Proof of Proposition~\ref{prop_newtons_method}]
Define the Newton decrement as 
\[
\delta(\lambda) \coloneqq \frac{\abs{\psi'(\lambda_t)} }{\left\lvert \psi'(\lambda) / \sqrt{\psi''(\lambda)} \right\rvert} . 
\]
By Lemma~\ref{lem_psi}, the region of quadratic convergence is \citep[Theorem~5.2.2]{Nesterov2018}
\[
\mathcal{Q} \coloneqq \{\lambda\in\dom\psi: \delta(\lambda) < 1/2\} . 
\] 
Define $T\coloneqq\lceil (9+7\log_2d)/4\rceil$.
By Lemma~\ref{lem_increasing}, $\psi'(\lambda_t)\leq 0$ for all $t\geq 0$.
If $\psi'(\lambda_t) = 0$ for some $0\leq t\leq T$, then Newton's method reaches $\mathcal{Q}$ within $T$ steps and the proposition follows. 
Now, let us assume that $\psi'(\lambda_t)<0$ for all $0\leq t\leq T$.
By the concavity of $\psi'$ (Lemma~\ref{lem_derivative} (i)),
\begin{align*}
  \psi'(\lambda_{t-1}) \leq \psi'(\lambda_{t}) + \psi''(\lambda_{t})(\lambda_{t-1} - \lambda_{t})
  = \psi'(\lambda_{t}) + \frac{\psi''(\lambda_{t})}{\psi''(\lambda_{t-1})}\psi'(\lambda_{t-1}).
\end{align*}
Divide both sides by $\psi' ( \lambda_{t - 1} )$. 
By Lemma~\ref{lem_increasing} and the AM-GM inequality, 
\[
  1 \geq \frac{\psi'(\lambda_{t})}{\psi'(\lambda_{t-1})} + \frac{\psi''(\lambda_{t})}{\psi''(\lambda_{t-1})}
  \geq 2 \sqrt{  \frac{\psi'(\lambda_{t})\psi''(\lambda_{t})}{\psi'(\lambda_{t-1})\psi''(\lambda_{t-1})} }.
\]
Then, for all $1\leq t\leq T+1$,
\[
  \abs{ \psi'(\lambda_{t})\psi''(\lambda_{t}) }
  \leq \frac{1}{4}\abs{\psi'(\lambda_{t-1})\psi''(\lambda_{t-1})}. 
\]
This implies that for all $0\leq t\leq T+1$,
\[
	\abs{ \psi'(\lambda_{t})\psi''(\lambda_{t}) }
  \leq \frac{1}{4^t}\abs{\psi'(\lambda_{0})\psi''(\lambda_{0})} , 
\]
which further implies that
\[
  \delta(\lambda_t)^2
  = \frac{(\psi'(\lambda_t) \psi''(\lambda_t))^2}{(\psi''(\lambda_t))^3}
  \leq \frac{1}{16^t} \cdot \frac{ (\psi'(\lambda_0)\psi''(\lambda_0))^2 }{(\psi''(\lambda_t))^3}.
\]
It remains to estimate $\psi'(\lambda_0)$, $\psi''(\lambda_0)$, and $\psi''(\lambda_t)$.
Given $\eta p\in[-1,0]^d$ and $\lambda_0 + \eta g(i)\geq 1$, through direct calculation,
\begin{align*}
  \abs{\psi'(\lambda_0)} &= \sum_{i=1}^d\frac{1-\eta p(i)}{\lambda_0 + \eta g(i)} - 1 \leq \sum_{i=1}^d (1 - \eta p(i)) - 1 < 2d, \\
  \psi''(\lambda_0) &= \sum_{i=1}^d\frac{1-\eta p(i)}{(\lambda_0 + \eta g(i))^2} \leq \sum_{i=1}^d \frac{2}{1} = 2d.
\end{align*}
Since
\[
\psi'(\lambda_t) = 1 - \sum_{i=1}^d\frac{1-\eta p(i)}{\lambda_t + \eta g(i)} \leq 0,
\]
by the Cauchy-Schwarz inequality,
\[
  \psi''(\lambda_t) = \sum_{i=1}^d\frac{1-\eta p(i)}{(\lambda_t + \eta g(i))^2}
  \geq \frac{ \left(\sum_{i=1}^d\frac{1-\eta p(i)}{\lambda_t + \eta g(i)}\right)^2 }{ \sum_{i=1}^d (1-\eta p(i)) }
  \geq \frac{1}{ \sum_{i=1}^d (1-\eta p(i)) }
  \geq \frac{1}{2d}.
\]
Therefore, for $0\leq t\leq T+1$,
\[
  \delta(\lambda_t)^2 < \frac{(2d)^7}{16^t}.
\]
In particular, $\delta(\lambda_{T+1}) < 1/2$, i.e., $\lambda_{T+1} \in \mathcal{Q}$.
This concludes the proof. 
\end{proof}

Proposition~\ref{prop_newtons_method} suggests that after $O ( \log d )$ iterations of Newton's method, we can switch to the intermediate Newton's method which exhibits quadratic convergence \citep[Theorem 5.2.2]{Nesterov2018}. 
Each iteration of both Newton's method and the intermediate Newton's method takes $O (d)$ time. 
Therefore, it takes $\tildeO ( d \log d )$ time to approximate $\lambda^\star$. 
Given $\lambda^\star$, both $x_{t + 1}$ and $\hat{g}_{t + 1}$ can be computed in $O(d)$ time, according to their respective definitions in Equation \eqref{eq_solution}.
Therefore, the overall time complexity of computing $x_{t + 1}$ and $\hat{g}_{t + 1}$ is $\tildeO (d)$. 

\subsection{Proof of Corollary~\ref{cor_opt_lb_ftrl}} \label{app_opt_lb_ftrl}

By the definition of local norms \eqref{eq_local_norm}, we write 
\[
\norm{ g_t - \hat{g}_t + u_t }_{x_t, \ast} = \norm{ ( g_t + u_t ) \odot x_t - p_t \oslash x_t \odot x_t }_2 = \norm{ ( g_t + u_t ) \odot x_t - p_t }_2 . 
\]
By Lemma~\ref{lem_log_barrier_self_concordance} and Theorem~\ref{thm_opt_ftrl}, if $\eta_{t-1} \norm{ ( g_t + u_t ) \odot x_t - p_t }_2 \leq 1 / 2$, then Algorithm~\ref{alg_opt_lb_ftrl} satisfies 
\[
R_T ( x ) \leq \frac{h (x)}{\eta_T} + \sum_{t = 1}^T \eta_{t-1} \norm{ ( g_t + u_t ) \odot x_t - p_t }_2^2 . 
\]
For any $x \in \Delta$, define $x' = ( 1 - 1/T ) x + e/(dT)$. 
Then, 
\[
h ( x' ) \leq - d \log d - \sum_{i = 1}^d \log \left( \frac{1}{d T} \right) = d \log T , \quad \forall x \in \Delta
\]
and hence \citep[Lemma 10]{Luo2018}
\[
R_T ( x ) \leq R_T ( x' ) + 2 \leq \frac{d \log T}{\eta_T} + \sum_{t = 1}^T \eta_{t-1} \norm{ ( g_t + u_t ) \odot x_t - p_t }_2^2 + 2 . 
\]
\section{Proof of Theorem~\ref{thm_last_grad_opt_lb_ftrl} (Gradual-Variation Bound)}\label{app_gradual_variation}
\subsection{Proof of Theorem~\ref{thm_last_grad_opt_lb_ftrl}}

We will use the following lemma, whose proof is postponed to the end of the section.

\begin{lemma}\label{lem_stability_distance}
Let $\{x_t\}$ be the iterates of Algorithm~\ref{alg_opt_lb_ftrl}.
If for all $t\geq 1$,
\begin{equation}\label{eq_eta_assumption}
	\eta_t \leq \frac{1}{6}, \quad
	\eta_{t-1}\norm{g_t - \hat{g}_t}_{x_t, \ast} \leq \frac{1}{6},\quad
	\left( 1 - \frac{1}{6\sqrt{d}}\right)\eta_{t-1} \leq \eta_{t} \leq \eta_{t-1}, \quad \text{and} \quad
	\norm{\hat{g}_{t}}_{x_{t},\ast} \leq 1 , 
\end{equation}
then $\norm{x_{t+1}-x_t}_{x_t} \leq 1$.
\end{lemma}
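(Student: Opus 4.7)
The plan is to derive a clean algebraic identity for $\nabla h(x_{t+1}) - \nabla h(x_t)$ from the first-order optimality conditions at $x_t$ and $x_{t+1}$, lower bound its inner product with $x_{t+1} - x_t$ via self-concordance, upper bound the same quantity term by term via Cauchy--Schwarz with a carefully chosen local norm for each term, and conclude by a short case split on which of the two induced local norms dominates.

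First, I would write the Lagrangian optimality conditions
\begin{equation*}
g_{1:t-1} + \hat{g}_t + \eta_{t-1}^{-1}\nabla h(x_t) + \lambda_t e = 0, \qquad g_{1:t} + \hat{g}_{t+1} + \eta_t^{-1}\nabla h(x_{t+1}) + \lambda_{t+1} e = 0 .
\end{equation*}
Multiplying the first by $\eta_{t-1}$, the second by $\eta_t$, subtracting, and then substituting $g_{1:t-1} = -\eta_{t-1}^{-1}\nabla h(x_t) - \hat{g}_t - \lambda_t e$ (obtained from the first) back into the difference, I obtain
\begin{equation*}
\nabla h(x_{t+1}) - \nabla h(x_t) = -\eta_t(g_t - \hat{g}_t) - \frac{\eta_{t-1} - \eta_t}{\eta_{t-1}}\nabla h(x_t) - \eta_t\hat{g}_{t+1} + c e
\end{equation*}
for some scalar $c$ depending on $\lambda_t, \lambda_{t+1}$. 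Taking the inner product with $x_{t+1} - x_t$ kills the $ce$ term since $\braket{e, x_{t+1} - x_t} = 0$.

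Next, by Theorem~\ref{thm_self_concordant_bound} applied to the $1$-self-concordant log-barrier at both $x_t$ and $x_{t+1}$,
\begin{equation*}
\braket{\nabla h(x_{t+1}) - \nabla h(x_t), x_{t+1} - x_t} \geq \max\left\{\frac{r^2}{1+r},\ \frac{\tilde{r}^2}{1+\tilde{r}}\right\} ,
\end{equation*}
where $r \coloneqq \norm{x_{t+1} - x_t}_{x_t}$ and $\tilde{r} \coloneqq \norm{x_{t+1} - x_t}_{x_{t+1}}$. For the upper bound, I would apply Cauchy--Schwarz with the local norm at $x_t$ for the first two terms, invoking $\eta_{t-1}\norm{g_t - \hat{g}_t}_{x_t,\ast} \leq 1/6$ together with $\eta_t \leq \eta_{t-1}$, and the barrier property $\norm{\nabla h(x_t)}_{x_t,\ast} \leq \sqrt d$ (Lemma~\ref{lem_barrier_parameter}) together with $(\eta_{t-1} - \eta_t)/\eta_{t-1} \leq 1/(6\sqrt d)$, to bound each by $r/6$. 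The critical move is to apply Cauchy--Schwarz with the local norm at $x_{t+1}$ for the third term, since $\norm{\hat{g}_{t+1}}_{x_{t+1},\ast} \leq 1$ and $\eta_t \leq 1/6$ then give $\tilde{r}/6$. Summing yields
\begin{equation*}
\max\left\{\frac{r^2}{1+r},\ \frac{\tilde{r}^2}{1+\tilde{r}}\right\} \leq \frac{r}{3} + \frac{\tilde{r}}{6} .
\end{equation*}

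I would finish by a case split. If $r \leq \tilde{r}$, then $r/3 + \tilde{r}/6 \leq \tilde{r}/2$, so $\tilde{r}^2/(1+\tilde{r}) \leq \tilde{r}/2$, whence $\tilde{r} \leq 1$ and $r \leq \tilde{r} \leq 1$. Otherwise $r > \tilde{r}$, so $r/3 + \tilde{r}/6 < r/2$, which forces $r^2/(1+r) < r/2$ and hence $r < 1$. The main obstacle is the choice of local norm for the $\hat{g}_{t+1}$ term: the hypothesis only bounds $\norm{\hat{g}_{t+1}}$ in the local norm at $x_{t+1}$, so applying Cauchy--Schwarz at $x_t$ would require an a priori bound on $\norm{\hat{g}_{t+1}}_{x_t,\ast}$, which by Theorem~\ref{thm_hessian_map} only follows from $r < 1$ and therefore begs the conclusion. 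Switching to the local norm at $x_{t+1}$ sidesteps the bootstrap but introduces $\tilde{r}$ into the bound, and it is precisely the availability of two simultaneous lower bounds on $\braket{\nabla h(x_{t+1}) - \nabla h(x_t), x_{t+1} - x_t}$ from self-concordance that lets the case analysis absorb both $r$ and $\tilde{r}$ cleanly.
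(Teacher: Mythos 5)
Your proof is correct, and it takes a genuinely different route from the paper. The paper introduces two auxiliary FTRL iterates $y_t$ and $z_t$ (FTRL without optimism at learning rates $\eta_{t-1}$ and $\eta_t$ respectively) so that the three displacements $x_t \to y_t \to z_t \to x_{t+1}$ each isolate one source of change (dropping $\hat g_t$, changing the learning rate, adding $\hat g_{t+1}$); each step is bounded by $1/5$ in its own local norm, and Theorem~\ref{thm_hessian_map} is then used to translate all three into the local norm at $x_t$, giving $269/320 < 1$. You instead subtract the two Lagrangian stationarity conditions directly, isolate $\nabla h(x_{t+1}) - \nabla h(x_t)$ as an explicit combination of $g_t - \hat g_t$, $\nabla h(x_t)$, $\hat g_{t+1}$, and a multiple of $e$, pair it against $x_{t+1} - x_t$, and use the two self-concordance lower bounds (in the local norms at $x_t$ and at $x_{t+1}$) plus a case split. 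What your approach buys is the elimination of the auxiliary iterates and of Theorem~\ref{thm_hessian_map} entirely; the two-sided self-concordance lower bound is exactly the tool that replaces the Hessian-ratio translation between $\norm{\cdot}_{x_t}$ and $\norm{\cdot}_{x_{t+1}}$, and you correctly identify that the $\hat g_{t+1}$ term must be Cauchy--Schwarzed in the $x_{t+1}$ norm to avoid a circular bootstrap. The paper's decomposition is arguably easier to discover term by term, while your identity-based argument is shorter and gives a marginally cleaner constant.

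One small housekeeping remark: the use of Lagrangian stationarity with only an equality-constraint multiplier is justified because the log-barrier forces $x_t, x_{t+1} \in \ri\Delta$, so the nonnegativity constraints are inactive; this is implicit in your write-up and worth stating.
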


We start with checking the conditions in Lemma~\ref{lem_stability_distance}. 
\begin{itemize}
\item It is obvious that $\eta_t \leq 1/(16\sqrt{2}) \leq 1/6 $. 
\item By the definition of the dual local norm \eqref{eq_local_norm}, the definition of $\hat{g}_t$, and Lemma~\ref{lem_bounded_local_norm}, 
\begin{align*}
\norm{ \hat{g}_t }_{x_t, \ast} & = \norm{ \hat{g}_t \odot x_t }_2 = \norm{ g_{t - 1} \odot x_{t - 1} }_2 = \norm{ g_{t - 1} }_{x_{t - 1}, \ast} \leq 1, \quad \forall t \geq 2 . 
\end{align*}
\item Then, by the triangle inequality and Lemma~\ref{lem_bounded_local_norm}, 
\[
\eta_{t - 1} \norm{ g_t - \hat{g}_t }_{x_t, \ast} \leq \frac{1}{16\sqrt{2}} \left( \norm{ g_t }_{x_t, \ast} + \norm{ \hat{g}_t }_{x_t, \ast} \right) \leq \frac{1}{8 \sqrt{2}} \leq \frac{1}{6 }. 
\]
\item It is obvious that the sequence $\{ \eta_t \}$ is non-increasing. 
By the triangle inequality, Lemma~\ref{lem_bounded_local_norm} and the fact that $\sqrt{a + b} \leq \sqrt{a} + \sqrt{b}$ for non-negative numbers $a$ and $b$, we write 
\begin{align*}
\frac{\eta_{t}}{\eta_{t-1}}
& = \frac{\sqrt{512d + 2 + V_{t-1}}}{\sqrt{512d + 2 + V_{t}}} \\
& \geq \frac{\sqrt{512d + 2 + V_{t-1}}}{\sqrt{512d + 2 + V_{t - 1} + 2}} \\
& \geq \frac{\sqrt{512d + 2 + V_{t - 1}}}{\sqrt{ 512d + 2 + V_{t - 1} } + \sqrt{2}} \\
& = 1 - \frac{\sqrt{2}}{\sqrt{512d + 2 + V_{t - 1} }+ \sqrt{2}} \\
& \geq 1 - \frac{1}{16\sqrt{d}} \\
& \geq 1 - \frac{1}{6\sqrt{d}}.
\end{align*}
\end{itemize}

Then, Lemma~\ref{lem_stability_distance} implies that
\[
r_t \coloneqq \norm{ x_{t + 1} - x_t }_{x_t} \leq 1 . 
\]
By Proposition~\ref{prop_omega},
\[
  \omega(\norm{x_t-x_{t+1}}_{x_t})\geq \frac{r_t^2}{4}.
\]
By Corollary~\ref{cor_opt_lb_ftrl},
\begin{align*}
	R_T
	&\leq \frac{d\log T}{\eta_T}
	+ \sum_{t=1}^T \left( \braket{g_t - p_t \oslash x_{t}, x_t - x_{t+1}} - \frac{1}{4\eta_{t-1}}r_t^2 \right) + 2.
\end{align*}

For $t\geq 2$, by H\"{o}lder's inequality and the definition of the dual local norm \eqref{eq_local_norm},
\begin{align*}
	\braket{g_t - p_t \oslash x_{t}, x_t - x_{t+1}}
	&\leq \norm{g_t - p_t\oslash x_t}_{x_t,\ast} r_t \\
	&= \norm{x_t\odot g_t - p_t}_{2} r_t \\
	&= \norm{x_t\odot g_t - x_{t-1}\odot g_{t-1}}_{2} r_t.
\end{align*}
By the triangle inequality, 
\begin{align*}
	& \norm{x_t\odot g_t - x_{t-1}\odot g_{t-1}}_{2} r_t \\
	& \quad \leq \norm{x_t\odot g_t - x_{t-1}\odot \nabla f_t(x_{t-1})}_{2} r_t + \norm{x_{t-1}\odot \nabla f_t(x_{t-1}) - x_{t-1}\odot g_{t-1}}_{2} r_t .
\end{align*}
We bound the two terms separately.
By the AM-GM inequality and Lemma~\ref{lem_smooth_gradual_variation}, the first term is bounded by
\begin{align*}
	\norm{x_t\odot g_t - x_{t-1}\odot \nabla f_t(x_{t-1})}_{2}r_t
	&\leq 4\eta_{t-1}\norm{x_t\odot g_t - x_{t-1}\odot \nabla f_t(x_{t-1})}_{2}^2
	+ \frac{1}{16\eta_{t-1}} r_t^2 \\
	&\leq 64\eta_{t-1} r_{t-1}^2 + \frac{1}{16\eta_{t-1}} r_t^2.
\end{align*}
By the AM-GM inequality, the second term is bounded by
\begin{align*}
	\norm{x_{t-1}\odot \nabla f_t(x_{t-1}) - x_{t-1}\odot g_{t-1}}_{2}r_t
	&= \norm{\nabla f_t(x_{t-1}) - g_{t-1}}_{x_{t-1},\ast} r_t \\
	&\leq 4\eta_{t-1}\norm{\nabla f_t(x_{t-1}) - g_{t-1}}_{x_{t-1},\ast}^2 + \frac{1}{16\eta_{t-1}} r_t^2.
\end{align*}
Combine all inequalities above. 
We obtain
\begin{equation}\label{eq_tgeq2}
\begin{split}
	&\braket{g_t - p_t \oslash x_{t}, x_t - x_{t+1}} - \frac{1}{4\eta_{t-1}}r_t^2 \\
	&\qquad \leq 4\eta_{t-1}\norm{\nabla f_t(x_{t-1}) - g_{t-1}}_{x_{t-1},\ast}^2
	+ 64\eta_{t-1} r_{t-1}^2
	- \frac{1}{8\eta_{t-1}} r_t^2
\end{split}
\end{equation}

For $t=1$, by a similar argument,
\begin{equation}\label{eq_t=1}
  \braket{g_1, x_1 - x_{2}}
  - \frac{1}{4\eta_0}r_1^2
  \leq r_1 - \frac{1}{4\eta_0}r_1^2
  \leq \frac{1}{2} (4\eta_0 + \frac{1}{4\eta_0}r_1^2)
  - \frac{1}{4\eta_0}r_1^2
  = 2\eta_0 - \frac{1}{8\eta_0}r_1^2.
\end{equation}

By combining \eqref{eq_tgeq2} and \eqref{eq_t=1} and noticing that $\eta_{t-1}\eta_t \leq 1/512$, 
\begin{align*}
  &\sum_{t=1}^T \braket{g_t - p_t \oslash x_{t}, x_t - x_{t+1}} - \frac{1}{4\eta_{t-1}} r_t^2  \\
  &\qquad \leq 2\eta_0 + \sum_{t=2}^T 4\eta_{t-1}\norm{\nabla f_t(x_{t-1}) - g_{t-1}}_{x_{t-1},\ast}^2
  + \sum_{t=1}^{T-1} \left(64\eta_{t} - \frac{1}{8\eta_{t-1}}\right)r_t^2 \\
  &\qquad \leq 2\eta_0 + \sum_{t=2}^T 4\eta_{t-1}\norm{\nabla f_t(x_{t-1}) - g_{t-1}}_{x_{t-1},\ast}^2.
\end{align*}
By Corollary~\ref{cor_opt_lb_ftrl},
\[
  R_T \leq \frac{d\log T}{\eta_T}
  + 2\eta_0 + \sum_{t=2}^T 4\eta_{t-1}\norm{\nabla f_t(x_{t-1}) - \nabla f_{t-1}(x_{t-1})}_{x_{t-1},\ast}^2 + 2.
\]

Plug in the choice of learning rates into the regret bound. 
We obtain \citep[Lemma~4.13]{Orabona2022}
\begin{align*}
  R_T
	&\leq \sqrt{d}\log T \sqrt{V_T + 512d + 2}
  + 4\sqrt{d}\sum_{t=2}^T \frac{\norm{\nabla f_t(x_{t-1}) - \nabla f_{t-1}(x_{t-1})}_{x_{t-1},\ast}^2}{\sqrt{512d + 2 + V_{t-1}}} + 3
  \\
    &\leq \sqrt{d}\log T \sqrt{V_T + 512d + 2}
  + 4\sqrt{d}\sum_{t=2}^T \frac{\norm{\nabla f_t(x_{t-1}) - \nabla f_{t-1}(x_{t-1})}_{x_{t-1},\ast}^2}{\sqrt{512d + V_{t}}} + 3
  \\
  &\leq \sqrt{d}\log T (\sqrt{V_T + 512d} + \sqrt{2})
  + 4\sqrt{d}\int_0^{V_T} \frac{\mathrm{d}s}{\sqrt{512d + s}} + 3 \\
  &\leq 
  (\log T + 8)\sqrt{dV_T + 512d^2} + \sqrt{2d}\log T + 2 - 128\sqrt{2d}.
\end{align*}

\subsection{Proof of Lemma~\ref{lem_stability_distance}}
Define
\[
  y_t \in \argmin_{x\in\Delta} \braket{g_{1:t}, x} + \frac{1}{\eta_{t-1}}h(x),
  \quad z_t\in \argmin_{x\in\Delta} \braket{g_{1:t}, x} + \frac{1}{\eta_{t}}h(x).
\]
By the triangle inequality,
\begin{equation}\label{eq_triangle_inequality}
  \norm{x_t - x_{t+1}}_{x_t} \leq \norm{x_t - y_t}_{x_t} + \norm{y_t - z_t}_{x_t} + \norm{z_t - x_{t+1}}_{x_t}.
\end{equation}
Define $\smash {d_1 \coloneqq \norm{x_t - y_t}_{x_t} }$, $\smash{ d_2 \coloneqq \norm{y_t - z_t}_{y_t} }$, and $\smash{ d_3 \coloneqq \norm{z_t - x_{t+1}}_{x_{t+1}} }$.
Suppose that $d_i \leq 1/5$ for all $1 \leq i \leq 3$, which we will verify later. 
By Theorem~\ref{thm_hessian_map},
\begin{equation}\label{eq_hess}
  \begin{split}
  &(1-d_1)^2\nabla^2 h(x_t) \leq \nabla^2 h(y_t) \leq \frac{1}{(1-d_1)^2} \nabla^2 h(x_t), \\
  &(1-d_2)^2\nabla^2 h(y_t) \leq \nabla^2 h(z_t) \leq \frac{1}{(1-d_2)^2} \nabla^2 h(y_t), \\
  &(1-d_3)^2\nabla^2 h(x_{t+1}) \leq \nabla^2 h(z_t) \leq \frac{1}{(1-d_3)^2} \nabla^2 h(x_{t+1}).
  \end{split}
\end{equation}
We obtain
\[
  \norm{y_t - z_t}_{x_t}
  \leq \frac{d_2}{1-d_1}, \quad \norm{z_t - x_{t+1}}_{x_t} \leq \frac{d_3}{(1-d_1)(1-d_2)(1-d_3)}.
\]
Then, the inequality \eqref{eq_triangle_inequality} becomes
\[
	\norm{x_t - x_{t+1}}_{x_t} \leq d_1 + \frac{d_2}{1-d_1} + \frac{d_3}{(1-d_1)(1-d_2)(1-d_3)} . 
\]
It is easily checked that the maximum of the right-hand side is attained at $d_1 = d_2 = d_3 = 1/5$ and the maximum value equals $269/320$. 
This proves the lemma.

Now, we prove that $d_i \leq 1/5$ for all $1 \leq i \leq 3$.
\begin{enumerate}
\item (Upper bound of $d_1$.)
By the optimality conditions of $x_t$ and $y_t$,
\[
\begin{split}
  &\braket{\eta_{t-1} g_{1:t} + \nabla h(y_t), x_t - y_t} \geq 0 \\
  &\braket{\eta_{t-1} g_{1:t-1} + \eta_{t-1} \hat{g}_t + \nabla h(x_t), y_t - x_t} \geq 0.
\end{split}
\]
Sum up the two inequalities. 
We get
\[
\eta_{t - 1} \braket{ g_t - \hat{g}_t, x_t - y_t } \geq \braket{ \nabla h ( x_t ) - \nabla h ( y_t ), x_t - y_t }
\]
By H\"{o}lder's inequality, Lemma~\ref{lem_log_barrier_self_concordance}, and Theorem~\ref{thm_self_concordant_bound}, we obtain 
\begin{align*}
  \eta_{t-1}\norm{g_t-\hat{g}_t}_{x_t,\ast} d_1 \geq \frac{d_1^2}{1 + d_1}.
\end{align*}
Then, by the assumption \eqref{eq_eta_assumption}, we write 
\[
	d_1 \leq \frac{\eta_{t-1}\norm{g_t-\hat{g}_t}_{x_t,\ast}}{1 - \eta_{t-1}\norm{g_t-\hat{g}_t}_{x_t,\ast}} \leq 1/5.
\]

\item
(Upper bound of $d_2$.)
By the optimality conditions of $y_t$ and $z_t$,
\[
\begin{split}
  &\braket{ g_{1:t} + \frac{1}{\eta_{t}}\nabla h(z_t), y_t - z_t} \geq 0 \\
  &\braket{ g_{1:t}  + \frac{1}{\eta_{t-1}}\nabla h(y_t), z_t - y_t} \geq 0.
\end{split}
\]
Sum up the two inequalities. 
We get
\[
  \left(\frac{1}{\eta_t} - \frac{1}{\eta_{t-1}}\right)\braket{ \nabla h(y_t), y_t - z_t } \geq \frac{1}{\eta_t} \braket{ \nabla h(y_t) - \nabla h(z_t), y_t - z_t }.
\]
By Definition~\ref{def_barrier}, Lemma~\ref{lem_barrier_parameter}, and Theorem~\ref{thm_self_concordant_bound},
\begin{align*}
	\left(\frac{1}{\eta_t} - \frac{1}{\eta_{t-1}}\right)\sqrt{d}d_2
	\geq \frac{1}{\eta_t} \frac{d_2^2}{1+d_2}.
\end{align*}
Then, by the assumption \eqref{eq_eta_assumption}, we write
\[
	d_2 \leq \frac{(1-\eta_t/\eta_{t-1})\sqrt{d}  }{ 1 - (1-\eta_t/\eta_{t-1})\sqrt{d}}
	\leq \frac{1}{5}.
\]

\item
(Upper bound of $d_3$.)
By the optimality conditions of $z_t$ and $x_{t+1}$,
\[
\begin{split}
  &\braket{ \eta_t g_{1:t} + \eta_t \hat{g}_{t+1} + \nabla h(x_{t+1}), z_t - x_{t+1}} \geq 0 \\
  &\braket{ \eta_t g_{1:t} + \nabla h(z_t), x_{t+1} - z_t} \geq 0.
\end{split}
\]
Sum up the two inequalities. 
We get
\[
\eta_{t}\braket{\hat{g}_{t+1}, z_t - x_{t+1}} \geq \braket{\nabla h(z_t) - \nabla h(x_{t+1}), z_t -  x_{t+1}} . 
\]
By H\"{o}lder's inequality, the assumption \eqref{eq_eta_assumption}, and Theorem~\ref{thm_self_concordant_bound}, we obtain
\begin{align*}
	\eta_t d_3 \geq \eta_t\norm{\hat{g}_{t+1}}_{x_{t+1}, \ast} d_3 \geq \frac{d_3^2}{1 + d_3}.
\end{align*}
Then, by the assumption \eqref{eq_eta_assumption}, we write
\[
	d_3 \leq \frac{\eta_t}{1-\eta_t} \leq 1/5.
\]
\end{enumerate}
\section{Proof of Theorem~\ref{thm_ada_lb_ftrl} (Small-Loss Bound)}\label{app_small_loss}

Recall that $\alpha_t$ minimizes $\norm{g_t + \alpha e}_{x_t,\ast}^2$ over all $\alpha \in \R$ (Remark \ref{rem_alpha}). 
We write 
\[
\norm{g_t + \alpha_t e}_{x_t,\ast}^2 \leq \norm{ g_t }_{x_t, \ast} ^ 2 \leq 1 , 
\]
which, with the observation that $\eta_t \leq 1/2$ for all $t \in \N$, implies that $\eta_{t-1} \norm{ g_t + \alpha_t e }_{x_t, \ast} \leq 1/2$. 
Then, since the all-ones vector $e$ is orthogonal to the set $\Delta - \Delta$, by Corollary~\ref{cor_opt_lb_ftrl} with $p_t = 0$ and $u_t = \alpha_t e$, we get
\begin{align*}
R_T & \leq \frac{d\log T}{\eta_T} + \sum_{t=1}^T \eta_{t-1} \norm{ ( g_t+\alpha_t e ) \odot x_t}_2^2 + 2 \\
& = \frac{d\log T}{\eta_T} + \sum_{t=1}^T \eta_{t-1} \norm{ g_t+\alpha_t e }_{x_t, \ast}^2 + 2
\end{align*}

Plug in the explicit expressions of the learning rates $\eta_t$.
We write \citep[Lemma~4.13]{Orabona2022}
\begin{align*}
	R_T 
	&\leq \frac{d\log T}{\eta_T} + \sqrt{d} \sum_{t=1}^T  \frac{ \norm{ g_t+\alpha_t e }_{x_t, \ast}^2 }{\sqrt{4d+1+\sum_{\tau=1}^{t-1} \norm{g_\tau + \alpha_\tau e}_{x_\tau,\ast}^2 }} + 2
	\\
	&\leq \frac{d\log T}{\eta_T} + \sqrt{d} \sum_{t=1}^T  \frac{ \norm{ g_t+\alpha_t e }_{x_t, \ast}^2 }{\sqrt{4d+\sum_{\tau=1}^{t} \norm{g_\tau + \alpha_\tau e}_{x_\tau,\ast}^2 }} + 2
	\\
	&\leq \sqrt{d}\log T \sqrt{4d + 1 + \sum_{t=1}^T \norm{g_t+\alpha_t e}_{x_t,\ast}^2 }
	+ \sqrt{d}\int_0^{\sum_{t=1}^T \norm{g_t+\alpha_t e}_{x_t,\ast}^2} \frac{ \mathrm{d}s }{\sqrt{4d + s}} + 2
	\\
	&\leq (\log T + 2)\sqrt{d\sum_{t=1}^T \norm{g_t+\alpha_t e}_{x_t,\ast}^2 + 4d^2 + d}
	\\
	&\leq (\log T + 2)\sqrt{d\sum_{t=1}^T \norm{g_t+\alpha_t e}_{x_t,\ast}^2 + 4d^2 + d}.
\end{align*}
Finally, by Lemma~\ref{lem_smooth_small_loss},
\[
	R_T \leq (\log T + 2)\sqrt{ 4d\sum_{t=1}^T f_t(x_t) + 4d^2 + d}.
\]
The theorem then follows from Lemma~4.23 of \citet{Orabona2022}.
\section{A Second-Order Regret Bound}
\label{app_second_order}
In this section, we introduce AA + LB-FTRL with Average-Multiplicative-Gradient Optimism (Algorithm~\ref{alg_aa}) that achieves a second-order regret bound.
This bound is characterized by the variance-like quantity
\[
	Q_T \coloneqq \min_{p\in\R^d} \sum_{t=1}^T\norm{x_t\odot \nabla f_t(x_t) - p}_2^2.
\]

For comparison, existing second-order bounds are characterized by the quantity 
\[
V_T \coloneqq \min_{u \in \R^d} \sum_{t = 1}^T \norm{ \nabla f_t ( x_t ) - u }_2^2 , 
\]
$T$ times the empirical variance of the gradients (see, e.g., Section 7.12.1 in the lecture note of \citet{Orabona2022}). 
Given that every $p \in \R^d$ can be expressed as $p = x_t \odot v$ for $v = p \oslash x_t$, and by the definition of the dual local norm \eqref{eq_local_norm}, we obtain 
\[
Q_T = \min_{v \in \R^d} \sum_{t=1}^T \norm{ \nabla f_t(x_t) - v }_{x_t, \ast}^2 , 
\]
showing that $Q_T$ is a local-norm analogue of $V_T$. 

The flexibility introduced by optimizing $Q_T$ over all $v \in \R^d$ allows for a fast regret rate when the data is easy. 
In particular, Theorem~\ref{thm_aa} in Section~\ref{sec_aa} shows that when $Q_T \leq 1$, then Algorithm~\ref{alg_aa} achieves a fast $O ( d \log T )$ regret rate. 
However, while $Q_T$ looks reasonable as a mathematical extension of $V_T$, the empirical variance interpretation for $V_T$ does not extend to $Q_T$. 
Interpreting $Q_T$ or seeking a ``more natural'' variant of $Q_T$ is left as a future research topic. 

\subsection{The Gradient Estimation Problem}
To obtain a better regret bound by using Corollary~\ref{cor_opt_lb_ftrl}, one has to design a good strategy of choosing $p_t$.
In Section~\ref{sec_gradual_variation}, we have used $p_t = x_{t-1}\odot g_{t-1}$.
In this section, we consider a different approach.

Consider the following online learning problem, which we call the \textit{Gradient Estimation Problem}.
It is a multi-round game between \textsc{Learner} and \textsc{Reality}.
In the $t$-th round, \textsc{Learner} chooses a vector $p_t\in\R^d$; 
then, \textsc{Reality} announces a convex loss function $\smash{ \ell_t(p) = \eta_{t-1}\norm{x_t\odot g_t - p}_{2}^2 }$; 
finally, \textsc{Learner} suffers a loss $\ell_t(p_t)$.

\subsection{LB-FTRL with Average-Multiplicative-Gradient Optimism}
In the gradient estimation problem, the loss functions are strongly-convex.
A natural strategy of choosing $p_t$ is then following the leader (FTL).
In the $t$-th round, FTL suggests choosing
\[
	p_t = \frac{1}{\eta_{0:t-2}}\sum_{\tau=1}^{t-1}\eta_{\tau-1} x_\tau\odot g_\tau
	\in \argmin_{p\in\R^d} \sum_{\tau=1}^{t-1} \ell_\tau(p).
\]
This strategy leads to Algorithm~\ref{alg_avg_grad_opt_lb_ftrl}.
Theorem~\ref{thm_avg_grad_opt_lb_ftrl} provides a regret bound for Algorithm~\ref{alg_avg_grad_opt_lb_ftrl}.

\begin{algorithm}[ht]
\caption{LB-FTRL with Average-Multiplicative-Gradient Optimism} 
\label{alg_avg_grad_opt_lb_ftrl}
\hspace*{\algorithmicindent} \textbf{Input: } A sequence of learning rates $\{\eta_t\}_{t\geq 0}\subseteq\R_{++}$.
\begin{algorithmic}[1]
\STATE $h ( x ) \coloneqq - d \log d - \sum_{i = 1}^d \log x ( i )$. 
\STATE $x_1 \leftarrow \argmin_{x\in\Delta} \eta_0^{-1}h(x)$.
\FORALL{$t \in \N$}
	\STATE Announce $x_{t}$ and receive $a_t$.
	\STATE $g_t \coloneqq \nabla f_t(x_t)$.
	\STATE $p_{t+1} \leftarrow (1/\eta_{0:t-1})\sum_{\tau=1}^t \eta_{\tau-1} x_\tau \odot g_\tau$.
	\STATE Solve $x_{t+1}$ and $\hat{g}_{t+1}$ satisfying
	\[
\begin{cases}
	x_{t+1} \odot \hat{g}_{t+1} = p_{t+1}, \\
	x_{t+1} \in \argmin_{x\in\Delta} \braket{g_{1:t},x} + \braket{\hat{g}_{t+1}, x} + \eta_t^{-1}h(x).
\end{cases}
\]
\ENDFOR
\end{algorithmic}
\end{algorithm}

\begin{theorem}\label{thm_avg_grad_opt_lb_ftrl}
Assume that the sequence of learning rates $\{\eta_t\}$ is non-increasing and $\eta_0\leq 1/(2\sqrt{2})$.
Then, Algorithm~\ref{alg_avg_grad_opt_lb_ftrl} achieves
\[
    R_T \leq \frac{d\log T}{\eta_T} + 2\sum_{t=1}^T\frac{\eta_{t-1}^2}{\eta_{0:t-1}} + 2 + \min_{p\in \R^d}\sum_{t=1}^T \eta_{t-1}\norm{x_t\odot\nabla f_t(x_t) - p}_2^2.
\]
If $\eta_t = \eta \leq 1/(2\sqrt{2})$ is a constant, then
\[
  R_T \leq \frac{d\log T}{\eta} + 2\eta(\log T+1) + 2 + \eta Q_T.
\]
\end{theorem}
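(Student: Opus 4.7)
The plan is to instantiate Corollary~\ref{cor_opt_lb_ftrl} with $u_t\equiv 0$ and then view the auxiliary sequence $\{p_t\}$ (with the convention $p_1=0$) as follow-the-leader (FTL) iterates for the quadratic loss sequence $\ell_t(p) \coloneqq \eta_{t-1}\norm{x_t\odot g_t - p}_2^2$. First I verify the hypotheses of Corollary~\ref{cor_opt_lb_ftrl}. By Lemma~\ref{lem_simplex}, each $x_\tau\odot g_\tau$ lies in $-\Delta$, so the convex combination $p_t$ also belongs to $-\Delta\subseteq\interval[open left]{-\infty}{0}^d$. Since the $\ell_2$-diameter of the probability simplex equals $\sqrt{2}$, we have $\norm{x_t\odot g_t - p_t}_2\leq\sqrt{2}$, and together with $\eta_{t-1}\leq\eta_0\leq 1/(2\sqrt{2})$ this gives $\eta_{t-1}\norm{x_t\odot g_t - p_t}_2\leq 1/2$. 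Corollary~\ref{cor_opt_lb_ftrl} therefore yields
\[
  R_T \leq \frac{d\log T}{\eta_T} + \sum_{t=1}^T \ell_t(p_t) + 2.
\]

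Next I would convert $\sum_t \ell_t(p_t)$ into the desired form using an FTL analysis. A direct first-order calculation identifies the algorithm's $p_{t+1}$ with $\argmin_{p\in\R^d}\sum_{s\leq t}\ell_s(p)$, so $\{p_t\}$ is exactly the FTL sequence on $\{\ell_t\}$. The Be-The-Leader lemma then produces
\[
  \sum_{t=1}^T \ell_t(p_t) - \min_{p\in\R^d}\sum_{t=1}^T \ell_t(p) \leq \sum_{t=1}^T\bigl(\ell_t(p_t) - \ell_t(p_{t+1})\bigr).
\]
Because $\ell_s$ is $2\eta_{s-1}$-strongly convex, $F_{t+1}(p)\coloneqq\sum_{s\leq t}\ell_s(p)$ is $2\eta_{0:t-1}$-strongly convex with minimizer $p_{t+1}$. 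Combining the optimality $F_t(p_t)\leq F_t(p_{t+1})$ with the standard inequality $f(x)-\min f\leq \norm{\nabla f(x)}_2^2/(2\mu)$ for a $\mu$-strongly convex $f$ applied to $F_{t+1}$ yields
\[
  \ell_t(p_t)-\ell_t(p_{t+1})\leq F_{t+1}(p_t)-F_{t+1}(p_{t+1})\leq \frac{\norm{\nabla\ell_t(p_t)}_2^2}{4\eta_{0:t-1}}=\frac{\eta_{t-1}^2\norm{x_t\odot g_t-p_t}_2^2}{\eta_{0:t-1}}\leq\frac{2\eta_{t-1}^2}{\eta_{0:t-1}},
\]
the last step reusing $\norm{x_t\odot g_t-p_t}_2^2\leq 2$. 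Summing and substituting into the first display proves the first inequality of the theorem.

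For the constant-learning-rate case, $\eta_{0:t-1}=t\eta$, so $2\sum_{t=1}^T\eta^2/(t\eta)\leq 2\eta(\log T+1)$, which after substitution gives the second inequality with $Q_T=\min_{p\in\R^d}\sum_{t=1}^T\norm{x_t\odot\nabla f_t(x_t)-p}_2^2$. The most delicate step is extracting the sharp constant in $2\sum_t \eta_{t-1}^2/\eta_{0:t-1}$: bounding $\ell_t(p_t)-\ell_t(p_{t+1})$ only via $\braket{\nabla\ell_t(p_t),p_t-p_{t+1}}$ plus Cauchy-Schwarz on $\norm{p_t-p_{t+1}}_2\leq\norm{\nabla\ell_t(p_t)}_2/\mu_{1:t}$ inflates the constant, so the argument must route through the quadratic bound $F_{t+1}(p_t)-\min F_{t+1}\leq\norm{\nabla\ell_t(p_t)}_2^2/(2\mu_{1:t})$ while exploiting the telescoping optimality $F_t(p_t)\leq F_t(p_{t+1})$.
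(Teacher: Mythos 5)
Your overall approach matches the paper's: apply Corollary~\ref{cor_opt_lb_ftrl} with $u_t\equiv 0$, observe that $\{p_t\}$ are the FTL iterates on the quadratic losses $\ell_t$, and invoke a regret bound for FTL with strongly convex losses (the paper simply cites \citet[Corollary~7.24]{Orabona2022}). Your verification of the hypotheses of Corollary~\ref{cor_opt_lb_ftrl} --- that $p_t\in-\Delta$ by Lemma~\ref{lem_simplex}, that the simplex has $\ell_2$-diameter $\sqrt 2$, and hence $\eta_{t-1}\norm{x_t\odot g_t-p_t}_2\le\sqrt 2\,\eta_0\le 1/2$ --- is correct and is the same verification the paper does implicitly.

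However, your hand-proved FTL bound contains a sign error that makes the chain of inequalities invalid as written. You claim that the optimality condition $F_t(p_t)\le F_t(p_{t+1})$ gives
\[
\ell_t(p_t)-\ell_t(p_{t+1})\le F_{t+1}(p_t)-F_{t+1}(p_{t+1}).
\]
But $F_{t+1}(p_t)-F_{t+1}(p_{t+1})=\bigl(F_t(p_t)-F_t(p_{t+1})\bigr)+\bigl(\ell_t(p_t)-\ell_t(p_{t+1})\bigr)$, and since the first parenthesis is $\le 0$, the true direction is the \emph{opposite}:
$F_{t+1}(p_t)-F_{t+1}(p_{t+1})\le \ell_t(p_t)-\ell_t(p_{t+1})$. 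So you cannot push the BTL term $\ell_t(p_t)-\ell_t(p_{t+1})$ below the strong-convexity bound on $F_{t+1}(p_t)-\min F_{t+1}$; bounding the BTL term directly via Cauchy--Schwarz and $\norm{p_t-p_{t+1}}_2\le\norm{\nabla\ell_t(p_t)}_2/(2\eta_{0:t-1})$ only gives a per-term bound of $4\eta_{t-1}^2/\eta_{0:t-1}$, losing a factor of~$2$. The fix is to avoid the BTL intermediate altogether and use the strong-FTRL decomposition directly: since $F_1\equiv 0$, the sum telescopes to an \emph{identity},
\[
\sum_{t=1}^T\ell_t(p_t)-\min_{p}\sum_{t=1}^T\ell_t(p)=\sum_{t=1}^T\bigl(F_{t+1}(p_t)-F_{t+1}(p_{t+1})\bigr),
\]
and each term on the right is bounded by $\norm{\nabla F_{t+1}(p_t)}_2^2/(4\eta_{0:t-1})=\norm{\nabla\ell_t(p_t)}_2^2/(4\eta_{0:t-1})\le 2\eta_{t-1}^2/\eta_{0:t-1}$, using $\nabla F_t(p_t)=0$ and the strong convexity of $F_{t+1}$. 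This is precisely what the cited \citet[Corollary~7.24]{Orabona2022} packages. Your last paragraph shows you sensed the constant issue, but the resolution you describe --- combining BTL with the $F_{t+1}$ bound via the ``telescoping optimality'' --- does not actually give the claimed per-term inequality.
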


\begin{proof}
Since $p_t$ is a convex combination of $x_\tau\odot g_\tau$, by Lemma~\ref{lem_simplex}, $p_t\in-\Delta$.
We restrict the action set of the gradient estimation problem to be $-\Delta$.
By the regret bound of FTL for strongly convex losses \citep[Corollary 7.24]{Orabona2022},
\[
  \sum_{t=1}^T \ell_t(p_t) - \min_{p\in -\Delta}\sum_{t=1}^T \ell_t(p)
  \leq \frac{1}{2}\sum_{t=1}^T \frac{(2\sqrt{2}\eta_{t-1})^2}{2\eta_{0:t-1}}
  = 2\sum_{t=1}^T\frac{\eta_{t-1}^2}{\eta_{0:t-1}}.
\]

Then, by Corollary~\ref{cor_opt_lb_ftrl},
\[
  R_T \leq \frac{d\log T}{\eta_T} + 2 + \sum_{t=1}^T \ell_t(p_t)
  \leq \frac{d\log T}{\eta_T} + 2 + 2\sum_{t=1}^T\frac{\eta_{t-1}^2}{\eta_{0:t-1}}
  + \min_{p\in-\Delta} \sum_{t=1}^T \ell_t(p).
\]
The first bound in the theorem follows from
\[
  \min_{p\in-\Delta} \sum_{t=1}^T \ell_t( p )
  = \min_{p\in\R^d} \sum_{t=1}^T \eta_{t-1}\norm{x_t\odot g_t - p}_2^2.
\]
The second bound in the theorem follows from the inequality $\sum_{t=1}^T \eta^2/(t\eta) \leq \eta(\log T + 1)$.
\end{proof}

\paragraph{Time Complexity.}
Suppose that a constant learning rate is used.
Then, both $g_t$ and $p_{t+1}$ can be computed in $O(d)$ arithmetic operations.
By Theorem~\ref{thm_existence}, $x_{t+1}$ can be computed in $\tildeO(d)$ time.
Hence, the per-round time of Algorithm~\ref{alg_avg_grad_opt_lb_ftrl} is $\tildeO(d)$.

\subsection{AA + LB-FTRL with Average-Multiplicative-Gradient Optimism} \label{sec_aa}
The constant learning rate that minimizes the regret bound in Theorem~\ref{thm_avg_grad_opt_lb_ftrl} is $\eta = O(\sqrt{ d\log T/Q_T })$.
However, the value of $Q_T$ is not unknown to \textsc{Investor} initially. 

In this section, we propose Algorithm~\ref{alg_aa}, which satisfies a second-order regret bound without the need for knowing $Q_T$ in advance. 
Algorithm~\ref{alg_aa} uses multiple instances of Algorithm~\ref{alg_avg_grad_opt_lb_ftrl}, which we call \emph{experts}, with different learning rates. 
The outputs of the experts are aggregated by the Aggregating Algorithm (AA) due to \citet{Vovk1998}.

\begin{algorithm}[h] 
\caption{AA + LB-FTRL with Average-Multiplicative-Gradient Optimism}
\label{alg_aa}
\hspace*{\algorithmicindent} \textbf{Input: } The time horizon $T$.
\begin{algorithmic}[1]
\STATE $K \coloneqq 1 + \lceil \log_2T \rceil$.
\FORALL{$k \in [K]$}
	\STATE $w_1^{(k)} \leftarrow 1$.
	\STATE $q^{(k)} \leftarrow 2^k$.
	\STATE $\eta^{(k)} \leftarrow \frac{\sqrt{d \log T}}{2\sqrt{2d\log T} + \sqrt{q^{(k)}}}$.
	\STATE Initialize Algorithm~\ref{alg_avg_grad_opt_lb_ftrl} with the constant learning rate $\eta^{(k)}$ as the $k$-th expert $\mathcal{A}_k$.
	\STATE Obtain $x_1^{(k)}$ from $\mathcal{A}_k$.
\ENDFOR
\STATE $x_1 \coloneqq \frac{1}{\sum_{k=1}^K w_1^{(k)}}\sum_{k=1}^K w_1^{(k)} x_1^{(k)}$.
\FORALL{$t \in [T]$}
	\STATE Observe $a_t$.
	\FORALL{$k \in [K]$}
		\STATE $w_{t+1}^{(k)} = w_{t}^{(k)}\braket{a_t, x_{t}^{(k)}}$.
		\STATE Feed $a_t$ into $\mathcal{A}_k$ and obtain $x_{t+1}^{(k)}$.
	\ENDFOR
	\STATE Output $x_{t+1} \coloneqq \frac{1}{\sum_{k=1}^K w_{t+1}^{(k)}}\sum_{k=1}^K w_{t+1}^{(k)} x_{t+1}^{(k)}$.
\ENDFOR
\end{algorithmic}
\end{algorithm}

The regret bound of Algorithm~\ref{alg_aa} is stated in Theorem~\ref{thm_aa}, which follows from the regret of AA and Theorem~\ref{thm_avg_grad_opt_lb_ftrl}.

\begin{theorem}\label{thm_aa}
Algorithm~\ref{alg_aa} with input $T\in\N$ satisfies
\[
	R_T \leq \begin{cases}
		(1+\sqrt{2})\sqrt{d Q_T\log T} + 2\sqrt{2}d\log T + \frac{1}{\sqrt{2}}\log T + \log(\log_2T+2) + 3, &\text{if } Q_T\geq 1, \\
		2\sqrt{2}d\log T + \frac{1}{\sqrt{2}}\log T + \sqrt{2d\log T} + \log(\log_2T+2) + 4, &\text{if }Q_T<1.
	\end{cases}
\]
\end{theorem}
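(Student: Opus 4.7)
The plan is to combine two ingredients: the standard regret guarantee of the Aggregating Algorithm (AA) for the logarithmic loss, which loses only $\log K$ against the best expert, and the per-expert bound from Theorem~\ref{thm_avg_grad_opt_lb_ftrl}. The case analysis in the statement then reduces to picking the right index $k^\star$ within the grid $\{q^{(k)}\} = \{2, 4, \ldots, 2^K\}$.

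First I would recall that the logarithmic loss is $1$-mixable and that the AA weight update $w_{t+1}^{(k)} = w_t^{(k)} \braket{a_t, x_t^{(k)}}$ together with the convex-combination prediction $x_t = \sum_k w_t^{(k)} x_t^{(k)} / \sum_k w_t^{(k)}$ corresponds exactly to Vovk's scheme with $\eta = 1$. The standard AA analysis then yields
\[
\sum_{t=1}^T f_t(x_t) \leq \min_{k\in[K]} \sum_{t=1}^T f_t(x_t^{(k)}) + \log K,
\]
so for any comparator $x \in \Delta$, $R_T(x) \leq R_T^{(k)}(x) + \log K$ for every $k$. Since $K = 1 + \lceil \log_2 T \rceil \leq \log_2 T + 2$, the additive overhead is at most $\log(\log_2 T + 2)$.

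Next I would instantiate Theorem~\ref{thm_avg_grad_opt_lb_ftrl} for each expert $\mathcal{A}_k$ with its constant learning rate $\eta^{(k)}$. Note that $\eta^{(k)} \leq 1/(2\sqrt{2})$, so the hypothesis is met, and a direct calculation gives $d\log T / \eta^{(k)} = 2\sqrt{2}\,d\log T + \sqrt{d\log T \cdot q^{(k)}}$ and $2\eta^{(k)}(\log T + 1) \leq (\log T + 1)/\sqrt{2}$. Thus every expert satisfies
\[
R_T^{(k)} \leq 2\sqrt{2}\,d\log T + \sqrt{d\log T \cdot q^{(k)}} + \tfrac{\log T}{\sqrt{2}} + \tfrac{1}{\sqrt{2}} + 2 + \eta^{(k)} Q_T.
\]

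Finally I would choose $k^\star$ by case analysis. When $Q_T \geq 1$, since $Q_T \leq \sum_t \|x_t \odot g_t\|_2^2 \leq T$ (using Lemma~\ref{lem_simplex}) and $q^{(K)} = 2^K \geq 2T$, there exists $k^\star \in [K]$ with $Q_T \leq q^{(k^\star)} < 2Q_T$; then $\sqrt{d\log T \cdot q^{(k^\star)}} \leq \sqrt{2\,d Q_T \log T}$ and $\eta^{(k^\star)} Q_T \leq \sqrt{d\log T /q^{(k^\star)}}\,Q_T \leq \sqrt{d Q_T \log T}$, which assemble into the $(1+\sqrt{2})\sqrt{d Q_T \log T}$ term of the first case after absorbing $\tfrac{1}{\sqrt{2}} + 2 < 3$ into the constant. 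When $Q_T < 1$, I would pick $k^\star = 1$ so that $q^{(1)} = 2$ and $\eta^{(1)} Q_T \leq \eta^{(1)} \leq 1/(2\sqrt{2}) < 1$, giving the second case after absorbing $\tfrac{1}{\sqrt{2}} + 2 + 1 < 4$.

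I do not anticipate any serious obstacle: the only delicate part is verifying that the geometric grid $\{2^k\}_{k=1}^K$ is dense enough to cover all feasible values of $Q_T$, which follows from the trivial upper bound $Q_T \leq T$ and the choice $K = 1 + \lceil \log_2 T\rceil$. The remaining steps are bookkeeping with the inequalities above.
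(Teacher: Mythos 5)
Your proof is correct and follows essentially the same route as the paper: invoke the $1$-mixability of the logarithmic loss to get the $\log K$ overhead from AA, plug in Theorem~\ref{thm_avg_grad_opt_lb_ftrl} with constant learning rate for each expert, and select $k^\star$ from the geometric grid by the case split on $Q_T$. Your bookkeeping of the constants is sound (and your bound $Q_T \leq T$ is in fact slightly tighter than the $Q_T \leq 2T$ the paper states, though both suffice to guarantee $q^{(K)} \geq Q_T$).
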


\begin{remark}
Since $Q_T \leq 2T$, the regret bound in Theorem~\ref{thm_aa} is $O(\sqrt{dT\log T})$ in the worst case.
\end{remark}

\begin{proof}
$K = 1+\lceil \log_2 T\rceil$ is the number of experts in Algorithm~\ref{alg_aa}.
Let $R_T^{(k)}$ be the regret of the $k$-th expert $\mathcal{A}_k$.
Since the loss functions $f_t(x) = -\log\braket{a_t,x}$ are $1$-mixable, by the regret bound of AA \citep{Vovk1998},
\[
  R_T \leq \min_{k\in[K]}R_T^{(k)} + \log K , 
\]
where $R_T$ and $R_T^{(k)}$ denote the regret of Algorithm~\ref{alg_aa} and the regret of the $k$-th expert, respectively. 

The quantity $q^{(k)}$ in Algorithm~\ref{alg_aa} is used as an estimate of $Q_T$.
Notice that $Q_T \leq 2T \leq q^{(K)}$.
If $Q_T \geq 1$, then there exists $k^\star\in[K]$ such that $\smash{ q^{(k^\star-1)} \leq Q_T \leq q^{(k^\star)} }$.
By Theorem~\ref{thm_avg_grad_opt_lb_ftrl} and some direct calculations,
\[
  R_T^{(k^\star)} \leq (1+\sqrt{2})\sqrt{dQ_T\log T} + 2\sqrt{2}d\log T + \frac{1}{\sqrt{2}}\log T + 3.
\]
If $Q_T < 1$, then by Theorem~\ref{thm_avg_grad_opt_lb_ftrl},
\[
  R_T^{(1)}(x) \leq 2\sqrt{2}d\log T + \sqrt{2d\log T} + \frac{1}{\sqrt{2}}\log T +4.
\]
The theorem follows by combining the inequalities above.
\end{proof}

\paragraph{Time Complexity.}
In each round, Algorithm~\ref{alg_aa} requires implementing $O(\log\log T)$ copies of Algorithm~\ref{alg_avg_grad_opt_lb_ftrl}. 
By Theorem \ref{thm_existence}, the per-round time is $\tildeO(d \log\log T)$.

{
\bibliography{./refs.bib}
}
\end{document}